\newtheorem{theorem}{Theorem}
\newtheorem{corollary}{Corollary}
\newtheorem{lemma}{Lemma}
\newtheorem*{theorem*}{Theorem}
\newtheorem*{corollary*}{Corollary}
\newtheorem*{lemma*}{Lemma}
\newtheorem{proposition}{Proposition}
\newtheorem{definition}{Definition}
\newtheorem{property}{Property}
\newtheorem*{property*}{Property}
\title{Efficient Policy Iteration for Robust Markov Decision Processes via Regularization}
\author[1]{Navdeep Kumar}
\author[1]{Kfir Levy}
\author[2]{Kaixin Wang}
\author[1]{Shie Mannor}
\affil[1]{Technion}
\affil[2]{National University of Singapore}
\begin{document}
\maketitle
\begin{abstract}
Robust Markov decision processes (MDPs) provide a general framework to model decision problems where the system dynamics are changing or only partially known. Efficient methods for some \texttt{sa}-rectangular robust MDPs exist, using its equivalence with reward regularized MDPs, generalizable to online settings. In comparison to \texttt{sa}-rectangular robust MDPs,  \texttt{s}-rectangular robust MDPs are less restrictive but much more difficult to deal with.  Interestingly, recent works have established the equivalence between \texttt{s}-rectangular robust MDPs and policy regularized MDPs. But we don't have a clear understanding to exploit this equivalence, to do policy improvement steps to get the optimal value function or policy. We don't have a clear understanding of greedy/optimal policy except it can be stochastic. There exist no methods that can naturally be generalized to model-free settings. We show a clear and explicit equivalence between \texttt{s}-rectangular $L_p$ robust MDPs and policy regularized MDPs that resemble very much policy entropy regularized MDPs widely used in practice.  Further, we dig into the policy improvement step and concretely derive optimal robust Bellman operators for \texttt{s}-rectangular $L_p$ robust MDPs.  We find that the greedy/optimal policies in \texttt{s}-rectangular $L_p$ robust MDPs are threshold policies that play top $k$ actions whose $Q$ value is greater than some threshold (value), proportional to the $(p-1)$th power of its advantage.  In addition, we show time complexity of  (\texttt{sa} and \texttt{s}-rectangular) $L_p$ robust MDPs is the same as non-robust MDPs up to some log factors. Our work greatly extends the existing understanding of \texttt{s}-rectangular robust MDPs and naturally generalizable to online settings.
\end{abstract}

\section{Introduction}
In Markov Decision Processes (MDPs), an agent interacts with the environment and learns to optimally behave in it ~\cite{Sutton1998}.
Nevertheless, an MDP solution may be very sensitive to the model parameters ~\cite{BiasVarianceShie,genralization1,generalization2}, implying that one should be cautious when the model is changing or when there is uncertainty in the model parameters. Robust MDPs (RMDPs) on the other hand, allow some room for the uncertainty in the model parameters as it allows the model parameters to belong to some uncertainty set ~\cite{HKuhn2013,tamar14,Iyenger2005}. 

Solving robust MDPs is NP-hard for general uncertainty sets \cite{Nilim2005RobustCO}, hence the uncertainty set is popularly assumed to be \textit{rectangular} which enables tractability \cite{Iyenger2005,Nilim2005RobustCO,ppi}.
The uncertainty set is called rectangular if the uncertainty in model parameters (\textit{i.e.}, reward and transition kernel) in one state is not coupled with the uncertainty in model parameters in different state.
Under this rectangularity assumption, many structural properties of MDPs remain intact\cite{Iyenger2005, Nilim2005RobustCO} and methods such as robust value iteration \cite{Bagnell01solvinguncertain,RVI}, robust modified policy iteration \cite{Kaufman2013RobustMP}, partial robust policy iteration \cite{ppi} etc can be used to solve it.
It is also known that uncertainty in the reward can be easily handled, whereas handling uncertainty in transition kernel is much more challenging \cite{derman2021twice}.
When an uncertainty set has a polyhedral structure (as in $L_1/L_\infty$ robust MDPs) then it can be solved using nested Linear programming, that is, both policy evaluation and policy improvement are done using linear programming exactly or inexactly\cite{Kaufman2013RobustMP,RVI}. 
Further in this direction, the partial policy iteration approach to solve $L_1$-Robust MDPs by \cite{ppi} uses bags of tricks such as homotopy, bisection methods etc.
But these methods are conceptually complex, offering very little insight, and unable to generalize to online settings.  Work by \cite{Rcontamination} considers the R-contamination uncertainty set and proposes a Q-learning algorithm that can be used in online settings too. But their uncertainty set is \texttt{sa}-rectangular, interestingly their results resembles very closely to our \texttt{sa}-rectangular $L_1$ robust MDPs results. We believe, our approach can be used to generalize R-contamination uncertainty set to \texttt{s}-rectangular settings as well. Goyal et al. \cite{r-rectRMDP} consider an r-rectangular uncertainty set that is not s-rectangular, still, the setting is simple enough to admit deterministic policies as optimal policy. A fundamental differences between \texttt{sa} and \textbf{s} rectangular robust MDPs is the greedy and optimal policy in \texttt{sa}-rectangular robust MDPs like non-robust MDPs, is deterministic, but it may be stochastic in \texttt{s}-rectangular  case \cite{RVI}.


Several previous works have investigated robust MDPs from a regularization perspective \cite{derman2021twice,DRO_derman,rewardRobustRegularization,MaxEntRL}.
While this approach provides an interesting insight into the problem at hand, it has so far did not lead to scalable methods for solving RMDPs. 
Along this direction and closest to our work is by Derman et. al \cite{derman2021twice} which showed that robust MDPs can be viewed as value and policy regularized MDPs.  
Particularly, for $(\mathtt{sa})$-rectangular case, they reduced robust policy iteration (both policy evaluation and improvement) to reward regularized non-robust MDPs. 
The reward regularizer comes from the uncertainty in the reward and transition kernel. Nevertheless, their assumption of kernel noise is unrealistic.
For $\mathtt{s}$-rectangular case, they obtained only policy evaluation as regularized non-robust MDPs, and for policy improvement, they go for an external black box solver (projection onto simplex solver). Hence, making it very difficult to generalize in an online setting. 
 

\textbf{Contributions:}
\begin{itemize}
    \item Established a concrete equivalence between \texttt{s}-rectangular robust MDPs to policy regularized MDPs.
    \item A clear understanding of the policy improvement step leads to efficient computation of optimal value and policy that is naturally generalizable to an online setting.
    \item A clear characterization of the greedy/optimal policy in \texttt{s}-rectangular $L_p$ robust MDPs. To the best of our knowledge, this kind of policy is novel in literature. Like the softmax policy, these policies give more weight to actions having more advantages but unlike softmax, it doesn't play utterly useless actions.
    \item We provide a complexity analysis of our methods for  $L_p$ robust MDPs that are the same as non-robust MDPs up to some log factors.
    \item We correct unrealistic assumption made in noise transition kernel in \cite{derman2021twice} in both \texttt{sa} and \texttt{s} rectangular case.
    
    
\end{itemize}  
It is noteworthy that our work can be used to obtain an optimistic policy that is used as an exploration policy in \cite{UCRL, UCRL2}. Interestingly, we show UCRL 2 algorithm (in discounted setting) resembles UCB VI \cite{UCRLVI}. This relation is discussed in section \ref{app:UCRL} in details.

\section{Preliminary}
\subsection{Notations}
For a set $\mathcal{S}$, $\lvert\mathcal{S}\rvert$ denotes its cardinality. $\langle u, v\rangle := \sum_{s\in\mathcal{S}}u(s)v(s)$ denotes the dot product between functions $u,v:\mathcal{S}\to\mathbb{R}$. $\lVert v\rVert_p^q :=(\sum_{s}\lvert v(s)\rvert^p)^{\frac{q}{p}}$ denotes the $q$th power of $L_p$ norm of function $v$, and we use  $\lVert v\rVert_p := \lVert v\rVert^1_p$ and $\lVert v\rVert := \lVert v\rVert_2$ as shorthand. For a set $\mathcal{C}$, $\Delta_{\mathcal{C}}:=\{a:\mathcal{C} \to \mathbb{R}|a(s)\geq 0, \forall s \sum_{c\in\mathcal{C}}a_c=1\}$ is the probability simplex over $\mathcal{C}$
$\mathbf{0},\mathbf{1} $ denotes all zero vector and all ones vector/function respectively of appropriate dimension/domain. $\mathds{1}(a=b):=1$ if $a=b$, 0 otherwise, is indicator function.

\subsection{Markov Decision Processes}\label{sec:MDPs}
A Markov Decision Process (MDP) is defined by $(\mathcal{S},\mathcal{A},P,R,\gamma,\mu)$ where $\mathcal{S}$ is the state space, $\mathcal{A}$ is the action space, $P:\mathcal{S}\times\mathcal{A} \to \Delta_{\mathcal{S}}$ is the transition kernel, $R:\mathcal{S}\times\mathcal{A} \to \mathbb{R}$ is the reward function, $\mu\in\Delta_{\mathcal{S}}$  is the initial distribution over states and $\gamma \in [0,1)$ is the discount factor \cite{Sutton1998}. A stationary policy $\pi :\mathcal{S}\to \Delta_{\mathcal{A}}$ maps states to probability distribution over actions. And $\pi(a|s), P(s'|s,a), R(s,a)$ denotes the probability of selecting action $a$ in state $s$, transition probability to state $s'$ in state $s$ under action $a$, and reward in state $\mathtt{s}$ under action $a$ respectively.  We denote 
$S=|\mathcal{S}|$, and $A=|\mathcal{A}|$ as a shorthand. The objective in a MDP is to maximize the expected discounted cumulative reward, defined as
\begin{align*}
     &\mathbb{E}\left[\sum_{n=0}^{\infty}\gamma^n R(s_n,a_n)\Bigm| s_0\sim \mu, \pi,P\right]
     = \langle R,d^\pi_{P}\rangle =\langle \mu,v^\pi_{P,R}\rangle,
\end{align*}
    
where $d^\pi_P$ is the occupation measure of policy $\pi$ with the initial distribution of states $\mu$, and kernel $P$, defined as
\[d^\pi_{P}(s,a) := \mathbb{E}\left[\sum_{n=0}^\infty \gamma^n\mathds{1}(s_n=s,a_n=a)\Bigm\vert s_0\sim \mu, \pi,P \right],\]
and $v^\pi_{P,R}$ is the value function (dual formulation\cite{Puterman1994MarkovDP}) under policy $\pi$ with transition kernel $P$ and reward vector $R$, defined as 
\begin{align}
  v^\pi_{P,R}(s) :=\mathbb{E}\left[\sum_{n=0}^\infty \gamma^nR(s_n,a_n)\Bigm\vert s_0= s,\pi,P\right].
\end{align}
The value function $v^\pi_{P,R}$ for policy $\pi$, is the fixed point of the Bellmen operator $\mathcal{T}^\pi_{P,R}$  \cite{Sutton1998}, defined as 
\begin{align*}
    (\mathcal{T}^\pi_{P,R} v)(s) = \sum_{a}\pi(a|s)[R(s,a) + \gamma \sum_{s'}P(s'|s,a)v(s')].
\end{align*}
Similarly, the optimal value function $v^*_{P,R}:= \max_{\pi}v^\pi_{P,R}$ is well defined and is the fixed point of the optimal Bellman operator $\mathcal{T}^*_{P,R}$ \cite{Sutton1998}, defined as
\begin{align*}
    (\mathcal{T}^*_{P,R} v)(s) = \max_{a\in\mathcal{A}}\bigm[R(s,a) + \gamma \sum_{s'}P(s'|s,a)v(s')\bigm].
\end{align*}
The optimal Bellman operator $\mathcal{T}^*_{P,R}$ and  Bellman operators $\mathcal{T}^\pi_{P,R}$  for all policy $\pi$, are $\gamma$-contraction  maps\cite{Sutton1998}. 
So
the sequences $\{v^\pi_n|n\geq 0\}$, and  $\{v^*_n|n\geq 0\}$ defined as
\begin{align}
    v^\pi_{n+1} := \mathcal{T}^\pi_{P,R}v^\pi_n,\qquad v^*_{n+1} := \mathcal{T}^*_{P,R}v^*_n
\end{align}
converge linearly to $v^\pi_{P,R}$ the value function for policy $\pi$, and $v^*_{P,R}$ the optimal value function respectively for all initial value $v^\pi_0$ and $v^*_0$.

\subsection{Robust Markov Decision Processes}
In most practical cases, the system dynamics (transition kernel $P$ and reward function $R$) are not known precisely. Instead, we have an access to a nominal reward function $R_0$ and a nominal transition kernel $P_0$ that may have some uncertainty. To capture this, the uncertainty (or ambiguity) set is defined as $\mathcal{U}:= (R_0 + \mathcal{R})\times(P_0 + \mathcal{P})$,  where $\mathcal{R}$, $\mathcal{P}$ are the respective uncertainties  in the reward function and transition kernel \cite{Iyenger2005}.The robust performance of a policy $\pi$ is defined to be its worst performance on the entire uncertainty set $\mathcal{U}$. And our objective is to maximize the robust performance, that is 
\begin{align}
    \max_{\pi}\min_{{R,P\in\mathcal{U}}} \langle R,d^\pi_{P}\rangle,\quad \text{or}\quad
   \max_{\pi}\min_{{R,P\in\mathcal{U}}} \langle \mu,v^\pi_{P,R}\rangle.
\end{align}
Solving the above robust objective is NP-hard in general\cite{Iyenger2005,RVI}. 
Hence, the uncertainty set $\mathcal{U}$ is commonly assumed to be $\mathtt{s}$-rectangular, that is $\mathcal{R}$ and $\mathcal{P}$ can be decomposed state wise as $\mathcal{R} =\times_{s\in\mathcal{S}}\mathcal{R}_s$ and $\mathcal{P}=\times_{s\in\mathcal{S}}\mathcal{P}_s$. Sometimes, the uncertainty set $\mathcal{U}$ is assumed to decompose state-action wise as $\mathcal{R} =\times_ {s\in\mathcal{S},a\in\mathcal{A}}\mathcal{R}_{s,a}$ and $\mathcal{P}=\times_{s\in\mathcal{S},a\in\mathcal{A}}\mathcal{P}_{s,a}$, known as $(\mathtt{sa})$-rectangular uncertainty set. Observe that it is a special case of $\mathtt{s}$-rectangular uncertainty set. Under the $\mathtt{s}$-rectangularity assumption,  many structural properties of MDPs stay intact, and the problem becomes tractable \cite{RVI,Iyenger2005,Nilim2005RobustCO}.Throughout the paper, we assume that the uncertainty set $\mathcal{U}$ is $\mathtt{s}$-rectangular (or $(\mathtt{sa})$-rectangular) unless stated otherwise. Under $\mathtt{s}$-rectangularity,  the robust value function is well defined \cite{Nilim2005RobustCO,RVI,Iyenger2005} as 
\begin{align}
    v^\pi_{\mathcal{U}} := \min_{{R,P\in\mathcal{U}}}v^\pi_{P,R}.
\end{align}
Using the robust value function, robust policy  performance can be rewritten as 
\begin{align}\begin{aligned}
    \min_{{R,P\in\mathcal{U}}} \langle \mu,v^\pi_{P,R}\rangle &= \langle \mu,\min_{{R,P\in\mathcal{U}}}v^\pi_{P,R}\rangle=  \langle \mu,v^\pi_{\mathcal{U}}\rangle.
\end{aligned}\end{align}
The robust value function $v^\pi_{\mathcal{U}}$ is the fixed point of the robust Bellmen operator $\mathcal{T}^\pi_{\mathcal{U}}$  \cite{RVI,Iyenger2005}, defined as 
\begin{align*}
    &(\mathcal{T}^\pi_{\mathcal{U}} v)(s) := \min_{{R,P\in\mathcal{U}}}\sum_{a}\pi(a|s)\left[R(s,a) + \gamma \sum_{s'}P(s'|s,a)v(s')\right].
\end{align*}
Moreover, the optimal robust value function $v^*_{\mathcal{U}}:= \max_{\pi} v^\pi_{\mathcal{U}}$ is well defined and is the fixed point of the optimal robust Bellman operator $\mathcal{T}^*_{\mathcal{U}}$ \cite{Iyenger2005,RVI}, defined as
\begin{align*}
    (\mathcal{T}^*_{\mathcal{U}} v)(s) := &\max_{\pi_s\in\Delta_\mathcal{A}}\min_{{R,P\in\mathcal{U}}}\sum_{a}\pi_s(a)\bigm[R(s,a) + \gamma \sum_{s'}P(s'|s,a)v(s')\bigm].
\end{align*}
The optimal robust Bellman operator $\mathcal{T}^*_{\mathcal{U}}$ and  robust Bellman operators $\mathcal{T}^\pi_{\mathcal{U}}$ are $\gamma$ contraction maps for all policy $\pi$ (see theorem 3.2 of \cite{Iyenger2005}), that is 
\begin{align*}
    &\lVert\mathcal{T}^*_{\mathcal{U}}v - \mathcal{T}^*_{\mathcal{U}}u\rVert_{\infty} \leq \gamma \lVert u-v\rVert_{\infty},\\ \qquad & \lVert\mathcal{T}^\pi_{\mathcal{U}}v - \mathcal{T}^\pi_{\mathcal{U}}u\rVert_{\infty} \leq \gamma \lVert u-v\rVert_{\infty},\qquad \forall \pi,u,v.
\end{align*}
So for all initial values $v^\pi_0,v^*_0$, sequences defined as 
\begin{align}
    v^\pi_{n+1} := \mathcal{T}^\pi_{\mathcal{U}}v^\pi_n , \qquad v^*_{n+1} := \mathcal{T}^*_{\mathcal{U}}v^*_n
\end{align}
converges linearly to their respective fixed points, that is $v^\pi_n\to v^\pi_{\mathcal{U}}$ and $v^*_n\to v^*_{\mathcal{U}}$. This makes the robust value iteration an attractive method for solving robust MDPs.

\section{Method}
In this section, we study \texttt{s} and \texttt{sa}-rectangular $L_p$ robust MDPs where the uncertainty set is constrained by $L_p$ balls, naturally, arises in the many situations \cite{derman2021twice,ppi,UCRL2}. We will see in theorem \ref{rs:saLprvi} that for the (\texttt{sa})-rectangular case,  both policy evaluation and improvement can be done similarly to non-robust MDPs with an additional reward penalty. Theorem \ref{rs:SLpPlanning} states that policy evaluation for the \texttt{s}-rectangular case, is similar to (\texttt{sa})-rectangular case except the reward penalty has extra dependence on policy. This dependence on the policy makes policy improvement a challenging task, and the rest of the section is devoted to efficiently solving it. Theorem \ref{rs:rve} provides methods for the policy improvement for \texttt{s}-rectangular case, using binary search for general $p$. Although there exists an algorithmic method that is tailor-made for $p=1,2$, it is discussed in the appendix in detail. However, the best of both worlds is summarized in table \ref{tb:val}.  Theorem \ref{rs:srect:optimalPolicy} characterizes the stochasticity in the optimal policy. All the results are summarized in the table \ref{tb:val} and \ref{tb:Opt:pi}. There are many additional properties, and characterizations of greedy policy are discussed in appendix \ref{app:properties}.\\

We begin by making a few useful definitions. Let $q$ be such that it satisfies the Holder's equality, i.e. $\frac{1}{p} + \frac{1}{q} = 1$. 
Let $p$-variance function  $\kappa_p:\mathcal{S}\to\mathbb{R}$ and $p$-mean function  $\omega_p:\mathcal{S}\to\mathbb{R}$  be defined as
\begin{equation}\label{def:kp}
    \kappa_p(v) :=\min_{\omega\in\mathbb{R}}\lVert v-\omega\mathbf{1}\lVert_p, \quad \omega_p(v) := arg\min_{\omega\in\mathbb{R}}\lVert v-\omega\mathbf{1}\lVert_p. 
\end{equation}
 $\omega_p(v)$ can be calculated by binary search in the range $[\min_{s}v(s),\max_{s}v(s)]$ and can then be used to approximate $\kappa_p(v)$ (see appendix \ref{app:pvarianceSection}). Observe that for $p=1,2,\infty$, the $p$-variance function $\kappa_p$ can also be computed in closed form, see table \ref{tb:kappa} for summary and proofs are in appendix \ref{app:pvarianceSection}. And proofs of all upcoming results in this section can be found in the appendix \ref{app:srLp}.

\begin{table}[H]
  \caption{$p$-variance}
  \label{tb:kappa}
  \centering
  \begin{tabular}{lll}
    \toprule                   
    $x$     & $\kappa_x(v)$     & remark \\
    \midrule
    $p$ & $\min_{\omega\in\mathbb{R}}\lVert v-\omega\mathbf{1}\lVert_p  $  &  binary search     \\&\\
    $\infty$     & $\frac{1}{2}\bigm(\max_{s}v(s) - \min_{s}v(s)\bigm)$ & peak to peak difference      \\&\\
    $2$     & $\sqrt{\sum_{s}\bigm(v(s) -\frac{\sum_{s}v(s)}{S}\bigm)^2}$      & Variance  \\&\\
    $1$     &$ \sum_{i=1}^{\lfloor (S+1)/2\rfloor}v(s_i)  -\sum_{i =\lceil (S+1)/2\rceil}^{S}v(s_i))$  & Top half - bottom half    \\
    \bottomrule
  \end{tabular}
  \begin{tabular}{l}
      where $v$ is sorted, i.e. $v(s_i)\geq v(s_{i+1}) \quad \forall i.$
  \end{tabular}      
\end{table}

\subsection{\texttt{(Sa)}-rectangular $L_p$ robust Markov Decision Processes}
In accordance with \cite{derman2021twice}, we define $(\mathtt{sa})$-rectangular $L_p$ constrained uncertainty set as  
\[\mathcal{U}^{\mathtt{sa}}_p := (R_0 +  \mathcal{R})\times(P_0 +  \mathcal{P})\quad \text{where}\quad\mathcal{P} = \times_{s\in\mathcal{S},a\in\mathcal{A}}\mathcal{P}_{s,a} \]
\[ \mathcal{P}_{s,a} = \{p_{s,a}:\mathcal{S}\to\mathbb{R}\mid \underbrace{\sum_{s'}p_{s,a}(s')=0}_{\text{ condition A}},  \lVert p_{s,a}\rVert_p\leq \beta_{s,a}\},\]
\[  \mathcal{R} = \times_{s\in\mathcal{S},a\in\mathcal{A}}\mathcal{R}_{s,a},\mathcal{R}_{s,a} =  \{r_{s,a}\in\mathbb{R}\mid \lVert r_{s,a}\rVert_p\leq \alpha_{s,a}\}\]

and $\alpha_{s,a}, \beta_{s,a}\in\mathbb{R}$ are reward noise radius and transition kernel noise radius respectively. These are chosen small enough so that all the transition kernels in  $(P_0 +\mathcal{P})$ are well defined.  Since the sum of rows of valid transition kernel must be one, hence the sum of rows of noise kernel must be zero (condition A), which we ensured in the above definition of $\mathcal{P}_{s,a}.$ It makes our result differ from \cite{derman2021twice}, as they did not impose this condition (condition A) on their kernel noise. This renders their setting unrealistic (see corollary 4.1 in \cite{derman2021twice}) as not all transition kernels in their uncertainty set, satisfy the properties of transition kernels. And we see next and later, this condition on the noise makes reward regularizer dependent on  the $\kappa_q(v)$ ( $q$-variance of value function) in our result instead of $\lVert v\rVert_q$ ($q$th norm of value function $v$) in \cite{derman2021twice}. Now, we focus on the cases where $P_0(s'|s, a) =0$, that is, forbidden transitions. In most practical situations, for a given state, many transitions are impossible. For example, consider a grid world example where only a single-step jump (left, right, up, down) is allowed, so in this case, the probability of making a multi-step jump is impossible. Formally, let $\mathcal{F}_s$ be the set of forbidden states in state $s$, and  nominal kernel $P_0$ and all noise kernel in $\mathcal{P}$  must satisfy
\begin{align*}
    P_0(s'|s,a) = P(s'|s,a) = 0  , \quad \forall a \in \mathcal{A}, P \in \mathcal{P}, s'\in \mathcal{F}_s. 
\end{align*}
As it makes no sense, upon adding noise to the kernel, the system starts making impossible transitions. These constrained can be naturally incorporated without much change in theory below, hence we discuss it in appendix \ref{app:ForbiddenTransition}. This is also one of our novel contribution.\\

 \begin{theorem}\label{rs:saLprvi} $(\mathtt{Sa})$-rectangular $L_p$ robust Bellman operator is equivalent to reward regularized (non-robust) Bellman operator. That is, we have 
\begin{align*}
    (\mathcal{T}^\pi_{\mathcal{U}^{\mathtt{sa}}_p} v)(s)  =& \sum_{a}\pi(a|s)\Bigm[  -\alpha_{s,a} -\gamma\beta_{s,a}\kappa_q(v)  +R_0(s,a) +\gamma \sum_{s'}P_0(s'|s,a)v(s')\Bigm],\\
    (\mathcal{T}^*_{\mathcal{U}^{\mathtt{sa}}_p} v)(s)  =& \max_{a\in\mathcal{A}}\Bigm[  -\alpha_{s,a} -\gamma\beta_{s,a}\kappa_q(v)  +R_0(s,a) +\gamma \sum_{s'}P_0(s'|s,a)v(s')\Bigm].
\end{align*}
\end{theorem}
\begin{proof}
The proof mainly consists of two parts: a) Separating the noise from nominal values, along the lines of \cite{derman2021twice}. b) The reward noise to yields the term $-\alpha_{s,a}$ and noise in kernel yields $ -\gamma\beta_{s,a}\kappa_q(v)$. 
\end{proof}
The above result states that robust value iteration (both policy evaluation and improvement ) can be done as easily as non-robust value iteration with reward penalty. Notably, the reward penalty is not only proportional to the uncertainty radiuses but also to the $\kappa_p(v)$ that measures the variance in value function. We retrieve non-robust value iteration  by putting uncertainty radiuses (i.e. $\alpha_{s,a},\beta_{s,a}$) to zero, in the above results. Furthermore, the same is true for all subsequent robust results in this paper. The above result implies the Q-learning of the following form 
\begin{align*}
    Q_{n+1}(s,a)& = \max_{a}\Bigm[R_0(s,a)-\alpha_{s,a}-\gamma\beta_{s,a}\kappa_q(v_n) + \sum_{s'}P_0(s'|s,a)\max_{a}Q_n(s',a')\Bigm],
\end{align*}
where $  v_{n}(s) = \max_{a}Q_{n}(s,a)$, which  is further discussed in appendix \ref{app:SALpQL}. Observe that $p$-variance $\kappa_p(v)$  can be estimated online, using batches or other more sophisticated methods.  This paves the path for generalizing to a model-free setting similar to \cite{Rcontamination}, and policy gradient methods as obtained in \cite{PG_RContamination}.

\begin{table*}
\caption{Optimal robust Bellman operator evaluation}
  \label{tb:val}
  \centering
  \begin{tabular}{lll}
    \toprule                   
    $\mathcal{U}$     & $(\mathcal{T}^*_\mathcal{U}v)(s)$     & remark \\
    \midrule
    $\mathcal{U}^s_p$ & $ \sum_{a}\bigm( Q(s,a) - (\mathcal{T}^*_\mathcal{U}v)(s)\bigm)^{p}\mathbf{1}(Q(s,a)\geq (\mathcal{T}^*_\mathcal{U}v)(s))$  & Solve by binary search     \\&\\&=$(\sigma_q(v,s))^p$&\\&\\
    $\mathcal{U}^s_1$    & $\max_{k} \frac{1}{k}\bigm(\sum_{i}^kQ(s,a_i) -\sigma_\infty(v,s)\bigm)$  & Highest penalized average    \\&\\
    $\mathcal{U}^s_2$     &
    $f_2(Q(s,\cdot),\sigma_2(v,s))$ by algorithm $\ref{alg:main:f2}$
    & High average with high variance  \\&\\
    $\mathcal{U}^s_\infty$     &$  -\sigma_1(v,s) + \max_{a\in\mathcal{A}}Q(s,a)$     & Best response  \\&\\
    $\mathcal{U}^{\mathtt{sa}}_p$    &$  \max_{a\in\mathcal{A}}[ - \alpha_{sa} -\gamma\beta_{sa} \kappa_q(v) +Q(s,a)] $    & Best penalized response  \\
   \bottomrule
  \end{tabular}
  \begin{tabular}{l}
    where $ \quad Q(s,a) = R_0(s,a) + \gamma\sum_{s'}P_0(s'|s,a)v(s'), \qquad Q(s,a_1)\geq \cdots\geq Q(s,a_A)$,\\ and $\quad\sigma_q(v,s)= \alpha_s +\gamma\beta_{s}\kappa_q(v)$.
  \end{tabular}
\end{table*}
\subsection{\texttt{S}-rectangular $L_p$ robust Markov Decision Processes}

We define $\mathtt{s}$-rectangular $L_p$ constrained uncertainty set as  
\[\mathcal{U}^{\mathtt{s}}_p = (R_0 +  \mathcal{R})\times(P_0 +  \mathcal{P}), \quad \text{where}\quad \mathcal{P} = \times_{s\in\mathcal{S}}\mathcal{P}_{s}\]
\begin{align*}
     \mathcal{P}_{s} = \{p_s:\mathcal{S}\times\mathcal{A}&\to\mathbb{R}\Bigm| \sum_{s'}p_s(s',a)=0,\forall a, \lVert p_s\rVert_p\leq \beta_{s}\},
\end{align*}
\[\mathcal{R} = \times_{s\in\mathcal{S}}\mathcal{R}_{s}, \quad \mathcal{R}_{s} =  \{r_s:\mathcal{A}\to\mathbb{R}\Bigm| \lVert r_s\rVert_p\leq \alpha_{s}\},\]

and $\alpha_{s}, \beta_{s}\in\mathbb{R}$ are reward and transition kernel respectively noise radius that is chosen small enough so that all the transition kernels in $P_0 +\mathcal{P}$ are well defined.  \texttt{s}-rectangularity is more challenging to deal with than \texttt{sa}-rectangularity, so we begin with robust policy evaluation.

\begin{theorem} \label{rs:SLpPlanning}(Policy Evaluation) \texttt{S}-rectangular $L_p$ robust Bellman operator is equivalent to policy regularized (non-robust) Bellman operator, that is 
\begin{align*}
    (\mathcal{T}^\pi_{\mathcal{U}^s_p} &v)(s)  =   -\bigm(\alpha_s +\gamma\beta_{s}\kappa_q(v)\bigm)\lVert\pi_s\rVert_q +\sum_{a}\pi(a|s)\bigm(R_0(s,a) +\gamma \sum_{s'}P_0(s'|s,a)v(s')\bigm),
\end{align*}
where $\lVert \pi_s\rVert _q$ is $q$-norm of the vector $\pi(\cdot|s)\in\Delta_{\mathcal{A}}$.
\end{theorem}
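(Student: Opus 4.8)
The plan is to begin from the definition of the robust Bellman operator and parametrize each admissible pair by its deviation from the nominal model: write $R(s,a)=R_0(s,a)+r_s(a)$ and $P(\cdot|s,a)=P_0(\cdot|s,a)+p_s(\cdot,a)$ with $r_s\in\mathcal{R}_s$ and $p_s\in\mathcal{P}_s$. Substituting into $(\mathcal{T}^\pi_{\mathcal{U}^s_p}v)(s)=\min_{R,P\in\mathcal{U}^s_p}\sum_a\pi(a|s)[R(s,a)+\gamma\sum_{s'}P(s'|s,a)v(s')]$, the nominal terms pull out of the minimization and give the unpenalized part $\sum_a\pi(a|s)(R_0(s,a)+\gamma\sum_{s'}P_0(s'|s,a)v(s'))$. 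Since the uncertainty set factorizes as $\mathcal{R}_s\times\mathcal{P}_s$, the residual minimization splits into two independent subproblems: one over the reward noise $r_s$ subject to $\lVert r_s\rVert_p\le\alpha_s$, and one over the kernel noise $p_s$ subject to $\lVert p_s\rVert_p\le\beta_s$ together with condition A.

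The reward subproblem is $\min_{\lVert r_s\rVert_p\le\alpha_s}\langle\pi(\cdot|s),r_s\rangle$, which by the dual-norm (Hölder) characterization equals $-\alpha_s\lVert\pi(\cdot|s)\rVert_q$; this produces the first penalty. For the kernel subproblem I would first exploit condition A, namely $\sum_{s'}p_s(s',a)=0$ for every $a$, to observe that the objective $\sum_{a,s'}\pi(a|s)\,p_s(s',a)\,v(s')$ is unchanged when $v$ is replaced by $v-\omega\mathbf{1}$ for any scalar $\omega$. Introducing the cost vector $w_\omega(s',a):=\pi(a|s)(v(s')-\omega)$, the objective becomes the inner product $\langle p_s,w_\omega\rangle$, and because $w_\omega$ is a tensor product its $L_q$ norm factorizes as $\lVert w_\omega\rVert_q=\lVert\pi(\cdot|s)\rVert_q\,\lVert v-\omega\mathbf{1}\rVert_q$.

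The crux is to convert this factorization into $\kappa_q(v)$ while respecting the subspace constraint (condition A), and this is the step I expect to be the main obstacle. For the lower bound, note that dropping condition A only enlarges the feasible set, so for every $\omega$ we have $\min_{p_s}\langle p_s,w_\omega\rangle\ge-\beta_s\lVert w_\omega\rVert_q=-\beta_s\lVert\pi(\cdot|s)\rVert_q\lVert v-\omega\mathbf{1}\rVert_q$; optimizing this family of bounds over $\omega$ replaces $\lVert v-\omega\mathbf{1}\rVert_q$ by its minimum $\kappa_q(v)$, giving $\min\langle p_s,w_\omega\rangle\ge-\beta_s\lVert\pi(\cdot|s)\rVert_q\,\kappa_q(v)$. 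For the matching upper bound I would exhibit a feasible minimizer, the Hölder-equality witness $p_s^\ast(s',a)\propto-|\pi(a|s)|^{q-1}\,\mathrm{sgn}(v(s')-\omega)\,|v(s')-\omega|^{q-1}$ scaled to norm $\beta_s$. The key observation is that, using $\pi(a|s)\ge 0$, this $p_s^\ast$ satisfies condition A exactly when $\sum_{s'}\mathrm{sgn}(v(s')-\omega)|v(s')-\omega|^{q-1}=0$, which is precisely the first-order stationarity condition defining the $q$-mean $\omega=\omega_q(v)$; choosing $\omega=\omega_q(v)$ therefore renders $p_s^\ast$ feasible and attains the lower bound. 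Combining the nominal term with the two penalties (and the discount $\gamma$ on the kernel term) yields the stated identity. The delicate points still to check are the sign bookkeeping under $\pi(a|s)\ge 0$ and the degenerate regimes where the minimizer of $\lVert v-\omega\mathbf{1}\rVert_q$ is non-unique or where $q=\infty$, which I would settle by a continuity/limiting argument.
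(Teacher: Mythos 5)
Your proposal is correct, and the interesting part is that it handles the kernel-noise minimization by a genuinely different route than the paper. Both proofs share the same skeleton: decompose $R=R_0+r_s$, $P=P_0+p_s$, pull out the nominal term, use $\mathtt{s}$-rectangularity to split the residual into a reward subproblem and a kernel subproblem, and dispatch the reward subproblem by the dual-norm identity $\min_{\lVert r_s\rVert_p\le\alpha_s}\langle\pi(\cdot|s),r_s\rangle=-\alpha_s\lVert\pi(\cdot|s)\rVert_q$. For the kernel term, the paper allocates the state-level budget across actions, writing the joint ball $\lVert p_s\rVert_p\le\beta_s$ as a union of products of per-action balls with $\sum_a\beta_{s,a}^p\le\beta_s^p$, applies the single-action kernel-noise lemma (Lemma \ref{regfn}, proved via Lagrangian/KKT) to get $-\beta_{s,a}\kappa_q(v)$ per action, and then uses H\"older a second time on $\max\sum_a\pi_s(a)\beta_{s,a}$. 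You instead keep the joint noise vector, exploit condition A to shift $v$ by any constant $\omega$, observe that the cost $w_\omega=\pi(\cdot|s)\otimes(v-\omega\mathbf{1})$ has factorizing $L_q$ norm, and close the gap with an explicit H\"older-equality witness whose feasibility (condition A) is exactly the stationarity condition $\sum_{s'}\mathrm{sgn}(v(s')-\omega)\lvert v(s')-\omega\rvert^{q-1}=0$ defining $\omega_q(v)$. Your route is more self-contained and makes transparent \emph{why} the $q$-mean appears (it is the unique shift that renders the H\"older extremizer feasible), at the cost of having to verify the witness by hand; the paper's route is less direct here but reuses Lemma \ref{regfn}, which it needs anyway for the $(\mathtt{sa})$-rectangular case, and localizes all KKT bookkeeping in one place. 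The caveats you flag — non-uniqueness of $\omega_q(v)$ and the boundary cases $q\in\{1,\infty\}$ (i.e.\ $p\in\{\infty,1\}$) where the H\"older equality conditions degenerate — are real but minor; the paper faces the same issue and resolves $p=1$ by a separate direct computation, and your proposed limiting argument would serve the same purpose.
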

\begin{proof}
Proof techniques are similar to as its $(\mathtt{sa})$-rectangular counterpart.
\end{proof}
Observe that $\mathtt{s}$-rectangular $L_p$ robust policy evaluation is same as its (\texttt{sa}) counterpart except here the reward penalty has an extra dependence on the norm of the policy ($\lVert \pi_s\rVert_q$). In spirit, $\lVert \pi_s\rVert_q$ is similar to entropy regularization  $\sum_{a}\pi(a|s)\ln(\pi(a|s))$ and other regularizers such as $\sum_{a}\pi(a|s)\textit{tsallis}(\frac{1-\pi(a|s)}{2})$, $\sum_{a}\pi(a|s)cos(cos(\frac{\pi(a|s) }{2})$, etc that are widely studied in the literature \cite{EntReg1,EntReg2,EntReg3,SoftQL,TRPO}. These regularizers are convex functions, hence encouraging the policy to be stochastic. Traditional wisdom behind using policy regularization such as policy entropy is that it encourages stochasticity in policy and hence improves exploration while learning. The above result gives another twist to the story, as it clearly indicates that policy regularizers help in robustness and hence may improve generalization. In literature, the above regularizer is scaled with a constant mostly chosen arbitrarily, here we have the explicit constant $\alpha_s +\gamma\beta_{s}\kappa_q(v)$ that depends on uncertainty radiuses and variance of the value function. 

\begin{theorem}\label{rs:rve}(Policy improvement) The optimal robust Bellman operator $(\mathcal{T}^*_{\mathcal{U}^s_p}v)(s)$ satisfies
\begin{align}\label{eq:rve:val}
    \sum_{a}\bigm(Q(s,a) - x\bigm)^p\mathbf{1}\bigm( Q(s,a) \geq x\bigm)  = \sigma^p,
\end{align}
 that can be found using binary search between $\bigm[\max_{a}Q(s,a)-\sigma, \max_{a}Q(s,a)\bigm]$,
where $\sigma = \alpha_s + \gamma\beta_s\kappa_q(v)$, and  $Q(s,a) = R_0(s,a) + \gamma\sum_{s'} P_0(s'|s,a)v(s')$. 
\end{theorem}
\begin{proof}
The proof  has the following main steps,
\begin{align*}
 &(\mathcal{T}^*_{\mathcal{U}^s_p}v)(s)= \max_{\pi_s\in\Delta_\mathcal{A}}\min_{{R,P\in\mathcal{U}^s_p}}\sum_{a}\pi_s(a)\Bigm[R(s,a) +  \gamma \sum_{s'}P(s'|s,a)v(s')\Bigm], \qquad \text{(from definition)}\\
 &\text{(Solving inner loop (policy evaluation) using lemma \ref{rs:SLpPlanning})}\\
 &=  \max_{\pi_s\in\Delta_\mathcal{A}}\Bigm[-\bigm(\alpha_s +\gamma\beta_{s}\kappa_q(v)\bigm)\lVert\pi_s\rVert_q  +\sum_{a}\pi_s(a)\bigm(R_0(s,a) +\gamma \sum_{s'}P_0(s'|s,a)v(s')\bigm)\Bigm]\\
 &= \max_{\pi_s\in\Delta_\mathcal{A}}- \alpha\lVert \pi_s\rVert_q + \langle \pi_s,b\rangle \qquad \text{( for appropriate  $\alpha, b$).}
\end{align*}
The rest follows from the solution of the above optimization problem. The proof of the second part is more technical so referred to the appendix. For $p=2$, the proof can be found in \cite{anava2016k}.
\end{proof}
The solution to \eqref{eq:rve:val} can be found via algorithm \ref{alg:fp} too discussed in appendix \ref{app:properties}. Notably, it can be exactly solved for $p=1$ (algorithm \ref{alg:f1} in appendix) and algorithm \ref{alg:main:f2}. And for $p=1,\infty$, solution to \eqref{eq:rve:val} can be obtained in closed form. The best of both worlds are summarized in table \ref{tb:val}. This implies that the exact policy improvement step can be done for $p=1,2,\infty$ in closed form or by a simple algorithm and inexactly for general $p$ via binary search efficiently.

\begin{algorithm}[tb]
\caption{$f_2(x,\sigma)$}
\label{alg:main:f2}
\textbf{Input}: $x,\sigma$
\begin{algorithmic}[1] 
\STATE Sort $x$ such that $x_1\geq x_2, \cdots \geq x_A$.
\STATE Set $k=0$ and $\lambda = x_1-\sigma$

\WHILE{$k \leq A-1  $ and $\lambda \leq x_k$}
    \STATE  $k = k+1$
    \STATE  \[\lambda = \frac{1}{k}\bigm[\sum_{i=1}^{k}x_i - \sqrt{k\sigma^2 + (\sum_{i=1}^{k}x_i^2 - k\sum_{i=1}^{k}x_i)^2}\Bigm]\]
\ENDWHILE

\STATE \textbf{return} $\lambda$
\end{algorithmic}
\end{algorithm}

\begin{theorem}\label{rs:srect:optimalPolicy} (Optimal policy) The optimal policy  $\pi = \pi^*_{\mathcal{U}^s_p}$ is a threshold policy, that is proportional to the advantage function, that is 
    \[ \pi(a|s) \propto \bigm( A(s,a)\bigm )^{p-1}\mathbf{1} \bigm( A(s,a)\geq  0\bigm ),\]
    where $A(s,a)= Q^*_{\mathcal{U}^s_p}(s,a)-v^*_{\mathcal{U}^s_p}(s)$ and $ Q^*_{\mathcal{U}^s_p}(s,a) = R_0(s,a) + \gamma\sum_{s'}P_0(s'|s,a) v^*_{\mathcal{U}^s_p}(s)$.
\end{theorem}
The above policy only plays some top actions, as a contrast to soft Q-learning (with entropy regularization)\cite{SoftQL,EntReg1,TRPO} where optimal policy is of the form $\pi(a|s) \propto e^{\eta (Q(a|s)-v(s))}$. Like the softmax policy, these policies give more weight to actions having more advantages but unlike softmax, it doesn't play utterly useless actions. To the best of our knowledge, this kind of policy is novel in literature.\\

The notable special cases of the above theorem for $p=1,2,\infty$, are summarized along with others in table \ref{tb:Opt:pi}.  Model-based Q-learning algorithm based on the above results is presented in appendix \ref{app:ModelBasedAlgorithms}.  Sample-based algorithm for \texttt{s}-rectangular $L_2$ robust MDPs, is implemented in algorithm \ref{alg:SL2}.
 
\begin{table*}
\caption{Optimal Policy}
  \label{tb:Opt:pi}
  \centering
  \begin{tabular}{lll}
    \toprule                   
    $\mathcal{U}$     & $\pi^*_\mathcal{U}(a|s) \propto$      & remark \\
    \midrule
    $\mathcal{U}^s_p$ &$ (A^*_\mathcal{U}(s,a))^{p-1}\mathbf{1}(A^*_\mathcal{U}(s,a) \geq 0)$ &  top actions proportional to\\ & &  $(p-1)$th power of advantage\\
    &\\
    $\mathcal{U}^s_1$    &$\mathbf{1}(A^*_\mathcal{U}(s,a) \geq 0)$&top  actions with uniform probability  \\
    &\\
    $\mathcal{U}^s_2$     &$(A^*_\mathcal{U}(s,a))\mathbf{1}(A^*_\mathcal{U}(s,a) \geq 0)$ & top   actions proportion to advantage \\ 
    &\\
    $\mathcal{U}^s_\infty$     &$  arg\max_{a\in\mathcal{A}}Q^*_\mathcal{U}(s,a)$&  best action \\ &\\
    $\mathcal{U}^{\mathtt{sa}}_p$    &$  arg\max_{a}[- \alpha_{sa} -\gamma\beta_{sa} \kappa_q(v^*_\mathcal{U}) +Q^*_\mathcal{U}(s,a)]$&  best regularized action \\
    \bottomrule
 \end{tabular}
  \begin{tabular}{l}
  where $Q^*_\mathcal{U}(s,a) = R_0(s,a) + \gamma\sum_{s'}P_0(s'|s,a)v^*_\mathcal{U}(s')$, and $A^*_\mathcal{U}(s,a) = Q^*_\mathcal{U}(s,a)-v^*_\mathcal{U}(s)$.
  \end{tabular}
\end{table*}

\begin{algorithm}[H]
\caption{Model Free Algorithm for \texttt{s}-rectangular $L_2$ robust MDPs}
\label{alg:SL2}
\textbf{Input}:$\alpha_{s},\beta_{s},s_0 \sim \mu,\eta_n,\eta'_n,\epsilon_n$. Initialize $Q,v$ randomly, and $\mu_0,\rho_0,\kappa,n=0$.
\begin{algorithmic}[1] 
\WHILE{ not converged}
    \STATE Update reward regularizer.
    $$\sigma =\alpha_{s_n} +\gamma\beta_{s_n}\kappa $$
    \STATE Value update: Using  Algorithm \ref{alg:main:f2}, get
    \[v(s_n) =v(s_n) + \eta_n[f_2(Q(s_n,\cdot),\sigma)-v(s_n)]  \]
    \STATE Greedy policy:
    \[\pi(a|s_n)  \propto (Q(s_n,a)-v(s_n))\mathbf{1}(Q(s_n,a) \geq v(s_n))\]    
    \STATE With probability $1-\epsilon_n$, play optimal policy $$a_n \sim \pi(\cdot|s_n)$$ 
    and with probability $\epsilon_n$ play exploratory action.\\
    \STATE Get next state $s_{n+1}$ from the environment.
    \STATE Update Q-value as 
    \begin{align*}
        Q(s_n,a_n) = &Q(s_n,a_n) + \eta'_n\Bigm[R(s_n,a_n) +\gamma v(s_{n+1})-Q(s_n,a_n)\Bigm]
    \end{align*} 
    \STATE Update the value mean (first moment) 
    $$\mu_{n+1} =\mu_n + \frac{v(s_n) -\mu_n}{n+1} $$
    \STATE Update the value second-moment 
    \[\rho_{n+1} =\rho_n + (v(s_n))^2 \]
    \STATE Update $p$-value variance
    \[\kappa= \frac{S}{n+1}\sqrt{\rho_{n+1}-(n+1)\mu_{n+1}^2}\]
\ENDWHILE

\STATE \textbf{return} $v,\pi$
\end{algorithmic}
\end{algorithm}



\begin{table*} 
  \caption{Relative running cost (time) for value iteration}
  \centering
  \begin{tabular}{lp{0.5cm}p{0.7cm}p{0.7cm}lllllllll}
    \toprule                   
    S&A& non-robust& $\mathcal{U}^{sa}_1$ LP&$\mathcal{U}^{s}_1$ LP&$\mathcal{U}^{sa}_1$&$\mathcal{U}^{sa}_2$ &$\mathcal{U}^{sa}_\infty$&$\mathcal{U}^{s}_1$&$\mathcal{U}^{s}_2$&$\mathcal{U}^{s}_\infty$&$\mathcal{U}^{sa}_{10}$&$\mathcal{U}^{s}_{10}$\\
     \midrule
    10&10&1&1438&72625&1.7&1.5&1.5&1.4&2.6&1.4&5.5&33\\
   30&10&1&6616&629890&1.3&1.4&1.4&1.5&2.8&3.0&5.2&78\\   
    50&10&1&6622&4904004&1.5&1.9&1.3&1.2&2.4&2.2&4.1&41\\
    100&20&1&16714&NA&1.4&1.5&1.5&1.1&2.1&1.5&3.2&41\\
    \bottomrule
  \end{tabular}
  \end{table*}
  
\section{Time complexity }
Here we study the time complexity of the value iteration of algorithms (algorithm \ref{alg:SALp} and algorithm \ref{alg:SLp}) for  $L_p$ robust MDPs (when nominal reward and nominal transition kernel are known) and compare it to the non-robust counterpart. Precisely, we compare the number of iterations required for value iteration to converge to $\epsilon$ close to the optimal value function for different robust MDPs. \\
Value (or Q-value) iterations discussed above, have mainly two parts where they differ: a) reward cost: effective reward calculation (essentially $\kappa_p$ )  b) action cost: choosing the optimal actions. 
Non-robust MDPs require constant time to calculate reward as there is no penalty, and it takes $O(A)$ time to compute action as we need to find the best action.
For $p=1,2,\infty$, calculation of $\kappa_p$ and evaluation of robust Bellman operator is done exactly. Different robust MDPs require different operations such as sorting of actions, sorting of value functions, calculation of best action, variance of value function,etc that accounts for different complexity. For general $p$, both evaluation of $\kappa_p$ and robust Bellman operator is done approximately through binary search. That leads to increased complexity, yet the increase is not significant. The results are summarized in the comparison table \ref{tb:time} and the proofs are in appendix \ref{app:timeComplexitySection}. Comparing the complexity in table \ref{tb:time}, shows that the $L_p$ robust MDPs are not harder than non-robust MDPs. It is noteworthy to see that at the limit $S\to \infty$ (keeping action space $A$ and tolerance $\epsilon$ constant), the complexity of all robust MDPs is the same as non-robust MDPs. It is also confirmed by experiments.  
\begin{table}
  \caption{Time complexity}
  \label{tb:time}
  \centering
  \begin{tabular}{ll}
    \toprule            
    & Total cost $O$ \\ 
   \midrule
 Non-Robust MDP & $\log(1/\epsilon)S^2A$\\
 $\mathcal{U}^{\mathtt{sa}}_1$& $\log(1/\epsilon)S^2A$\\
 $\mathcal{U}^{\mathtt{sa}}_2$ &$\log(1/\epsilon)S^2A$\\
 $\mathcal{U}^{\mathtt{sa}}_\infty$&$\log(1/\epsilon)S^2A$\\
  $\mathcal{U}^{\mathtt{s}}_1$& $\log(1/\epsilon)(S^2A + SA\log(A))$\\
 $\mathcal{U}^{\mathtt{s}}_2$&$\log(1/\epsilon)(S^2A + SA\log(A))$\\
 $\mathcal{U}^{\mathtt{s}}_\infty$ &$\log(1/\epsilon)S^2A$\\
 \midrule
 $\mathcal{U}^{\mathtt{sa}}_p$& $\log(1/\epsilon)\bigm(S^2A + S\log(S
 /\epsilon)\bigm) $\\
 $\mathcal{U}^{\mathtt{s}}_p$&$\log(1/\epsilon)\bigm( S^2A+ SA\log(A/\epsilon) \bigm) $\\
 \bottomrule
 \end{tabular}
\end{table}

\section{Experiments}\label{app:experiments}
Table 4 and figure \ref{fig:asym} contains the relative cost (time) of robust value iteration w.r.t. non-robust MDP, for randomly generated kernel and reward function with the number of states $S$ and the number of actions $A$. Table 4 shows that \texttt{s} and \texttt{sa}-rectangular MDPs are indeed hard using numerical methods such as Linear Programming (LP), and our methods perform as well as non-robust MDPs, particularly for $p=1,2,\infty.$ For general $p$, we require binary search that requires $30-50$ iterations for acceptable tolerance, hence it takes more time than $p=1,2,\infty$ where it can done exactly.
\begin{figure}
      \centering
       \includegraphics[width=8cm, height=5cm]{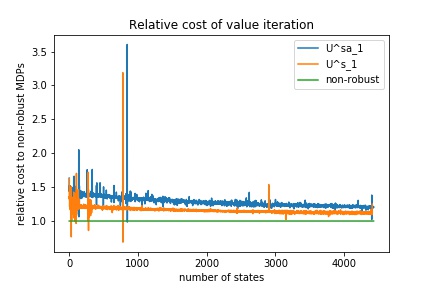}
      \caption{Relative cost of value iteration w.r.t. non-robust MDP at different $S$ with fixed $A=10$.}
      \label{fig:asym}
  \end{figure}
As we can see from our complexity analysis, the relative cost of value iteration converges to 1 as the number of states increases keeping the number of actions fixed also confirmed by figure \ref{fig:asym}.
 The rate of convergence for all the above settings was the same as the non-robust one, as predicted by the theory. The experiments run few runs hence there is some stochasticity in the results, but the trend is clear. We refer the reader to section \ref{app:experiments} for more details. All codes and results are available at https://github.com/navdeepkumar12/rmdp.

\section{Conclusion and future work}\label{conclusion}
This work develops methodology for policy improvement for \texttt{s}-rectangular $L_p$ robust MDPs, and fully characterizes the greedy policy. It also corrects the noise condition in transition kernel that leads to a different reward regularizer that depends on $q$-variance of the value function for $L_p$ robust MDPs. The work concludes that  $L_p$ robust MDPs are computationally as easy as non-robust MDPs, and adds a concrete link between regularized MDPs and $L_p$ robust MDPs. Further, the $p$-mean function $\kappa_p$ can be estimated in an online fashion. This paves the path for sample-based (model-free) algorithms for (\texttt{sa})/\texttt{s}-rectangular $L_p$ robust MDPs that can be implemented using deep neural networks (such as DQN). We studied the $L_p$ constrained rectangular uncertainty set, we leave for future work to extend this work to the other uncertainty sets.

\bibliography{main}
\bibliographystyle{plain}

\section{How to read appendix}
\begin{enumerate}

    \item Section \ref{app:relatedWork} contains related work.
    \item Section \ref{app:properties} contains additional properties and results that couldn't be included in the main section for the sake of clarity and space. Many of the results in the main paper is special of results in this section. 
    \item Section \ref{app:ForbiddenTransition} contains the discussion on zero transition kernel (forbidden transitions).
    \item Section \ref{app:UCRL} contains a possible connection this work to UCRL.
    \item Section \ref{app:experiments} contains additional experimental results and a detailed discussion.
     \item All the proofs of the main body of the paper is presented in the section  \ref{app:srLp} and \ref{app:timeComplexitySection}.
    \item Section \ref{app:pvarianceSection} contains helper results for section \ref{app:srLp}. Particularly, it discusses $p$-mean function $\omega_p$ and $p$-variance function $\kappa_p$.
    \item Section \ref{app:waterPouringSection} contains helper results for section \ref{app:srLp}. Particularly, it discusses $L_p$ water pouring lemma, necessary to evaluate robust optimal Bellman operator (learning) for $\mathtt{s}$-rectangular $L_p$ robust MDPs.
    \item Section \ref{app:timeComplexitySection} contains time complexity proof for model based algorithms.
    \item Section \ref{app:SALpQL} develops Q-learning machinery for $(\mathtt{sa})$-rectangular $L_p$ robust MDPs based on the results in the main section. It is not used in the main body or anywhere else, but this provides a good understanding for algorithms proposed in section \ref{app:ModelBasedAlgorithms} for $(\mathtt{sa})$-rectangular case.
    \item Section \ref{app:ModelBasedAlgorithms} contains model-based algorithms for $\mathtt{s}$ and $(\mathtt{sa})$-rectangular $L_p$ robust MDPs. It also contains, remarks for special cases for $p=1,2,\infty$.
\end{enumerate}
\section{Related Work} \label{app:relatedWork}
\subsection*{ R-Contamination Uncertainty Robust MDPs}
The paper \cite{Rcontamination} considers the following uncertainty set for some fixed constant $0 \leq R \leq 1$,
\begin{align}
    \mathcal{P}_{sa} = \{ (1-R)(P_0)(\cdot|s,a) + RP \mid P \in \Delta_{\mathcal{S}} \}, \quad s\in \mathcal{S}, a\in\mathcal{A},
\end{align}
and  $\mathcal{P} = \otimes_{s,a}\mathcal{P}_{s,a}, \qquad \mathcal{U} = \{R_0\}\times\mathcal{P}$. The robust value function $v^\pi_\mathcal{U}$ is the fixed point of the robust Bellman operator defined as 
\begin{align}
    (\mathcal{T}^\pi_\mathcal{U}v)(s) :=& \min_{P\in\mathcal{P}}\sum_{a}\pi(a|s)[R_0(s,a) + \gamma \sum_{s'}P(s'|s,a)v(s')],\\
    =&\sum_{a}\pi(a|s)[R_0(s,a) - \gamma R\max_{s}v(s) + (1-R)\gamma \sum_{s'}P_0(s'|s,a)v(s')].
\end{align}
And the optimal robust value function $v^*_\mathcal{Ua }$ is the fixed point of the optimal robust Bellman operator defined as 
\begin{align}
    (\mathcal{T}^*_\mathcal{U}v)(s) :=& \max_{\pi}\min_{P\in\mathcal{P}}\sum_ {a}\pi(a|s)[R_0(s,a) + \gamma (1-R)\sum_{s'}P(s'|s,a)v(s')],\\
    =&\max_{a}[R_0(s,a) - \gamma R\max_{s}v(s) + \gamma (1-R)\sum_{s'}P_0(s'|s,a)v(s')].
\end{align}
Since, the uncertainty set is \texttt{sa}-rectangular, hence the map is a contraction \cite{Nilim2005RobustCO}, so the robust value iteration here, will also converge linearly similar to non-robust MDPs. It is also possible to obtain Q-learning as following
\begin{align}
    Q_{n+1}(s,a) = R_0(s,a)  - \gamma R\max_{s,a}Q_n(s,a) + \gamma(1-R) \sum_{s'}P_0(s'|s,a)\max_{s'}Q_n(s',a').
\end{align}
Convergence of the above Q-learning follows from the contraction of robust value iteration. Further, it is easy to see that model-free Q-learning can be obtained from the above.\\

A follow-up work \cite{PG_RContamination} proposes a policy gradient method for the same.
\begin{proposition}(Theorem 3.3 of \cite{PG_RContamination}) Consider a class of policies $\Pi$ satisfying Assumption 3.2 of \cite{PG_RContamination}. The gradient of the robust return is given by
\begin{align*}
    \nabla \rho^{\pi_\theta} = &\frac{\gamma R}{(1-\gamma)(1-\gamma +\gamma R)}\sum_{s,a}d^{\pi_\theta}_\mu(s,a)\nabla{\pi_\theta}(a|s)Q^{\pi_\theta}_{\mathcal{U}}(s,a)  \\
    &\qquad +  \frac{1}{1-\gamma +\gamma R}\sum_{s,a}d^{\pi_\theta}_{s_\theta}(s,a)\nabla{\pi_\theta}(a|s)Q^{\pi_\theta}_{\mathcal{U}}(s,a) ,
\end{align*}
where $s_\theta\in arg\max v^{\pi_\theta}_{\mathcal{U}}(s) $, and $Q^\pi_\mathcal{U}(s,a) = \sum_{a}\pi(a|s)\bigm[R_0(s,a) - \gamma R \max_{s}v^\pi_\mathcal{U}(s) + \gamma (1-R)\sum_{s'}P_0(s'|s,a)v^\pi_\mathcal{U}(s')\bigm].$
\end{proposition}
The work shows that the proposed robust policy gradient method converges to the global optimum asymptotically under direct policy parameterization.\\

The uncertainty set considered here, is \texttt{sa}-rectangular, as uncertainty in each state-action is independent, hence the regularizer term $(\gamma R\max_{s}v(s))$ is independent of policy, and the optimal (and greedy) policy is deterministic. It is unclear, how the uncertainty set can be generalized to the $s$-rectangular case. Observe that the above results resemble very closely our \texttt{sa}-rectangular $L_1$ robust MDPs results.\\

\subsection*{Twice Regularized MDPs}
The paper \cite{derman2021twice} paper robust MDPs to twice regularized MDPs, and proposes a policy gradient method for solving them. 

\begin{proposition} (corollary 3.1 of \cite{derman2021twice}) (\texttt{s}-rectangular reward robust policy evaluation) Let the uncertainty set be $\mathcal{U} = (R_0+\mathcal{R})\times\{P_0\}$, where $\mathcal{R}_s =\{ r_s\in\mathbf{R}^{\mathcal{A}}\mid \lVert r_s\rVert \leq \alpha_{s}\}$ for all $s\in\mathcal{S}$. Then the robust value function $v^\pi_{\mathcal{U}}$ is the optimal solution to the convex optimization problem:
\[ \max_{v\in\mathbf{R}^\mathcal{A}}\langle \mu,v\rangle \quad s.t.\quad v(s)\leq (\mathcal{T}^\pi_{R_0,P_0}v)(s) -\alpha_{s}\lVert\pi_s\rVert, \qquad \forall s\in\mathcal{S}.\]
\end{proposition}
It derives the policy gradient for reward robust MDPs to obtain the optimal robust policy $\pi^*_\mathcal{U}$.

\begin{proposition} (Proposition 3.2 of \cite{derman2021twice}) (\texttt{s}-rectangular reward robust policy gradient) Let the uncertainty set be $\mathcal{U} = (R_0+\mathcal{R})\times\{P_0\}$, where $\mathcal{R}_s =\{ r_s\in\mathbf{R}^{\mathcal{A}}\mid \lVert r_s\rVert \leq \alpha_{s}\}$ for all $s\in\mathcal{S}$. Then the gradient of the reward robust objective $ \rho^\pi_{\mathcal{U}} := \langle \mu, v^\pi_{\mathcal{U}}\rangle$ is given by
\[ \nabla \rho^\pi_\mathcal{U} = \mathbf{E}_{(s,a)\sim d^\pi_{P_0}}\Bigm[ \nabla\ln(\pi(a|s))\bigm(Q^\pi_{\mathcal{U}}(s,a) -\alpha_{s}\frac{\pi(a|s)}{\lVert \pi_s \rVert}\bigm)\Bigm],\]
where $Q^\pi_{\mathcal{U}}(s,a):=\min_{(R,P)\in\mathcal{U}}[R(s,a)+\gamma \sum_{s'}P(s'|s,a)v^\pi_{\mathcal{U}}(s')].$
\end{proposition}

\begin{proposition} (Corollary 4.1 of \cite{derman2021twice}) (\texttt{s}-rectangular general robust policy evaluation) Let the uncertainty set be $\mathcal{U} = (R_0+\mathcal{R})\times\{P_0+\mathcal{P}\}$, where $\mathcal{R}_s =\{ r_s\in\mathbf{R}^{\mathcal{A}}\mid \lVert r_s\rVert \leq \alpha_{s}\}$  and $\mathcal{P}_s =\{ P_s\in\mathbf{R}^{\mathcal{S}\times\mathcal{A}}\mid \lVert P_s\rVert \leq \beta_{s}\}$ for all $s\in\mathcal{S}$. Then the robust value function $v^\pi_{\mathcal{U}}$ is the optimal solution to the convex optimization problem:
\[ \max_{v\in\mathbf{R}^\mathcal{A}}\langle \mu,v\rangle \quad s.t.\quad v(s)\leq (\mathcal{T}^\pi_{R_0,P_0}v)(s) -\alpha_{s}\lVert\pi_s\rVert -\gamma\beta_{s}\lVert v\rVert\lVert\pi_s\rVert, \qquad \forall s\in\mathcal{S}.\]
\end{proposition}
Same as the reward robust case, the paper tries to find a policy gradient method to obtain the optimal robust policy. Unfortunately, the dependence of regularizer terms on value makes it a very difficult task. Hence it proposes the $R^2$MPI algorithm (algorithm 1 of ~\cite{derman2021twice}) for the purpose that optimizing the greedy step via projection onto the simplex using a black box solver. Note that the above proposition is not same as our policy evaluation (although it looks similar), it requires some extra assumptions (assumption 5.1 ~\cite{derman2021twice}) and lot of work ensure $R^2$ Bellman operator is contraction etc. In our case, we directly evaluate robust Bellman operator that has already proven to be a contraction, hence we don't require any extra assumption nor any other work as ~\cite{derman2021twice}. 

Our work makes improvements over this work by explicitly solving both policy evaluation and policy improvement in general robust MDPs. It also makes more realistic assumptions on the transition kernel uncertainty set. 

\subsection*{Regularizer solves Robust MDPs}
The work \cite{MaxEntRL} looks in the opposite direction than we do. It investigates the impact of the popularly used entropy regularizer on robustness. It finds that MaxEnt can be used to maximize a lower bound on a certain robust RL objective (reward robust).  \\

As we noticed that $\lVert \pi_s \rVert_q$ behaves like entropy in our regularization. Further, our work also deals with uncertainty in transition kernel in addition to the uncertainty in reward function.

\subsection*{Upper Confidence RL}
The upper confidence setting in \cite{UCRL,UCRL2} is very similar to our $L_p$ robust setting. We refer to this discussion in section \ref{app:UCRL}.

\section{S-rectangular: More Properties}\label{app:properties}
 
\begin{definition} We begin with the following notational definitions.
    \begin{enumerate}
        \item  $Q$-value at value function $v$ is defined as 
        \[Q^v(s,a) := R_0(s,a) + \gamma\sum_{s'}P_0(s'|s,a) v(s').\]
        \item Optimal $Q$-value is defined as 
        \[Q^*_{\mathcal{U}}(s,a) = R_0(s,a) + \gamma\sum_{s'}P_0(s'|s,a) v^*_{\mathcal{U}}(s')\]
        \item With little abuse of notation, $Q(s,a_i)$ shall denote the $i$th best value in state $s$, that is
        \[Q(s,a_1)\geq Q(s,a_2)\geq,\cdots,\geq Q(s,a_A).\]
        \item $\pi^v_\mathcal{U}$ denotes the greedy policy at value function $v$, that is
         \[\mathcal{T}^{*}_{\mathcal{U}}v = \mathcal{T}^{\pi^v_{\mathcal{U}}}_{\mathcal{U}}v. \]
         \item $\chi_p(s)$ denotes the number of active actions in state $s$ in \texttt{s}-rectangular $L_p$ robust MDPs, defined as
        \[\chi_p(s) :=\bigm\lvert \{a \mid \pi^*_{\mathcal{U}^s_p}(a|s) \geq 0\}\bigm\rvert .\]
        \item $\chi_p(p,s)$ denotes the number of active actions in state $s$ at value function $v$ in \texttt{s}-rectangular $L_p$ robust MDPs, defined as
        \[\chi_p(v,s) :=\bigm\lvert \{a \mid \pi^v_{\mathcal{U}^s_p}(a|s) \geq 0\}\bigm\rvert .\]
    \end{enumerate}
\end{definition}
We saw above that optimal policy in \texttt{s}-rectangular robust MDPs may be stochastic. The action that has a positive advantage is active and the rest are inactive. Let $\chi_p(s)$ be the number of active actions in state $s$, defined as 
\begin{align}
    \chi_p(s) :=\bigm\lvert \{a \mid \pi^*_{\mathcal{U}^s_p}(a|s) \geq 0\}\bigm\rvert =\bigm\lvert \{a \mid Q^*_{\mathcal{U}^s_p}(s,a) \geq v^*_{\mathcal{U}^s_p}(s)\}\bigm\rvert.
\end{align}
Last equality comes from theorem \ref{rs:srect:optimalPolicy}. One direct relation between Q-value and value function is given by 
\begin{align}
    v^*_{\mathcal{U}^s_p}(s) = \sum_{a}\pi^*_{\mathcal{U}^s_p}(a|s)\Bigm[-\bigm(\alpha_s +\gamma\beta_{s}\kappa_q(v)\bigm)\lVert\pi^*_{\mathcal{U}^s_p}(\cdot|s)\rVert_q  +Q^*_{\mathcal{U}^s_p}(s,a)\Bigm].
\end{align}
The above relation is very convoluted compared to non-robust and  \texttt{sa}-rectangular robust cases. The property below illuminates an interesting relation.
\begin{property} \label{rs:Opt:EVfQv}(Optimal Value vs Q-value) $v^*_{\mathcal{U}^s_p}(s)$ is bounded by the Q-value of $\chi_p(s)$th and $(\chi_p(s)+1)$th actions, that is , 
\[Q^*_{\mathcal{U}^s_p}(s, a_{\chi_p(s)+1}) < v^*_{\mathcal{U}^s_p}(s) \leq Q^*_{\mathcal{U}^s_p}(s, a_{\chi_p(s)}).\]
\end{property}
This special case of the property \ref{rs:EVfQv}, similarly table \ref{tb:Opt:ValVsQ} is special case of table \ref{tb:ValVsQ}.
\begin{table}[H]
\caption{Optimal value function and  Q-value}
  \label{tb:Opt:ValVsQ}
  \centering
  \begin{tabular}{lll}
    \toprule                   
     $v^*(s) = \max_{a}Q^*(s,a) $      & Best value& \\
    $ v^*_{\mathcal{U}^{sa}_p}(s) = \max_a[\alpha_{s,a}-\gamma\beta_{s,a}\kappa_q(v^*_{\mathcal{U}^{sa}_p})-Q^*_{\mathcal{U}^{sa}_p}(s,a) ]$ & Best regularized value&\\
    $ Q^*_{\mathcal{U}^s_p}(s, a_{\chi_p(s)+1}) < v^*_{\mathcal{U}^s_p}(s) \leq Q^*_{\mathcal{U}^s_p}(s, a_{\chi_p(s)})$ & Sandwich!&\\
    \bottomrule
 \end{tabular}
  \begin{tabular}{l}
  where $v^*, Q^*$ is the optimal value function and Q-value respectively 
  of non-robust MDP.
  \end{tabular}
\end{table}
The same is true for the non-optimal Q-value and value function.

\begin{theorem}\label{rs:srect:greddyPolicy} (Greedy policy) The greedy policy  $\pi^v_{\mathcal{U}^s_p}$ is a threshold policy, that is proportional to the advantage function, that is 
    \[ \pi^v_{\mathcal{U}^s_p}(a|s) \propto \bigm( Q^v(s,a)-(\mathcal{T}^*_{\mathcal{U}^s_p}v)(s)\bigm )^{p-1}\mathbf{1} \bigm(  Q^v(s,a)\geq (\mathcal{T}^*_{\mathcal{U}^s_p}v)(s)\bigm ).\]
   
\end{theorem}
The above theorem is proved in the appendix, and theorem \ref{rs:srect:optimalPolicy} is its special case. So is table \ref{tb:Opt:pi} special case of table \ref{tb:pi}.
\begin{table}[H]
\caption{Greedy policy at value function $v$}
  \label{tb:pi}
  \centering
  \begin{tabular}{lll}
    \toprule                   
    $\mathcal{U}$     & $\pi^v_\mathcal{U}(a|s) \propto$      & remark \\
    \midrule
    $\mathcal{U}^s_p$ &$ (Q^v(s,a)- (\mathcal{T}^*_{\mathcal{U}}v)(s))^{p-1}\mathbf{1}(A^v_\mathcal{U}(s,a)\geq0)$ &  top actions proportional to\\ & &  $(p-1)$th power of its advantage\\
    &\\
    $\mathcal{U}^s_1$    &$\frac{ \mathbf{1}(A^v_\mathcal{U}(s,a)\geq0)}{\sum_{a}\mathbf{1}(A^v_\mathcal{U}(s,a)\geq0)}$&top  actions with uniform probability  \\
    &\\
    $\mathcal{U}^s_2$     &$\frac{A^v_\mathcal{U}(s,a)\mathbf{1}A^v_\mathcal{U}(s,a)\geq0)}{\sum_{a}A^v_\mathcal{U}(s,a)\mathbf{1}(A^v_\mathcal{U}(s,a)\geq0)}$ & top   actions proportion to its advantage \\ 
    &\\
    $\mathcal{U}^s_\infty$     &$  arg\max_{a\in\mathcal{A}}Q^v(s,a)$&  best action \\ &\\
    $\mathcal{U}^{\mathtt{sa}}_p$    &$  arg\max_{a}[- \alpha_{sa} -\gamma\beta_{sa} \kappa_q(v) +Q^v(s,a)]$&  best action \\
    \bottomrule
 \end{tabular}
  \begin{tabular}{l}
  where $A^v_\mathcal{U}(s,a) = Q^v(s,a) -(\mathcal{T}^*_\mathcal{U}v)(s)$ and $Q^v(s,a) = R_0(s,a) + \gamma\sum_{s'}P_0(s'|s,a)v(s')$.
  \end{tabular}
\end{table}
The above result states that the greedy policy takes actions that have a positive advantage, so we have.
\begin{align}
    \chi_p(v,s) :=\bigm\lvert \{a \mid \pi^v_{\mathcal{U}^s_p}(a|s) \geq 0\}\bigm\rvert =\bigm\lvert \{a \mid Q^v(s,a) \geq (\mathcal{T}^*_{\mathcal{U}^s_p})v(s)\}\bigm\rvert.
\end{align}
\begin{property} \label{rs:EVfQv}(Greedy Value vs Q-value) $(\mathcal{T}^*_{\mathcal{U}^s_p}v)(s)$ is bounded by the Q-value of $\chi_p(v,s)$th and $(\chi_p(v,s)+1)$th actions, that is , 
\[Q^v(s, a_{\chi_p(v,s)+1}) < (\mathcal{T}^*_{\mathcal{U}^s_p}v)(s) \leq Q^v(s, a_{\chi_p(v,s)}).\]
\end{property}
\begin{table}[H]
\caption{ Greedy value function and  Q-value}
  \label{tb:ValVsQ}
  \centering
  \begin{tabular}{ll}
    \toprule                   
     $(\mathcal{T}^*v)(s) = \max_{a}Q^v(s,a) $      & Best value \\
    $ (\mathcal{T}^*_{\mathcal{U}^{sa}_p})v(s) = \max_a[\alpha_{s,a}-\gamma\beta_{s,a}\kappa_q(v)-Q^v(s,a) ]$ & Best regularized value\\
    $ Q^v(s, a_{\chi_p(v,s)+1}) < (\mathcal{T}^*_{\mathcal{U}^s_p})v(s) \leq Q^v(s, a_{\chi_p(v,s)})$ & Sandwich!\\
    \bottomrule
 \end{tabular}
  \begin{tabular}{l}
  where   $Q^v(s,a_1)\geq,\cdots,\geq Q^v(s,a_A).$
  \end{tabular}
\end{table}
The property below states that we can compute the number of active actions $\chi_p(v,s)$ (and $\chi_p(s)$) directly without computing greedy (optimal) policy.
\begin{property} $\chi_p(v,s)$ is number of actions that has positive advantage, that is 
 \[\chi_p(v,s) := \max\{k\mid \sum_{i=1}^k\bigm(Q^v(s,a_i) - Q^v(s,a_k)\bigm)^p  \leq \sigma^p \},\]
 where $\sigma = \alpha_s + \gamma\beta_s\kappa_q(v)$, and $Q^v(s,a_1)\geq Q^v(s,a_2),\geq \cdots \geq Q(s,a_A).$ 
\end{property}
When uncertainty radiuses ($\alpha_s,\beta_s$) are zero (essentially $\sigma =0$ ), then $\chi_p(v,s) = 1, \forall v,s$, that means, greedy policy taking the best action. In other words, all the robust results reduce to non-robust results as discussed in section \ref{sec:MDPs} as the uncertainty radius becomes zero.

\begin{algorithm}[H]
\caption{Algorithm to compute \texttt{s}-rectangular $L_p$ robust optimal Bellman Operator}\label{alg:fp}
\begin{algorithmic} [1]
 \STATE \textbf{Input:} $\sigma = \alpha_s +\gamma\beta_s\kappa_q(v), \qquad Q(s,a) = R_0(s,a) + \gamma\sum_{s'} P_0(s'|s,a)v(s')$.
 \STATE \textbf{Output} $(\mathcal{T}^*_{\mathcal{U}^s_p}v)(s), \chi_p(v,s)$
\STATE Sort $Q(s,\cdot)$ and label actions such that $Q(s,a_1)\geq Q(s,a_2), \cdots$.
\STATE Set initial value guess $\lambda_1 = Q(s,a_1)-\sigma$ and counter $k=1$.
\WHILE{$k \leq A-1  $ and $\lambda_k \leq Q(s,a_k)$}
    \STATE Increment counter: $k = k+1$
    \STATE Take $\lambda_k$ to be a solution of the following  
    \begin{equation}\label{eq:alg:fp:lk}
    \sum_{i=1}^{k}\bigm(Q(s,a_i) - x\bigm)^p  = \sigma^p, \quad\text{and}\quad x\leq Q(s,a_k).    
    \end{equation}
\ENDWHILE
\STATE Return: $\lambda_k, k $
\end{algorithmic}
\end{algorithm}
\section{Revisiting kernel noise assumption} \label{app:ForbiddenTransition}
\subsection*{Sa-Rectangular Uncertainty}
Suppose at state $s$, we know that it is impossible to have transition (next) to some states (forbidden states $F_{s,a}$) under some action. That is, we have the transition uncertainty set $\mathcal{P}$ and nominal kernel $P_0$  such that 
\begin{align}
   P_0(s'|s,a) =  P(s'|s,a) = 0, \quad \forall P\in\mathcal{P},\forall s' \in F_{s,a}.
\end{align}
Then we define, the kernel noise as 
\begin{align}
    \mathcal{P}_{s,a} = \{ P \mid \lVert P\rVert_p = \beta_{s,a}, \quad \sum_{s'}P(s')=0, \quad P(s")=0, \forall s"\in F_{s,a} \}.
\end{align}
In this case, our $p$-variance function is redefined as 
\begin{align}
    \kappa_p(v,s,a) =& \min_{\lVert P\rVert_p = \beta_{s,a}, \quad  \sum_{s'}P(s')=0, \quad P(s")=0, \quad \forall s"\in F_{s,a}}\langle P,v\rangle\\
    =& \min_{\omega\in\mathbb{R}} \lVert u-\omega\mathbf{1}\rVert_p,\qquad \text{where $u(s) = v(s)\mathbf{1}(s\notin F_{s,a})$.} \\
    =& \kappa_p(u)
\end{align}
This basically says, we consider value of only those states that is allowed (not forbidden) in calculation of $p$-variance. For example, we have
\begin{align}
    \kappa_\infty(v,s,a) & = \frac{\max_{s\notin F_{s,a}}v(s)-\min_{s\notin F_{s,a}}v(s)}{2}.\\
    \end{align}

So theorem 1 of the main paper can be re-stated as 
 \begin{theorem}(Restated) $(\mathtt{Sa})$-rectangular $L_p$ robust Bellman operator is equivalent to reward regularized (non-robust) Bellman operator. That is, using $\kappa_p$ above, we have 
\begin{equation*}\begin{aligned}
    (\mathcal{T}^\pi_{\mathcal{U}^{\mathtt{sa}}_p} v)(s)  =& \sum_{a}\pi(a|s)[  -\alpha_{s,a} -\gamma\beta_{s,a}\kappa_q(v,s,a)  +R_0(s,a) +\gamma \sum_{s'}P_0(s'|s,a)v(s')],\\
    (\mathcal{T}^*_{\mathcal{U}^{\mathtt{sa}}_p} v)(s)  =& \max_{a\in\mathcal{A}}[  -\alpha_{s,a} -\gamma\beta_{s,a}\kappa_q(v,s,a)  +R_0(s,a) +\gamma \sum_{s'}P_0(s'|s,a)v(s')].
\end{aligned}\end{equation*}
\end{theorem}
\subsubsection*{S-Rectangular Uncertainty}
This notion can also be applied to \texttt{s}-rectanular uncertainty, but with little caution. Here, we define forbidden states in state $s$ to be $F_s$  (state dependent) instead of state-action dependent in \texttt{sa}-rectangular case. Here, we define $p$-variance as 
\begin{align}
    \kappa_p(v,s) = \kappa_p(u), \qquad \text{where $u(s) = v(s)\mathbf{1}(s\notin F_s)$.  }
\end{align}
So the theorem 2 can be restated as 
\begin{theorem}(restated) (Policy Evaluation) \texttt{S}-rectangular $L_p$ robust Bellman operator is equivalent to reward regularized (non-robust) Bellman operator, that is 
\begin{equation*}
    (\mathcal{T}^\pi_{\mathcal{U}^s_p} v)(s)  =   -\Bigm(\alpha_s +\gamma\beta_{s}\kappa_q(v,s)\Bigm)\lVert\pi(\cdot|s)\rVert_q  +\sum_{a}\pi(a|s)\Bigm(R_0(s,a) +\gamma \sum_{s'}P_0(s'|s,a)v(s')\Bigm)
\end{equation*}
where $\kappa_p$ is defined above and $\lVert \pi(\cdot|s)\rVert _q$ is $q$-norm of the vector $\pi(\cdot|s)\in\Delta_{\mathcal{A}}$.
\end{theorem}
All the other results (including theorem 4), we just need to replace the old $p$-variance function with new $p$-variance function appropriately.

\section{Application to UCRL}\label{app:UCRL}
In robust MDPs, we consider the minimization over uncertainty set to avoid risk. When we want to discover the underlying kernel by exploration, then we seek optimistic policy, then we consider the maximization over uncertainty set \cite{UCRL,UCRL2,CUCRL}. We refer the reader to the step 3 of the UCRL algorithm \cite{UCRL}, which seeks to find
\begin{align}
    arg\max_{\pi}\max_{R,P \in\mathcal{U}}\langle \mu ,v^{\pi}_{P,R}\rangle,
\end{align}
where \[\mathcal{U} =\{(R,P)\mid \lvert R(s,a)-R_0(s,a)\rvert \leq \alpha_{s,a},\lvert P(s'|s,a)-P_0(s'|s,a)\rvert \leq \beta_{s,a,s'}, P\in(\Delta_\mathcal{S})^{\mathcal{S}\times\mathcal{A}} \}\] for current estimated  kernel $P_0$ and reward function $R_0$. We refer section 3.1.1 and step 4 of the UCRL 2 algorithm of \cite{UCRL2}, which seeks to find 
\begin{align}
    arg\max_{\pi}\max_{R,P \in\mathcal{U}}\langle \mu ,v^{\pi}_{P,R}\rangle,
\end{align}
where \[\mathcal{U} =\{(R,P)\mid \lvert R(s,a)-R_0(s,a)\rvert \leq \alpha_{s,a},\lVert P(\cdot|s,a)-P_0(\cdot|s,a)\rVert_1 \leq \beta_{s,a}, P\in(\Delta_\mathcal{S})^{\mathcal{S}\times\mathcal{A}} \}\] 
The uncertainty radius $\alpha,\beta$ depends on the number of samples of different transitions and observations of the reward. The paper \cite{UCRL} doesn't explain any method to solve the above problem. UCRL 2 algorithm \cite{UCRL2}, suggests to solve it by linear programming that can be very slow. We show that it can be solved by our methods.\\

The above problem can be tackled as 
following
\begin{align}
    \max_{\pi}\max_{R,P\in\mathcal{U}^{sa}_p}\langle \mu ,v^{\pi}_{P,R}\rangle.
\end{align}
We can define, optimistic Bellman operators as  
\begin{align}
    \Hat{\mathcal{T}}^\pi_{\mathcal{U}}v := \max_{R,P\in\mathcal{U}}v^{\pi}_{P,R},\qquad 
    \Hat{\mathcal{T}}^*_{\mathcal{U}}v :=\max_{\pi} \max_{R,P\in\mathcal{U}}v^{\pi}_{P,R}.
\end{align}
The well definition and contraction of the above optimistic operators may follow directly from their pessimistic (robust) counterparts. We can evaluate above optimistic operators as
\begin{align}
    &(\Hat{\mathcal{T}}^\pi_{\mathcal{U}^{sa}_p}v)(s) = \sum_{a}\pi(a|s)\bigm[R_0(s,a) + \alpha_{s,a}+ \beta_{s,a}\gamma\kappa_q(v) + \sum_{s'}P_0(s'|s,a)v(s')\bigm],\\
    &(\Hat{\mathcal{T}}^*_{\mathcal{U}^{sa}_p}v)(s) = \max_{a}\bigm[R_0(s,a) + \alpha_{s,a}+ \beta_{s,a}\gamma\kappa_q(v) + \sum_{s'}P_0(s'|s,a)v(s')\bigm].
\end{align}
The uncertainty radiuses $\alpha,\beta$ and nominal values $P_0,R_0$ can be found by similar analysis by \cite{UCRL,UCRL2}. We can get the Q-learning from the above results as 
\begin{align}
    Q(s,a) \to R_0(s,a) - \alpha_{s,a} -\gamma\beta_{s,a}\kappa_q(v) +\gamma\sum_{s'}P_0(s'|s,a)\max_{a'}Q(s',a'), 
\end{align}
where $v(s) = \max_{a}Q(s,a)$. From law of large numbers, we know that uncertainty radiuses $\alpha_{s,a},\beta_{s,a}$ behaves as $O(\frac{1}{\sqrt{n}})$ asymptotically with number of iteration $n$. This resembles very closely to UCB VI algorithm \cite{UCRLVI}.
We emphasize that similar optimistic operators can be defined and evaluated for s-rectangular uncertainty sets too.

\section{Q-Learning for $(\mathtt{sa})$-rectangular MDPs}\label{app:SALpQL}
In view of theorem \ref{rs:saLprvi}, we can define $Q^\pi_{\mathcal{U}^{\mathtt{sa}}_p}$, the robust Q-values under policy $\pi$ for $(\mathtt{sa})$-rectangular $L_p$ constrained uncertainty set $\mathcal{U}^{\mathtt{sa}}_p$ as
\begin{equation}\begin{aligned}
    &Q^\pi_{\mathcal{U}^{\mathtt{sa}}_p}(s,a) := -\alpha_{s,a} -\gamma\beta_{s,a}\kappa_q(v^\pi_{\mathcal{U}^{\mathtt{sa}}_p})  +R_0(s,a) +\gamma \sum_{s'}P_0(s'|s,a)v^\pi_{\mathcal{U}^{\mathtt{sa}}_p}(s').
\end{aligned}\end{equation}
This implies that we have the following relation between robust Q-values and robust value function, same as its non-robust counterparts,
\begin{equation}
    v^\pi_{\mathcal{U}^{\mathtt{sa}}_p}(s) = \sum_{a}\pi(a|s)Q^\pi_{\mathcal{U}^{\mathtt{sa}}_p}(s,a).
\end{equation}
Let $Q^*_{\mathcal{U}^{\mathtt{sa}}_p}$ denote the optimal robust Q-values associated with optimal robust value $v^*_{\mathcal{U}^{\mathtt{sa}}_p}$, given as
\begin{equation}\begin{aligned}\label{eq:saLpQ}
    &Q^*_{\mathcal{U}^{\mathtt{sa}}_p}(s,a) := -\alpha_{s,a} -\gamma\beta_{s,a}\kappa_q(v^*_{\mathcal{U}^{\mathtt{sa}}_p})  +R_0(s,a) +\gamma \sum_{s'}P_0(s'|s,a)v^*_{\mathcal{U}^{\mathtt{sa}}_p}(s').
\end{aligned}\end{equation}
It is evident from theorem \ref{rs:saLprvi} that optimal robust value and optimal robust Q-values satisfies the following relation, same as its non-robust counterparts,
\begin{equation}\begin{aligned}\label{eq:saLpv}
     v^*_{\mathcal{U}^{\mathtt{sa}}_p}(s') = \max_{a\in\mathcal{A}}Q^*_{\mathcal{U}^{\mathtt{sa}}_p}(s,a).
 \end{aligned}\end{equation}
Combining \ref{eq:saLpv} and \ref{eq:saLpQ}, we have optimal robust Q-value recursion as follows
\begin{equation}\begin{aligned}
  &Q^*_{\mathcal{U}^{\mathtt{sa}}_p}(s,a) = -\alpha_{s,a} -\gamma\beta_{s,a}\kappa_q(v^*_{\mathcal{U}^{\mathtt{sa}}_p})  +R_0(s,a) +\gamma \sum_{s'}P_0(s'|s,a)\max_{a\in\mathcal{A}}Q^*_{\mathcal{U}^{\mathtt{sa}}_p}(s,a).
\end{aligned}\end{equation}
The above robust Q-value recursion enjoys similar properties as its non-robust counterparts. 
\begin{corollary}($(\mathtt{sa})$-rectangular $L_p$ regularized Q-learning) Let
\begin{equation*}\begin{aligned}
    Q_{n+1}(s,a)  =  R_0(s,a)- \alpha_{sa} -\gamma\beta_{sa}\kappa_q(v_n) + \gamma\sum_{s'}P_0(s'|s,a)\max_{a\in\mathcal{A}}Q_n(s',a),
\end{aligned}\end{equation*}
where  $ v_{n}(s) = \max_{a\in\mathcal{A}}Q_{n}(s,a) $, then $Q_n$ converges to $Q^*_{\mathcal{U}^{\mathtt{sa}}_p}$ linearly. 
\end{corollary}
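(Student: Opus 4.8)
The plan is to recognize this $Q$-iteration as robust value iteration in disguise and then transfer its convergence to the $Q$-values by a continuity argument, rather than attempting to prove the $Q$-update is a contraction in $Q$-space directly. First I would take $\max_a$ on both sides of the update. Writing $v_n(s) := \max_a Q_n(s,a)$ and noting that the penalty $-\alpha_{sa}-\gamma\beta_{sa}\kappa_q(v_n)$ is precisely the one appearing in Theorem~\ref{rs:saLprvi}, one obtains
\begin{equation*}
 v_{n+1}(s) = \max_{a}Q_{n+1}(s,a) = \max_{a}\Bigl[ R_0(s,a) - \alpha_{sa} - \gamma\beta_{sa}\kappa_q(v_n) + \gamma\sum_{s'}P_0(s'|s,a)v_n(s')\Bigr] = (\mathcal{T}^*_{\mathcal{U}^{\mathtt{sa}}_p}v_n)(s).
\end{equation*}
Thus the embedded value sequence $\{v_n\}$ is exactly the iterate sequence of the optimal robust Bellman operator $\mathcal{T}^*_{\mathcal{U}^{\mathtt{sa}}_p}$.

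Next I would invoke the contraction property. By Theorem~\ref{rs:saLprvi}, $\mathcal{T}^*_{\mathcal{U}^{\mathtt{sa}}_p}$ is the $(\mathtt{sa})$-rectangular robust optimal Bellman operator, which is a $\gamma$-contraction in $\lVert\cdot\rVert_\infty$ (recorded in the preliminaries via Theorem~3.2 of \cite{Iyenger2005}). Its unique fixed point is $v^*_{\mathcal{U}^{\mathtt{sa}}_p}$, so $\lVert v_n - v^*_{\mathcal{U}^{\mathtt{sa}}_p}\rVert_\infty \le \gamma^n\lVert v_0 - v^*_{\mathcal{U}^{\mathtt{sa}}_p}\rVert_\infty$, i.e.\ the value part converges linearly.

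Finally I would carry the rate over to $Q_n$. Since $Q^*_{\mathcal{U}^{\mathtt{sa}}_p}$ obeys the one-step relation \eqref{eq:saLpQ} with $v^*_{\mathcal{U}^{\mathtt{sa}}_p}$ in place of $v_n$, subtracting gives
\begin{equation*}
 \bigl|Q_{n+1}(s,a) - Q^*_{\mathcal{U}^{\mathtt{sa}}_p}(s,a)\bigr| \le \gamma\beta_{sa}\bigl|\kappa_q(v_n)-\kappa_q(v^*_{\mathcal{U}^{\mathtt{sa}}_p})\bigr| + \gamma\Bigl|\sum_{s'}P_0(s'|s,a)\bigl(v_n(s')-v^*_{\mathcal{U}^{\mathtt{sa}}_p}(s')\bigr)\Bigr|.
\end{equation*}
The second term is at most $\gamma\lVert v_n - v^*_{\mathcal{U}^{\mathtt{sa}}_p}\rVert_\infty$ because $P_0(\cdot|s,a)$ is a probability vector. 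For the first I would use that $\kappa_q$ is a seminorm: subadditivity follows immediately from $\kappa_q(v)=\min_\omega\lVert v-\omega\mathbf{1}\rVert_q$ (equivalently from the support-function form in Lemma~\ref{regfn}), so $|\kappa_q(v_n)-\kappa_q(v^*_{\mathcal{U}^{\mathtt{sa}}_p})| \le \kappa_q(v_n-v^*_{\mathcal{U}^{\mathtt{sa}}_p}) \le \lVert v_n-v^*_{\mathcal{U}^{\mathtt{sa}}_p}\rVert_q \le S^{1/q}\lVert v_n-v^*_{\mathcal{U}^{\mathtt{sa}}_p}\rVert_\infty$. Combining, $\lVert Q_{n+1}-Q^*_{\mathcal{U}^{\mathtt{sa}}_p}\rVert_\infty \le \gamma(1+\beta_{\max}S^{1/q})\lVert v_n-v^*_{\mathcal{U}^{\mathtt{sa}}_p}\rVert_\infty \le C\gamma^{n+1}$ with $C := (1+\beta_{\max}S^{1/q})\lVert v_0-v^*_{\mathcal{U}^{\mathtt{sa}}_p}\rVert_\infty$, which is exactly linear convergence.

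The main obstacle worth flagging is that one should \emph{not} try to show the $Q$-update operator $\mathcal{H}(Q) := R_0 - \alpha - \gamma\beta\kappa_q(\max_a Q) + \gamma P_0\max_a Q$ is a contraction directly in $Q$-space: the naive coordinatewise estimate yields modulus $\gamma(1+\beta_{\max}S^{1/q})$, which can exceed $1$, since the $\kappa_q$ penalty is only Lipschitz and not itself contractive. Routing the argument through the value iterates, whose $\gamma$-contraction is guaranteed by the robust MDP machinery (the minimization over the valid, mean-zero kernel-noise set is what makes $\mathcal{T}^*_{\mathcal{U}^{\mathtt{sa}}_p}$ contract at rate exactly $\gamma$), sidesteps this entirely; continuity of $\kappa_q$ is then needed only to transfer the rate to the $Q$-values, not to establish any contraction.
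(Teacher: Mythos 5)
Your proposal is correct, and it actually supplies more than the paper does: the paper's appendix section on $(\mathtt{sa})$-rectangular Q-learning only derives the fixed-point recursion for $Q^*_{\mathcal{U}^{\mathtt{sa}}_p}$ and then asserts that the recursion ``enjoys the similar properties as its non-robust counterparts,'' with no explicit convergence argument. Your route --- observing that $v_{n+1}=\max_a Q_{n+1}(\cdot,a)=\mathcal{T}^*_{\mathcal{U}^{\mathtt{sa}}_p}v_n$ by Theorem~\ref{rs:saLprvi}, importing the $\gamma$-contraction of the robust optimal Bellman operator to get $\lVert v_n-v^*\rVert_\infty\le\gamma^n\lVert v_0-v^*\rVert_\infty$, and then transferring the rate to $Q_n$ via \eqref{eq:saLpQ} together with the seminorm property of $\kappa_q$ --- is the natural formalization of what the paper leaves implicit, and every step checks out (in particular $|\kappa_q(u)-\kappa_q(w)|\le\kappa_q(u-w)\le\lVert u-w\rVert_q\le S^{1/q}\lVert u-w\rVert_\infty$ is valid since $\kappa_q$ is a quotient seminorm). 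One nuance in your closing remark: the obstacle you flag is an artifact of working with the \emph{regularized} form of the update. If one instead uses the equivalent min-over-uncertainty-set representation $\mathcal{H}(Q)(s,a)=\min_{R,P\in\mathcal{U}^{\mathtt{sa}}_p}\bigl[R(s,a)+\gamma\sum_{s'}P(s'|s,a)\max_{a'}Q(s',a')\bigr]$, every $P$ in the set is a genuine stochastic kernel and the standard argument gives $\lVert\mathcal{H}(Q)-\mathcal{H}(Q')\rVert_\infty\le\gamma\lVert Q-Q'\rVert_\infty$ directly, so the Q-operator \emph{is} a $\gamma$-contraction in Q-space; the modulus $\gamma(1+\beta_{\max}S^{1/q})$ is only what the naive Lipschitz estimate on the penalty yields, not the true modulus. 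Either route is sound; yours has the advantage of reusing only facts already established in the main text.
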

Observe that the above Q-learning equation is exactly the same as non-robust MDP except the reward penalty. Recall that $\kappa_1(v) = 0.5(\max_{s}v(s) - \min_{s}v(s))$ is difference between peak to peak values and $\kappa_2(v)$ is variance of $v$, that can be easily estimated. Hence, model free algorithms for $(\mathtt{sa})$-rectangular $L_p$ robust MDPs for $p=1,2$, can be derived easily from the above results.
This implies that $(\mathtt{sa})$-rectangular $L_1$ and $L_2$ robust MDPs are as easy as non-robust MDPs.

\section{Model Based Algorithms}\label{app:ModelBasedAlgorithms}
In this section, we assume that we know the nominal transitional kernel and nominal reward function. Algorithm \ref{alg:SALp}, algorithm \ref{alg:SLp} is model based algorithm for $(\mathtt{sa})$-rectangular  and $\mathtt{s}$ rectangular $L_p$ robust MDPs respectively. It is explained in the algorithms, how to get deal with specail cases $(p=1,2,\infty)$ in a easy way.

\begin{algorithm}[H]
\caption{Model Based Q-Learning Algorithm for SA Recantangular $L_p$ Robust MDP}\label{alg:SALp}
\begin{algorithmic} [1]
\STATE \textbf{Input}: $\alpha_{s,a},\beta_{s,a}$ are uncertainty radius in reward and transition kernel respectively in state $\mathtt{s}$ and action $a$. Transition kernel $P$ and reward vector $R$. Take initial $Q$-values $Q_0$ randomly and $v_0(s) = \max_{a}Q_0(s,a).$\\
\WHILE{ not converged}
    \STATE Do binary search in $[\min_{s}v_n(s), \max_{s}v_n(s)]$ to get $q$-mean $\omega_n$, such that
    \begin{equation}\label{alg:SLP:eq:kappa}
        \sum_{s}\frac{(v_n(s)-\omega_n)}{|v_n(s)-\omega_n|}|v_n(s) - \omega_n|^{\frac{1}{p-1}} = 0.
    \end{equation}
    \STATE Compute $q$-variance: $\qquad \kappa_n = \lVert v-\omega_n\rVert_q$.
    \STATE Note: For $p=1,2,\infty$, we can compute $\kappa_n$ exactly in closed from, see table \ref{tb:kappa}.
\FOR{$s \in \mathcal{S}$ }
    \FOR{$a \in \mathcal{A}$ }
        \STATE Update Q-value as \[Q_{n+1}(s,a) = R_0(s,a)  - \alpha_{sa} -\gamma\beta_{sa}\kappa_n + \gamma \sum_{s'}P_0(s'|s,a)\max_{a}Q_n(s',a).\]
    \ENDFOR
    \STATE Update value as \[v_{n+1}(s) = \max_{a}Q_{n+1}(s,a).\]
\ENDFOR
    \[n \to n+1\]
\ENDWHILE
\end{algorithmic}
\end{algorithm}

\begin{algorithm}[H]
\caption{Model Based Algorithm for S Recantangular $L_p$ Robust MDP}\label{alg:SLp}
\begin{algorithmic} [1]
\STATE Take initial $Q$-values $Q_0$ and value function $v_0$ randomly. 
\STATE \textbf{Input}: $\alpha_{s},\beta_{s}$ are uncertainty radius in reward and transition kernel respectively in state $\mathtt{s}$.\\
\WHILE{ not converged}
\STATE Do binary search in $[\min_{s}v_n(s), \max_{s}v_n(s)]$ to get $q$-mean $\omega_n$, such that
    \begin{equation}\label{alg:SLP:eq:kappa}
        \sum_{s}\frac{(v_n(s)-\omega_n)}{|v_n(s)-\omega_n|}|v_n(s) - \omega_n|^{\frac{1}{p-1}} = 0.
    \end{equation}
\STATE Compute $q$-variance: $\qquad \kappa_n = \lVert v-\omega_n\rVert_q$.
\STATE Note: For $p=1,2,\infty$, we can compute $\kappa_n$ exactly in closed from, see table \ref{tb:kappa}.
\FOR{$s \in \mathcal{S}$ }
    \FOR{$a \in \mathcal{A}$ }
        \STATE Update Q-value as \begin{equation}\label{alg:SLP:eq:Qupdate}
        Q_{n+1}(s,a) =R_0(s,a) +\gamma\sum_{s'}P_0(s'|s,a) v_{n+1}(s').
        \end{equation}
    \ENDFOR
    \STATE Sort actions in decreasing order of the Q-value, that is
    \begin{equation}\label{alg:SLP:eq:Qsort}
    Q_{n+1}(s,a_i)\geq Q_{n+1}(s,a_{i+1}).
    \end{equation}
    \STATE Value evaluation:
    \begin{equation}\label{alg:SLP:eq:valEval}
    v_{n+1}(s) = x \quad \text{such that }\quad (\alpha_s +\gamma\beta_{s}\kappa_n)^{p} = \sum_{Q_{n+1}(s,a_i)\geq x}|Q_{n+1}(s,a_i) - x|^{p}.
    \end{equation}
    \STATE Note: We can compute $v_{n+1}(s)$ exactly in closed from for $p=\infty$ and for $p=1,2$, we can do the same using algorithm \ref{alg:f1},\ref{alg:f2} respectively, see table \ref{tb:val}.
\ENDFOR
\[n \to n+1\]
\ENDWHILE
\end{algorithmic}
\end{algorithm}

\begin{algorithm}[H]
\caption{Model based algorithm for \texttt{s}-recantangular $L_1$ robust MDPs}\label{alg:Mb:SL1}
\begin{algorithmic} [1]
\STATE Take initial value function $v_0$ randomly and start the counter $n=0$. 
\WHILE{ not converged}
\STATE Calculate $q$-variance: $ \qquad \kappa_n = \frac{1}{2}\bigm[\max_{s}v_n(s) - \min_{s}v_n(s)\bigm]$
\FOR{$s \in \mathcal{S}$ }
    \FOR{$a \in \mathcal{A}$ }
        \STATE Update Q-value as \begin{equation}\label{alg:SLP:eq:Qupdate}
        Q_{n}(s,a) =R_0(s,a) +\gamma\sum_{s'}P_0(s'|s,a) v_{n}(s').
        \end{equation}
    \ENDFOR
    \STATE Sort actions in state $s$, in decreasing order of the Q-value, that is
    \begin{equation}\label{alg:SLP:eq:Qsort}
    Q_{n}(s,a_1)\geq Q_{n}(s,a_{2}),\cdots \geq Q_{n}(s,a_A).
    \end{equation}
    \STATE Value evaluation:
    \begin{equation}\label{alg:SLP:eq:valEval}
    v_{n+1}(s) = \max_{m}\frac{\sum_{i=1}^m Q_{n}(s,a_i) - \alpha_s -\beta_s\gamma\kappa_n}{m}.
    \end{equation}
    \STATE Value evaluation can also be done using algorithm \ref{alg:f1}.
\ENDFOR
\[n \to n+1\]
\ENDWHILE
\end{algorithmic}
\end{algorithm}

\section{Experiments}\label{app:experiments}
The table 4 contains relative cost (time) of robust value iteration w.r.t. non-robust MDP, for randomly generated kernel and reward function with the number of states $S$ and the number of action $A$.

\begin{table}\label{tb:rlt}
  \caption{Relative running cost (time) for value iteration}
  \centering
  \begin{tabular}{llllll}
    \toprule                   
    $\mathcal{U}$     & S=10 A=10 & S=30 A=10 & S=50 A=10 & S=100 A=20     & remark \\
    \midrule
    non-robust & 1 &1 &1 &1 &      \\&\\
    $\mathcal{U}^{sa}_{\infty}$ by LP & 1374 &2282 &2848 &6930 &   scipy.optimize.linearprog    \\&\\
    $\mathcal{U}^{sa}_{1}$ by LP & 1438 &6616 &6622 &16714 &      scipy.optimize.linearprog \\&\\
    $\mathcal{U}^{s}_{1}$ by LP& 72625 &629890 &4904004 &NA &  scipy.optimize.linearprog/minimize     \\&\\
    $\mathcal{U}^{sa}_{1}$& 1.77 &1.38 &1.54 &1.45 & closed form     \\&\\
    $\mathcal{U}^{sa}_{2}$& 1.51 &1.43 &1.91 &1.59 & closed form     \\&\\
    $\mathcal{U}^{sa}_{\infty}$& 1.58 &1.48 &1.37 &1.58 &    closed form  \\&\\
    $\mathcal{U}^{s}_{1}$& 1.41 &1.58 &1.20 &1.16 &closed form      \\&\\
    $\mathcal{U}^{s}_{2}$& 2.63 &2.82 &2.49 &2.18 &     closed form \\&\\
    $\mathcal{U}^{s}_{\infty}$& 1.41 &3.04 &2.25 &1.50 &    closed form  \\&\\
    $\mathcal{U}^{sa}_{5}$& 5.4 &4.91 &4.14 &4.06 &  binary search     \\&\\
    $\mathcal{U}^{sa}_{10}$& 5.56 &5.29 &4.15 &3.26 &binary search      \\&\\
    $\mathcal{U}^{s}_{5}$& 33.30 &89.23 &40.22 &41.22 & binary search     \\&\\
    $\mathcal{U}^{s}_{10}$& 33.59 &78.17 &41.07 &41.10 &    binary search  \\&\\
    \bottomrule
  \end{tabular}
  \end{table}

\subsection*{Notations} 
S : number of state, 
A: number of actions, 
$\mathcal{U}^{sa}_p$ LP:   Sa rectangular $L_p$ robust MPDs by Linear Programming, 
$\mathcal{U}^{s}_p$ LP:  S rectangular $L_p$ robust MPDs by Linear Programming and other numerical methods, 
$\mathcal{U}^{sa/s}_{p=1,2,\infty}$ : Sa/S rectangular $L_1/L_2/L_\infty$ robust MDPs by closed form method (see table 2, theorem 3)
$\mathcal{U}^{sa/s}_{p=5,10}$ : Sa/S rectangular $L_5/L_{10}$ robust MDPs by binary search (see table 2, theorem 3 of the paper)

\subsection*{Observations}
1. Our method for s/sa rectangular $L_1/L_2/L_\infty$ robust MDPs takes almost same (1-3 times) the time as non-robust MDP for one iteration of value iteration. This confirms our complexity analysis (see table 4 of the paper)
2.  Our binary search method for sa rectangular $L_5/L_{10}$ robust MDPs takes around $4-6$ times more time than non-robust counterpart. This is due to extra iterations required to find p-variance function $\kappa_p(v)$ through binary search.
3.  Our binary search method for s rectangular $L_5/L_{10}$ robust MDPs takes around $30-100$ times more time than non-robust counterpart. This is due to extra iterations required to find p-variance function $\kappa_p(v)$ through binary search and Bellman operator.
4.  One common feature of our method is that time complexity scales moderately as guranteed through our complexity analysis.
5.  Linear programming methods for sa-rectangualr $L_1/L_\infty$ robsust MDPs take atleast 1000 times more than our methods for small state-action space, and it scales up very fast.
6. Numerical methods (Linear programming for minimization over uncertianty and 'scipy.optimize.minimize' for maximization over policy) for s-rectangular $L_1$ robust MDPs take 4-5 order more time than our mehtods (and non-robust MDPs) for very small state-action space, and scales up too fast. The reason is obvious, as it has to solve two optimization, one minimization over uncertainty and other maximization over policy, whereas in the sa-rectangular  case, only minimization over uncertainty is required.  This confirms that s-rectangular uncertainty set is much more challenging.

\subsection*{Rate of convergence}
The rate of convergence for all were approximately the same as $0.9 = \gamma$, as predicted by theory. And it is well illustrated by the relative rate of convergence w.r.t. non-robust by the table \ref{tb:roc}.

\begin{table}\label{tb:roc}
  \caption{Relative running cost (time) for value iteration}
  \centering
  \begin{tabular}{llll}
    \toprule                   
    $\mathcal{U}$     & S=10 A=10 &  S=100 A=20     & remark \\
    \midrule
    non-robust & 1 &1 &      \\&\\
    $\mathcal{U}^{sa}_{1}$& 0.999 &0.999 & closed form     \\&\\
    $\mathcal{U}^{sa}_{2}$&0.999 &0.999 & closed form     \\&\\
    $\mathcal{U}^{sa}_{\infty}$& 1.000 &0.998  &    closed form  \\&\\
    $\mathcal{U}^{s}_{1}$& 0.999 &0.999 &
    closed form      \\&\\
    $\mathcal{U}^{s}_{2}$& 0.999 &0.999 &    closed form \\&\\
    $\mathcal{U}^{s}_{\infty}$& 1.000 &0.998 &    closed form  \\&\\
    $\mathcal{U}^{sa}_{5}$& 0.999 &0.995 & binary search     \\&\\
    $\mathcal{U}^{sa}_{10}$& 1.000 &0.999  &binary search      \\&\\
    $\mathcal{U}^{s}_{5}$& 1.000 &0.999  & binary search     \\&\\
    $\mathcal{U}^{s}_{10}$& 1.000 &0.995 &   binary search  \\&\\
    \bottomrule
  \end{tabular}
  \end{table}
 
In the above experiments, Bellman updates for sa/s rectangular $L_1/L_2/L_\infty$ were done in closed form, and for $L_5/L_{10}$ were done by binary search as suggested by table 2 and theorem 3. 

Note: Above experiments' results are for few runs, hence containing some stochasticity but the general trend is clear. In the final version, we will do averaging of many runs to minimize the stochastic nature. Results for many different runs can be found at https://github.com/******.

Note that the above experiments were done without using too much parallelization. There is ample scope to fine-tune and improve the performance of robust MDPs. The above experiments confirm the theoretical complexity provided in Table 4 of the paper. The codes and results can be found at https://github.com/******.

\subsection*{Experiments parameters}  Number of states $S$ (variable), number of actions $A$ (variable), transition kernel and reward function generated randomly,  discount factor $0.9$,  uncertainty radiuses =$0.1$ (for all states and action, just for convenience ),  number of iterations = 100,  tolerance for binary search =  $0.00001$
\subsection*{Hardware} The experiments are done on the following hardware: Intel(R) Core(TM) i5-4300U CPU @ 1.90GHz 64 bits, memory 7862MiB
Software: Experiments were done in python, using numpy, scipy.optimize.linprog for Linear programmig for policy evalution in s/sa rectangular robust MDPs, scipy.optimize.minize and scipy.optimize.LinearConstraints for policy improvement in s-rectangular $L_1$ robust MDPs.

\section{Extension to Model Free Settings}
Extension of Q-learning (in section \ref{app:SALpQL} ) for \texttt{sa}-rectangular MDPs to model free setting can easily done similar to \cite{Rcontamination}, also policy gradient method can be obtained as \cite{PG_RContamination}. The only thing, we need to do, is to be able to compute/estimate $\kappa_q$ online. It can be estimated using an ensemble (samples). Further,   $\kappa_2$ can be estimated by the estimated mean and the estimated second moment. $\kappa_\infty$ can be estimated by tracking maximum and minimum values.\\

For \texttt{s}-rectangular case too, we can obtain model-free algorithms easily, by estimating $\kappa_q$ online and keeping track of Q-values and value function. The convergence analysis may be similar to \cite{Rcontamination}, especially for \texttt{sa}-rectangular case, and for the other, it would be two time scale, which can be dealt with techniques in \cite{borkarBook}. We leave this for future work. It would be interesting to obtain policy gradient methods for this case, which we believe can be obtained from the policy evaluation theorem.

\section{$p$-variance}
\label{app:pvarianceSection}
Recall that $\kappa_p$ is defined as follows 
\[\kappa_p(v) =\min_{w} \lVert v-\omega\mathbf{1}\rVert_p = \lVert v-\omega_p\rVert_p.\]
Now, observe that
\begin{equation}\begin{aligned}
&\frac{\partial \lVert v -\omega\rVert_p}{\partial \omega}  = 0\\
\implies &  \sum_{s}sign(v(s)-\omega)|v(s) - \omega|^{p-1} = 0,\\
\implies &  \sum_{s}sign(v(s)-\omega_p(v))|v(s) - \omega_q(v)|^{p-1} = 0.
\end{aligned}
\end{equation}
For $p=\infty$,  we have
\begin{equation}\begin{aligned}
&\lim_{p\to\infty}\Bigm\lvert\sum_{s}sign\bigm(v(s)-\omega_\infty(v)\bigm)\bigm\lvert v(s) - \omega_\infty(v)\bigm\rvert^{p}\Bigm\rvert^\frac{1}{p} = 0 \\
\implies&\bigm(\max_{s}\lvert v(s) - \omega_\infty(v)\rvert\bigm)\lim_{p\to\infty}\Bigm\lvert\sum_{s}sign\bigm(v(s)-\omega_\infty(v)\bigm)\Bigm(\frac{\lvert v(s) - \omega_\infty(v)\bigm\rvert}{\max_{s}\lvert v(s) - \omega_\infty(v)\rvert}\Bigm)^{p}\Bigm\rvert^\frac{1}{p} = 0 \\
&\text{Assuming $\max_{s}\lvert v(s) - \omega_\infty(v)\rvert \neq 0$ otherwise $\omega_\infty = v(s)=v(s'),\quad \forall s,s'$ }\\
\implies&\lim_{p\to\infty}\Bigm\lvert\sum_{s}sign\bigm(v(s)-\omega_\infty(v)\bigm)\Bigm(\frac{\lvert v(s) - \omega_\infty(v)\bigm\rvert}{\max_{s}\lvert v(s) - \omega_\infty(v)\rvert}\Bigm)^{p}\Bigm\rvert^\frac{1}{p} = 0 \\
&\text{To avoid technical complication, we assume $\max_{s}v(s)>v(s)< \min_{s}v(s), \quad \forall s$}\\
\implies&\lim_{p\to\infty} \lvert\max_{s} v(s) - \omega_\infty(v)\rvert =\lim_{p\to\infty}\lvert\min_{s} v(s) - \omega_\infty(v)\rvert\\
\implies& \max_{s} v(s) - \lim_{q\to\infty}\omega_\infty(v) =-(\min_{s} v(s) - \lim_{p\to\infty}\omega_\infty(v)),\qquad \text{(managing signs)}\\
\implies&\lim_{p\to\infty}\omega_\infty(v) = \frac{\max_{s}v(s) +\min_{s}v(s)}{2}.
\end{aligned}\end{equation}

\begin{equation}\begin{aligned}
    \kappa_\infty(v) =& \lVert v-\omega_{\infty}\mathbf{1}\rVert_\infty\\
    =& \lVert v-\frac{\max_{s}v(s) + \min_{s}v(s)}{2}\mathbf{1}\rVert_\infty, \qquad \text{(putting in value of $\omega_\infty$)}\\
    =&\frac{\max_{s}v(s) - \min_{s}v(s)}{2}
\end{aligned}\end{equation}

For $p=2$, we have
\begin{equation}\begin{aligned}
    \kappa_2(v) =& \lVert v-\omega_{2}\mathbf{1}\rVert_2\\
    =& \lVert v-\frac{\sum_{s}v(s)}{S}\mathbf{1}\rVert_2,\\
    =&\sqrt{\sum_{s}(v(s) -\frac{\sum_{s}v(s)}{S})^2}
\end{aligned}\end{equation}

For $p=1$, we have
\begin{equation}\begin{aligned}
&\sum_{s\in\mathcal{S}} sign\bigm(v(s) - \omega_1(v)\bigm) = 0 \\
\end{aligned}\end{equation}
Note that there may be more than one values of $\omega_1(v)$ that satisfies the above equation and each solution does equally good job (as we will see later). So we will pick one ( is median of $v$) according to our convenience as
\[ \omega_1(v) = \frac{v(s_{\lfloor (S+1)/2\rfloor}) +v(s_{\lceil (S+1)/2\rceil})}{2} \quad \text{where} \quad v(s_i)\geq v(s_{i+1}) \quad \forall i.\] 

\begin{equation}\begin{aligned}
    \kappa_1(v) =& \lVert v-\omega_{1}\mathbf{1}\rVert_1\\
    =& \lVert v-med(v)\mathbf{1}\rVert_1, \qquad \text{(putting in value of $\omega_0$, see table \ref{tb:mean})}\\
    =&\sum_{s}\lvert v(s)-med(v)\rvert\\
    =&\sum_{i=1}^{\lfloor (S+1)/2\rfloor} (v(s)-med(v))  +\sum_{\lceil (S+1)/2\rceil}^{S} (med(v) -v(s)) \\
    =&\sum_{i=1}^{\lfloor (S+1)/2\rfloor} v(s)  -\sum_{\lceil (S+1)/2\rceil}^{S} v(s) 
\end{aligned}\end{equation}
where 
$med(v) := \frac{v(s_{\lfloor (S+1)/2\rfloor}) +v(s_{\lceil (S+1)/2\rceil})}{2} \quad \text{where} \quad v(s_i)\geq v(s_{i+1}) \quad \forall i$ is median of $v$. The results are summarized in table \ref{tb:kappa} and \ref{tb:mean}.
\begin{table}
  \caption{$p$-mean, where $v(s_i)\geq v(s_{i+1}) \quad \forall i.$}
  \label{tb:mean}
  \centering
  \begin{tabular}{lll}
    \toprule                   
    $x$     & $\omega_x(v)$     & remark \\
    \midrule
    $p$ & $\sum_{s}sign(v(s)-\omega_p(v))\lvert v(s) - \omega_p(v)\rvert^{\frac{1}{p-1}} = 0 $ & Solve by binary search\\\\
    $1$     & $\frac{v(s_{\lfloor (S+1)/2\rfloor}) +v(s_{\lceil (S+1)/2\rceil})}{2} $      & Median  \\\\
    2     &  $\frac{\sum_{s}v(s)}{S}    $ & Mean\\\\
    $\infty$     & $\frac{\max_{s}v(s) + \min_{s}v(s)}{2}$      &  Average of peaks  \\
    \bottomrule
  \end{tabular}
\end{table}


\subsection{$p$-variance function $\kappa_p$ and kernel noise}

\begin{lemma}
\label{regfn}
$q$-variance function $\kappa_q$ is the solution of the following optimization problem (kernel noise),
\[\kappa_q(v) = -\frac{1}{\epsilon}\min_{c}\langle c,v\rangle, \qquad \lVert c\rVert_p\leq \epsilon, \qquad \sum_{s}c(s) = 0.\]
\end{lemma}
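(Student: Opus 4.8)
The plan is to show that the linear program on the right computes $\epsilon$ times the $q$-norm distance from $v$ to the line spanned by $\mathbf 1$, which is exactly $\epsilon\kappa_q(v)$ by the definition in \eqref{def:kp}. First I would remove the sign and the minimization: since the feasible set $\{c : \lVert c\rVert_p \le \epsilon,\ \sum_s c(s)=0\}$ is symmetric under $c\mapsto -c$, we have $-\min_c\langle c,v\rangle = \max_c\langle c,v\rangle$, so it suffices to prove
\[
M := \max_{\lVert c\rVert_p\le\epsilon,\ \langle c,\mathbf 1\rangle=0}\langle c,v\rangle = \epsilon\,\kappa_q(v).
\]
The feasible set is compact and the objective is linear, so the maximum is attained.

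For the upper bound $M \le \epsilon\kappa_q(v)$, I would exploit the constraint $\langle c,\mathbf 1\rangle=0$, which gives $\langle c,v\rangle = \langle c, v-\omega\mathbf 1\rangle$ for every scalar $\omega$. Dropping the equality constraint and applying H\"older's inequality yields $\langle c, v-\omega\mathbf 1\rangle \le \lVert c\rVert_p\,\lVert v-\omega\mathbf 1\rVert_q \le \epsilon\,\lVert v-\omega\mathbf 1\rVert_q$ for every feasible $c$ and every $\omega$. Taking the infimum over $\omega$ on the right and recalling $\kappa_q(v) = \min_\omega\lVert v-\omega\mathbf 1\rVert_q$ gives $M \le \epsilon\,\kappa_q(v)$.

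The reverse inequality is the crux and requires exhibiting a feasible $c^\star$ attaining the bound. Set $u := v-\omega_q(v)\mathbf 1$, so that $\lVert u\rVert_q=\kappa_q(v)$, and take the H\"older-equality vector $c^\star(s) := \lambda\,\mathrm{sign}(u(s))\,|u(s)|^{q-1}$ with $\lambda$ chosen so that $\lVert c^\star\rVert_p=\epsilon$; a direct computation using $(q-1)p=q$ gives $\lambda = \epsilon/\lVert u\rVert_q^{q/p}$ and $\langle c^\star,u\rangle = \epsilon\,\lVert u\rVert_q = \epsilon\,\kappa_q(v)$. The key step is that $c^\star$ is in fact feasible for the \emph{constrained} problem: $\langle c^\star,\mathbf 1\rangle = \lambda\sum_s \mathrm{sign}(u(s))\,|u(s)|^{q-1}$, and this vanishes precisely because $\omega_q(v)$ satisfies the first-order stationarity condition $\sum_s\mathrm{sign}(v(s)-\omega_q(v))\,|v(s)-\omega_q(v)|^{q-1}=0$ that characterizes the $q$-mean. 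Since $\langle c^\star,\mathbf 1\rangle = 0$, I may reinstate the shift to get $\langle c^\star,v\rangle = \langle c^\star,u\rangle = \epsilon\,\kappa_q(v)$, hence $M\ge\epsilon\,\kappa_q(v)$, which closes the loop.

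I expect the main obstacle to be exactly this feasibility verification, i.e.\ arguing that the natural $L_p/L_q$ H\"older maximizer already lies in the hyperplane $\langle c,\mathbf 1\rangle=0$; this is where the stationarity condition defining $\omega_q$ enters, and it is what ties the constrained linear program to the $q$-variance. I would also dispatch the degenerate case $v=\omega\mathbf 1$ separately, where both sides equal $0$, and treat the non-smooth endpoints $q\in\{1,\infty\}$ by replacing the stationarity condition with its subgradient form (or by continuity in $p$), noting that the resulting vectors remain consistent with the closed-form $\kappa_q$ expressions in Table~\ref{tb:kappa}.
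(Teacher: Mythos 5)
Your proof is correct, and it takes a genuinely different route from the paper's. The paper derives the value of the program by writing the Lagrangian with multipliers $\lambda$ (for $\sum_s c(s)=0$) and $\mu$ (for $\lVert c\rVert_p\leq\epsilon$), solving the KKT stationarity system, and eventually expressing the optimal objective as $\epsilon\lVert v+\lambda\rVert_q$ where $\lambda$ satisfies the same stationarity equation that characterizes the $q$-mean; the connection to $\kappa_q$ is then made by the convexity of $\omega\mapsto\lVert v-\omega\mathbf{1}\rVert_q$. You instead prove a matching pair of inequalities: weak duality via the shift-invariance trick $\langle c,v\rangle=\langle c,v-\omega\mathbf{1}\rangle$ plus H\"older for the upper bound, and an explicit primal certificate $c^\star\propto\mathrm{sign}(u)\lvert u\rvert^{q-1}$ with $u=v-\omega_q(v)\mathbf{1}$ for the lower bound, where feasibility of $c^\star$ in the hyperplane $\sum_s c(s)=0$ is exactly the first-order condition defining $\omega_q(v)$. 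The two arguments pivot on the same structural fact --- the optimal noise is the H\"older dual of $v$ recentered at its $q$-mean --- but your version buys several things: it avoids manipulating the multipliers $\mu,\lambda$ and the non-differentiability of $\lvert c\rvert^{p-1}\mathrm{sign}(c)$ at zero, it does not need any constraint qualification or the sign analysis of $\mu$, and it verifies optimality directly rather than assuming the KKT point found is the global minimizer. The paper's computation, on the other hand, derives the answer from scratch rather than verifying a guessed one. Your noted caveats (the degenerate case $v\in\mathrm{span}(\mathbf{1})$ and the endpoints $q\in\{1,\infty\}$ via subgradients or continuity) are exactly the places where both arguments need care, and the paper handles $p=1$ only by a separate inspection at the end, so your treatment is no less complete.
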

\begin{proof}
Writing Lagrangian $L$, as
\[L := \sum_{s}c(s)v(s) + \lambda\sum_{s}c(s) + \mu(\sum_{s}\lvert c(s)\rvert^p -\epsilon^p),\]
where $\lambda \in\mathbb{R}$ is the multiplier for the constraint $\sum_{s}c(s) = 0$ and $\mu \geq 0$ is the multiplier for the inequality constraint $\lVert c\lVert_q \leq \epsilon.$ Taking its derivative, we have
\begin{equation}\begin{aligned}
    \frac{\partial L}{\partial c(s)} = v(s) + \lambda + \mu p \lvert c(s)\rvert^{p-1}\frac{c(s)}{\lvert c(s)\rvert}
\end{aligned}\end{equation}
From the KKT (stationarity)  condition, the solution $c^
*$ has zero derivative, that is 
\begin{equation}\begin{aligned}\label{app:eq:kappa:LagDer}
     v(s) + \lambda + \mu p \lvert c^*(s)\rvert^{p-1}\frac{c^*(s)}{\lvert c^*(s)\rvert} = 0, \qquad \forall s\in\mathcal{S}.
\end{aligned}\end{equation}

Using Lagrangian derivative equation \eqref{app:eq:kappa:LagDer}, we have
\begin{equation}\begin{aligned}
    & v(s) + \lambda + \mu p \lvert c^*(s)\rvert^{p-1}\frac{c^*(s)}{\lvert c^*(s)\rvert} = 0 \\
    \implies & \sum_{s}c^*(s)[v(s)  + \lambda + \mu p \lvert c^*(s)\rvert^{p-1}\frac{c^*(s)}{\lvert c^*(s)\rvert}] = 0, \qquad \text{(multiply with $c^*(s)$ and summing )}\\
     \implies & \sum_{s}c^*(s)v(s)  + \lambda\sum_{s}c^*(s) + \mu p \sum_{s}\lvert c^*(s)\rvert^{p-1}\frac{(c^*(s))^2}{\lvert c^*(s)\rvert} = 0\\
    \implies & \langle c^*,v\rangle + \mu p \sum_{s}\lvert c^*(s)\rvert^{p} = 0 \qquad \text{(using $\sum_{s}c^*(s) =0$ and $(c^*(s))^2 = \lvert c^*(s)\rvert^2$ )}\\
    \implies & \langle c^*,v\rangle = -\mu p \epsilon^p, \qquad \text{(using $\sum_{s}\lvert c^*(s)\rvert^p= \epsilon^p$ ).} 
\end{aligned}\end{equation}
It is easy to see that $\mu \geq 0$, as minimum value of the objective must not be positive ( at $c=0$, the objective value is zero).
Again we use Lagrangian derivative \eqref{app:eq:kappa:LagDer} and try to get the objective value  ($-\mu p \epsilon^p$) in terms of $\lambda$, as 
\begin{equation}\begin{aligned}
    &v(s) + \lambda + \mu p \lvert c^*(s)\rvert^{p-1}\frac{c^*(s)}{\lvert c^*(s)\rvert} = 0\\
    \implies &\lvert c^*(s)\rvert^{p-2}c^*(s) =-\frac{v(s) + \lambda}{\mu p}, \qquad \text{(re-arranging terms)} \\
    \implies &\sum_{s}|(\lvert c^*(s)\rvert^{p-2}c^*(s))|^{\frac{p}{p-1}} =\sum_{s}|-\frac{v(s) + \lambda}{\mu p}|^{\frac{p}{p-1}}, \qquad \text{(doing $\sum_s\lvert\cdot\rvert^\frac{p}{p-1}$ )} \\
    \implies & \lVert c^*\rVert^p_p =\sum_{s}|-\frac{v(s) + \lambda}{\mu p}|^{\frac{p}{p-1}} = \sum_{s}|\frac{v(s) + \lambda}{\mu p}|^{q} =\frac{\lVert v + \lambda\rVert^q_q}{|\mu p|^q} \\
    \implies & |\mu p|^q\lVert c^*\rVert^p_p = \lVert v + \lambda\rVert^q_q , \qquad \text{(re-arranging terms)}\\
    \implies & |\mu p|^q \epsilon^p=\lVert v + \lambda\rVert^q_q, \qquad \text{(using $\sum_{s}\lvert c^*(s)\rvert^p= \epsilon^p$ )}\\
    \implies & \epsilon (\mu p \epsilon^{p/q})=  \epsilon \lVert v + \lambda\rVert_q\qquad \text{(taking $\frac{1}{q}$the power then multiplying with $\epsilon$)} \\
    \implies &\mu p \epsilon^p=\epsilon \lVert v + \lambda\rVert_q.
\end{aligned}\end{equation}

Again, using Lagrangian derivative \eqref{app:eq:kappa:LagDer} to solve for $\lambda$, we have
\begin{equation}\begin{aligned}
    &v(s) + \lambda + \mu p \lvert c^*(s)\rvert^{p-1}\frac{c^*(s)}{\lvert c^*(s)\rvert} = 0\\
    \implies &\lvert c^*(s)\rvert^{p-2}c^*(s) =-\frac{v(s) + \lambda}{\mu p} , \qquad \text{(re-arranging terms)}\\
    \implies & \lvert c^*(s)\rvert = |\frac{v(s) + \lambda}{\mu p}|^{\frac{1}{p-1}},  \quad \text{(looking at absolute value)}\\
    \qquad\qquad &\quad \text{and}\quad \frac{c^*(s)}{\lvert c^*(s)\rvert} = -\frac{v(s)+\lambda}{|v(s)+\lambda|}, \quad \text{(looking at sign: and note $\mu,p\geq 0$)}\\
    \implies &  \sum_{s}\frac{c^*(s)}{\lvert c^*(s)\rvert}\lvert c^*(s)\rvert = -\sum_{s}\frac{v(s)+\lambda}{|v(s)+\lambda|}|\frac{v(s) + \lambda}{\mu p}|^{\frac{1}{p-1}}, \qquad \text{(putting back)}\\
    \implies &  \sum_{s}c^*(s) = -\sum_{s}\frac{v(s)+\lambda}{|v(s)+\lambda|}|\frac{v(s) + \lambda}{\mu p}|^{\frac{1}{p-1}}, \\
    \implies & \sum_{s}\frac{v(s)+\lambda}{|v(s)+\lambda|}|v(s) + \lambda|^{\frac{1}{p-1}} = 0, \qquad \text{( using $\sum_i c^*(s) = 0$)}
\end{aligned}\end{equation}
Combining everything, we have 
\begin{equation}\label{kappa}\begin{aligned}
    &-\frac{1}{\epsilon}\min_{c}\langle c,v\rangle, \qquad \lVert c\rVert_p\leq \epsilon, \qquad \sum_{s}c(s) = 0\\
    = &\lVert v -\lambda\rVert_q, \quad\text{such that}\quad \sum_{s}sign(v(s)-\lambda)|v(s) - \lambda|^{\frac{1}{p-1}} = 0.
\end{aligned}\end{equation}
Now, observe that
\begin{equation}\begin{aligned}
&\frac{\partial \lVert v -\lambda\rVert_q}{\partial \lambda}  = 0\\
\implies &  \sum_{s}sign(v(s)-\lambda)|v(s) - \lambda|^{\frac{1}{p-1}} = 0,\\
\implies & \kappa_q(v) = \lVert v -\lambda\rVert_q, \quad\text{such that}\quad \sum_{s}sign(v(s)-\lambda)|v(s) - \lambda|^{\frac{1}{p-1}} = 0.
\end{aligned}
\end{equation}
The last equality follows from the convexity of p-norm $\lVert \cdot \rVert_q$, where every local minima is global minima. 


For the sanity check, we re-derive things for $p=1$ from scratch.
For $p=1$, we have
\begin{equation}\begin{aligned}
&-\frac{1}{\epsilon}\min_{c}\langle c,v\rangle, \qquad \lVert c\rVert_1\leq \epsilon, \qquad \sum_{s}c(s) = 0.\\
=& - \frac{1}{2}(\min_{s}v(s) - \max_{s}v(s))\\
=& \kappa_1(v).   
\end{aligned}\end{equation}
It is easy to see the above result, just by inspection.
\end{proof}

\subsection{Binary search for $\omega_p$ and estimation of $\kappa_p$}\label{app:BSkappa}
If the function $f:[-B/2,B/2] \to \mathbb{R}, B\in\mathbb{R}$ is monotonic (WLOG let it be monotonically decreasing) in a bounded domain, and it has a unique root $x^*$ s.t. $f(x^*) = 0$. Then we can find $x$ that is an $\epsilon$-approximation $x^*$ (i.e. $\lVert x - x^*\rVert \leq \epsilon$ ) in $O(B/\epsilon)$ iterations. Why?  Let $x_0=0$ and
\[x_{n+1} := \begin{cases} \frac{-B + x_n}{2} & \text{if}\quad f(x_n) > 0\\
\frac{B + x_n}{2} & \text{if}\quad f(x_n) < 0\\
 x_n & \text{if}\quad f(x_n) = 0\\\end{cases}.\]
It is easy to observe that $\lVert x_n - x^*\rVert \leq B(1/2)^n$. This is proves the above claim. This observation will be referred to many times.\\
Now, we move to the main claims of the section.

\begin{proposition}\label{prp:SLpValMeanEval}
The function   \[h_p(\lambda):=\sum_{s}sign\bigm(v(s)-\lambda\bigm)\bigm\lvert v(s) - \lambda\bigm\rvert^{p}\] is monotonically strictly decreasing and also has a root in the range $[\min_{s}v(s),\max_{s}v(s)]$.
\end{proposition}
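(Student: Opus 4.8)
The plan is to reduce the statement to an elementary monotonicity fact about a single scalar map and then invoke the Intermediate Value Theorem for the root. Define $g:\mathbb{R}\to\mathbb{R}$ by $g(t):=\operatorname{sign}(t)\,|t|^{p}$, so that $h_p(\lambda)=\sum_{s} g\bigl(v(s)-\lambda\bigr)$. The whole argument rests on the observation that $g$ is a continuous, strictly increasing bijection of $\mathbb{R}$: it is odd, equals $t^{p}$ on $(0,\infty)$ and $-(-t)^{p}$ on $(-\infty,0)$, both strictly increasing branches, and is continuous at the origin with $g(0)=0$. Phrasing things through $g$ lets me avoid differentiation entirely, which is the clean way to handle the kink of $\operatorname{sign}(t)|t|^p$ at $t=0$.

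First I would establish strict monotonicity. For each fixed $s$ the map $\lambda\mapsto v(s)-\lambda$ is strictly decreasing, and composing it with the strictly increasing $g$ shows that each summand $\lambda\mapsto g\bigl(v(s)-\lambda\bigr)$ is strictly decreasing in $\lambda$. A finite sum of strictly decreasing functions is strictly decreasing, which gives the first claim directly. (As an alternative one could note $h_p(\lambda)=-\tfrac{1}{p+1}\tfrac{d}{d\lambda}\sum_s|v(s)-\lambda|^{p+1}$, and that $\sum_s|v(s)-\lambda|^{p+1}$ is strictly convex in $\lambda$ for $p>0$, so its derivative is strictly increasing; but the composition argument is cleaner and needs no smoothness.)

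Next I would prove existence of a root by evaluating $h_p$ at the two endpoints of the stated interval. Writing $m=\min_s v(s)$ and $M=\max_s v(s)$: at $\lambda=m$ every difference $v(s)-m\ge 0$, so every term $g\bigl(v(s)-m\bigr)\ge 0$ and hence $h_p(m)\ge 0$; symmetrically $v(s)-M\le 0$ for all $s$ forces $h_p(M)\le 0$. Since $h_p$ is continuous as a finite sum of continuous functions, the Intermediate Value Theorem yields some $\lambda^{*}\in[m,M]$ with $h_p(\lambda^{*})=0$, and combining this with strict monotonicity shows the root is unique.

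The only place requiring care — and what I would flag as the main obstacle — is the degenerate case in which all coordinates of $v$ coincide, say $v\equiv c$. Then $[m,M]$ collapses to the single point $\{c\}$, the map $h_p$ vanishes identically there, and "strictly decreasing" is read vacuously on a one-point domain. In every non-degenerate case the endpoint inequalities $h_p(m)\ge 0$ and $h_p(M)\le 0$ are in fact strict, so the IVT produces an interior root, which is what the subsequent binary search relies on; I would record this explicitly. Apart from that, the argument is routine, and the substantive content is simply recognizing $g(t)=\operatorname{sign}(t)|t|^{p}$ as a strictly increasing bijection for the relevant range $p>0$ (in particular $p\ge 1$).
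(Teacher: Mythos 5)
Your proof is correct and reaches the same endpoint/IVT argument for the root that the paper uses, but you establish monotonicity by a genuinely different and arguably cleaner route. The paper simply differentiates, writing $\frac{dh_p}{d\lambda}=-p\sum_s\lvert v(s)-\lambda\rvert^{p-1}\le 0$; this is slightly delicate because $\operatorname{sign}(t)\lvert t\rvert^{p}$ is not differentiable at $t=0$ for $p<1$, and because the displayed inequality is only $\le 0$, which literally yields non-increasing rather than \emph{strictly} decreasing (one must add that the derivative vanishes only when $v(s)=\lambda$ for every $s$). Your composition argument --- each summand is the strictly increasing bijection $g(t)=\operatorname{sign}(t)\lvert t\rvert^{p}$ applied to the strictly decreasing map $\lambda\mapsto v(s)-\lambda$, and a finite sum of strictly decreasing functions is strictly decreasing --- needs no smoothness and delivers strictness outright, even in the degenerate case $v\equiv c$ (where $h_p$ is still strictly decreasing on all of $\mathbb{R}$, so your ``vacuous'' caveat is more cautious than necessary). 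The root-existence step, checking $h_p(\min_s v(s))\ge 0$ and $h_p(\max_s v(s))\le 0$ and invoking continuity, is identical to the paper's. What your version buys is rigor at the kink and a uniqueness statement for free; what the paper's buys is brevity and a derivative formula that is reused implicitly in the surrounding binary-search complexity discussion.
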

\begin{proof}
\begin{equation}\begin{aligned}
    &h_p(\lambda) = \sum_{s}\frac{v(s)-\lambda}{|v(s) - \lambda|}|v(s) - \lambda|^{p}\\
    &\frac{dh_p}{d\lambda}(\lambda) = -p\sum_{s}|v(s) - \lambda|^{p-1} \leq 0,\qquad \forall p\geq 0.
\end{aligned}\end{equation}
Now, observe that $h_p(\max_{s}v(s)) \leq 0$ and $h_p(\min_{s}v(s)) \geq 0$, hence by $h_p$ must have a root in the range $[\min_{s}v(s),\max_{s}v(s)]$ as the function is continuous.
\end{proof}
The above proposition ensures that a root $\omega_p(v)$ can be easily found by binary search between $[\min_{s}v(s),\max_{s}v(s)]$. Precisely, $\epsilon$ approximation of $\omega_p(v)$ can be found in $O(\log(\frac{\max_{s}v(s)-\min_{s}v(s)}{\epsilon}))$  number of iterations of binary search. And one evaluation of the function $h_p$ requires $O(S)$ iterations. And we have finite state-action space and bounded reward hence WLOG we can assume $\lvert\max_{s}v(s)\rvert, \lvert\min_{s}v(s)\rvert$ are bounded by a constant.  Hence, the complexity to approximate $\omega_p$ is $O(S\log(\frac{1}{\epsilon}))$.\\

 Let  $\hat{\omega}_{p}(v)$ be an $\epsilon$-approximation  of $\omega_p(v)$, that is 
\[\bigm\lvert \omega_p(v) - \hat{\omega}_p(v) \bigm\rvert \leq \epsilon.\]

And let $\hat{\kappa}_p(v)$ be approximation of $\kappa_p(v)$ using approximated mean, that is, 
\[\hat{\kappa}_p(v) := \lVert v -\hat{\omega}_{p}(v)\mathbf{1}\rVert_p.\]
Now we will show that $\epsilon$ error in calculation of $p$-mean $\omega_p$, induces $O(\epsilon)$ error in estimation of $p$-variance $\kappa_p$. Precisely,
\begin{equation}\begin{aligned}
    \Bigm\lvert \kappa_p(v)-\hat{\kappa}_p(v)\Bigm\rvert =& \Bigm\lvert \bigm\lVert v -\omega_{p}(v)\mathbf{1}\bigm\rVert_p -\bigm\lVert v -\hat{\omega}_{p}(v)\mathbf{1}\bigm\rVert_p\Bigm\rvert \\
    \leq& \bigm\lVert \omega_{p}(v)\mathbf{1}-\hat{\omega}_{p}(v)\mathbf{1}\bigm\rVert_p,\qquad \text{(reverse triangle inequality)}\\ 
    =&  \bigm\lVert\mathbf{1}\bigm\rVert_p\bigm\lvert \omega_{p}(v)-\hat{\omega}_{p}(v)\bigm\rvert\\
    \leq&  \bigm\lVert\mathbf{1}\bigm\rVert_p\epsilon\\
    = &S^\frac{1}{p}\epsilon \leq S\epsilon. 
\end{aligned}\end{equation}
For general $p$, an $\epsilon$ approximation of $\kappa_p(v)$ can be calculated in 
$O(S\log(\frac{S}{\epsilon})$ iterations. Why? We will estimate mean $\omega_p$ to an $\epsilon/S$ tolerance (with cost $O(S\log(\frac{S}{\epsilon})$ ) and then approximate the $\kappa_p$ with this approximated mean (cost $O(S)$).

\section{ $L_p$ Water Pouring  lemma}\label{app:waterPouringSection}
In this section, we are going to discuss the following optimization problem,
\[\max_{c}-\alpha \lVert c\rVert_q + \langle c,b\rangle \qquad \text{such that }\qquad \sum_{i=1}^Ac_i = 1,\quad c_i\geq 0,\quad \forall i\]
where $\alpha\geq 0$, referred as $L_p$-water pouring problem. We are going to assume WLOG that $b$ is sorted component wise, that is $b_1\geq b_2,\cdots\geq b_A.$ The above problem for $p=2$, is studied in \cite{anava2016k}. The approach we are going to solve the problem is as follows: a) Write Lagrangian b) Since the problem is convex, any solutions of KKT condition is global maximum. c) Obtain conditions using KKT conditions.

\begin{lemma}\label{LpwaterPouring}
Let $b \in\mathbb{R}^A$ be such that its components are in decreasing order (i,e $b_{i}\geq b_{i+1}$), $\alpha\geq 0$ be any non-negative constant, and 
\begin{equation}\label{app:eq:LpwaterPouringlemma}
 \zeta_p := \max_{c}-\alpha \lVert c\rVert_q + \langle c,b\rangle \qquad \text{such that }\qquad \sum_{i=1}^Ac_i = 1,\quad c_i\geq 0,\quad \forall i,    
\end{equation}
and let $c^*$ be a solution to the above problem.
 Then
\begin{enumerate}
    \item \label{app:wp:order} Higher components of $b$, gets higher weight in $c^*$. In other words, $c^*$ is also sorted component wise in descending order, that is \[c^*_1 \geq c^*_2,\cdots, \geq c^*_A.\] 
    \item \label{app:wp:zeta}The value $\zeta_p$ satisfies the following equation
    \[\alpha^{p} = \sum_{ b_i\geq \zeta_p}(b_i - \zeta_p)^{p}\]
    \item \label{app:wpl:policy} The solution $c$ of \eqref{app:eq:LpwaterPouringlemma}, is related to $\zeta_p$ as
    \begin{equation*}
        c_i = \frac{(b_i - \zeta_p)^{p-1}\mathbf{1}(b_i\geq\zeta_p)}{\sum_{s}(b_i - \zeta_p)^{p-1}\mathbf{1}(b_i\geq \zeta_p)}
    \end{equation*} 
    \item \label{app:wp:chi}Observe that the top $\chi_p:=\max\{i|b_i\geq\zeta_p\} $ actions are active and rest are passive. The number of active actions can be calculated as 
\[\{ k | \alpha^{p} \geq \sum_{i=1}^{k}(b_i - b_k)^{p}\} = \{1,2,\cdots,\chi_p\}.\]
\item Things can be re-written as
\[c_i  \propto \begin{cases}(b_i - \zeta_p)^{p-1} & \text{if}\quad i\leq \chi_p\\0 & \text{else}
\end{cases}\qquad \text{and}\qquad \alpha^{p} = \sum_{i=1}^{ \chi_p}(b_i - \zeta_p)^{p}\]

\item \label{app:wp:zetaBinarySearch}The function $\sum_{b_i \geq x}(b_i - x)^p $ is monotonically decreasing in $x$, hence the root $\zeta_p$ can be calculated efficiently by binary search between $[b_1 -\alpha, b_1]$.
    
\item \label{app:wp:zetaBound} Solution is sandwiched as follows
\[b_{\chi_p+1} \leq \zeta_p \leq b_{\chi_p}\]
\item \label{app:wp:chiSol}  $k\leq \chi_p$ if and only if there exist the solution of the following,
\[ \sum_{i=1}^k(b_i-x)^p = \alpha^p \quad \text{and}\quad x \leq b_k.\]

\item \label{app:wp:greedyInclusion} If action $k$ is active  and there is greedy increment hope then action $k+1$ is also active. That is 
    \[k \leq \chi_p \quad\text{and}\quad\lambda_k \leq b_{k+1} \implies k+1\leq \chi_p,\]
    where 
    \[ \sum_{i=1}^k(b_i-\lambda_k)^p = \alpha^p \quad \text{and}\quad \lambda_k \leq b_k.\]
\item \label{app:wp:stoppingCondition} If action $k$ is active, and there is no greedy hope and then action $k+1$ is not active. That is,\[k \leq \chi_p \quad\text{and}\quad\lambda_k > b_{k+1} \implies k+1> \chi_p,\]
    where 
    \[ \sum_{i=1}^k(b_i-\lambda_k)^p = \alpha^p \quad \text{and}\quad \lambda_k \leq b_k.\]
And this implies $k = \chi_p.$
\end{enumerate}
\end{lemma}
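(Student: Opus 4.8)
The plan is to follow the convex-duality route indicated just before the lemma. Since $\alpha\geq 0$ and $\lVert\cdot\rVert_q$ is convex, the objective $-\alpha\lVert c\rVert_q+\langle c,b\rangle$ is concave and the feasible set is the simplex, so the problem is a concave maximization over a convex set; with Slater's condition holding at the uniform point, the KKT conditions are both necessary and sufficient. I would introduce a multiplier $\lambda$ for $\sum_i c_i=1$ and multipliers $\mu_i\geq 0$ for $c_i\geq 0$, and write stationarity as
\[
-\alpha\frac{c_i^{q-1}}{\lVert c\rVert_q^{q-1}}+b_i-\lambda+\mu_i=0,\qquad\forall i,
\]
using $\partial\lVert c\rVert_q/\partial c_i=c_i^{q-1}/\lVert c\rVert_q^{q-1}$ on the positive orthant. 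Complementary slackness then splits the coordinates into an \emph{active} set ($c_i>0$, $\mu_i=0$, forcing $b_i-\lambda=\alpha c_i^{q-1}/\lVert c\rVert_q^{q-1}>0$) and a \emph{passive} set ($c_i=0$, giving $b_i\leq\lambda$). Because $b$ is sorted and $\lambda$ acts as a threshold, the active set is exactly a prefix $\{1,\dots,\chi_p\}$, which is the structural fact underpinning the whole lemma.

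On the active set I would invert the stationarity equation: since $1/(q-1)=p-1$ and $q/(q-1)=p$, the relation $b_i-\lambda=\alpha c_i^{q-1}/\lVert c\rVert_q^{q-1}$ yields $c_i\propto(b_i-\lambda)^{p-1}$, which is increasing in $b_i$; together with $c_i=0$ on the passive set this gives the descending order of $c^*$ (item 1) and, after normalizing by $\sum_i c_i=1$, the explicit form in item 3 with $\zeta_p$ in place of $\lambda$. To justify replacing $\lambda$ by $\zeta_p$, I would multiply stationarity by $c_i^*$ and sum; using $\sum_i c_i^*=1$, $\sum_i (c_i^*)^q=\lVert c^*\rVert_q^q$ and $c_i^*=0$ off the active set, this collapses to $\langle c^*,b\rangle-\alpha\lVert c^*\rVert_q=\lambda$, whose left-hand side is precisely the optimal value $\zeta_p$. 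Finally, raising the active-set equation $b_i-\zeta_p=\alpha c_i^{q-1}/\lVert c^*\rVert_q^{q-1}$ to the power $p$ and summing gives $\sum_{b_i\geq\zeta_p}(b_i-\zeta_p)^p=\alpha^p$ (item 2), since $p(q-1)=q$.

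For the more combinatorial items I would work with the two auxiliary functions $g(x):=\sum_{b_i\geq x}(b_i-x)^p$ and $\phi(k):=\sum_{i=1}^k(b_i-b_k)^p$. Differentiating gives $g'(x)=-p\sum_{b_i>x}(b_i-x)^{p-1}\leq 0$, and $g$ is continuous, so it is strictly decreasing where positive; since $g(b_1)=0$ and $g(b_1-\alpha)\geq\alpha^p$, the equation $g(x)=\alpha^p$ has a unique root in $[b_1-\alpha,b_1]$, which by item 2 is $\zeta_p$ (item 6). The sandwich $b_{\chi_p+1}\leq\zeta_p\leq b_{\chi_p}$ (item 7) is immediate from $\chi_p=\max\{i\mid b_i\geq\zeta_p\}$ and the sorting of $b$. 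For item 4 I would compare $\phi(k)$ against $\alpha^p=\sum_{i=1}^{\chi_p}(b_i-\zeta_p)^p$: if $k\leq\chi_p$ then $b_k\geq\zeta_p$, so $(b_i-b_k)^p\leq(b_i-\zeta_p)^p$ termwise and $\phi(k)\leq\alpha^p$; if $k>\chi_p$ then $b_k<\zeta_p$, so $(b_i-b_k)^p>(b_i-\zeta_p)^p$ on the nonempty prefix $i\leq\chi_p$ and $\phi(k)>\alpha^p$. Item 8 follows by noting that $\psi_k(x):=\sum_{i=1}^k(b_i-x)^p$ is continuous and strictly decreasing for $x\leq b_k$ with $\psi_k(b_k)=\phi(k)$ and $\psi_k\to\infty$ as $x\to-\infty$, so a root $x\leq b_k$ exists iff $\phi(k)\leq\alpha^p$, i.e. iff $k\leq\chi_p$. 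Items 9 and 10 are the same comparison applied to $\lambda_k$: since the $i=k+1$ term of $\phi(k+1)$ vanishes, $\phi(k+1)=\sum_{i=1}^k(b_i-b_{k+1})^p$, and comparing $b_{k+1}$ with $\lambda_k$ through $\sum_{i=1}^k(b_i-\lambda_k)^p=\alpha^p$ gives $\phi(k+1)\leq\alpha^p$ when $\lambda_k\leq b_{k+1}$ and $\phi(k+1)>\alpha^p$ when $\lambda_k>b_{k+1}$, which by item 4 yield $k+1\leq\chi_p$ and $k+1>\chi_p$ (hence $k=\chi_p$) respectively.

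The main obstacle I anticipate is twofold, and both issues live at the boundaries rather than in the interior computation. First, the KKT derivation uses differentiability of $\lVert\cdot\rVert_q$, which fails at the corner cases $q=\infty$ ($p=1$) and $q=1$ ($p=\infty$); for these I would argue separately---$p=\infty$ collapses the objective to $-\alpha+\langle c,b\rangle$, a plain simplex maximum at $b_1$, and $p=1$ is the uniform-weight limit where $(b_i-\zeta_p)^{p-1}$ is read as an indicator---or recover them by continuity in $p$. Second, and more delicately, I must ensure the identification ``active set $=\{1,\dots,\chi_p\}$'' is genuinely consistent: the threshold $\lambda=\zeta_p$ is characterized through the fixed-point equation of item 2, whose solvability in the prescribed interval (item 6) is what guarantees that exactly the top $\chi_p$ coordinates are positive, and that boundary coordinates with $b_i=\zeta_p$ (which receive weight $0$) do not corrupt the counting in item 4. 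Reconciling the ``$\geq$'' set in the definition of $\chi_p$ with the strictly-positive support of $c^*$ is where I expect to spend the most care.
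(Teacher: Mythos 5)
Your proposal is correct and follows essentially the same route as the paper: a Lagrangian/KKT analysis of the concave simplex-constrained problem to obtain the threshold structure, the value equation $\alpha^p=\sum_{b_i\geq\zeta_p}(b_i-\zeta_p)^p$, and the weights $c_i\propto(b_i-\zeta_p)^{p-1}$, followed by monotonicity-and-continuity arguments for the combinatorial items. The only minor divergences are that the paper proves the ordering of $c^*$ by a direct rearrangement argument rather than reading it off the stationarity condition, and that it treats the nondifferentiable cases $p=1,\infty$ in separate subsections exactly as you anticipate.
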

\begin{proof}
 \begin{enumerate}
    \item  Let \[f(c): = -\alpha\lVert c\rVert_q + \langle b,c\rangle.\]
    Let $c$ be any vector, and $c'$ be rearrangement $c$ in descending order. Precisely,
    \[c'_{k} := c_{i_k} , \quad \text{where}\quad c_{i_1}\geq c_{i_2},\cdots,\geq c_{i_A}.\]
    Then it is easy to see that $f(c')\geq f(c).$ And the claim follows.
    
     \item Writting Lagrangian of the optimization problem, and its derivative,
\begin{equation}\begin{aligned}
    &L = -\alpha \lVert c\rVert_q + \langle c,b\rangle + \lambda(\sum_{i}c_i -1) + \theta_ic_i\\
    &\frac{\partial L}{\partial c_i} = -\alpha \lVert c\rVert_q^{1-q}|c_i|^{q-2}c_i + b_i + \lambda + \theta_i,
\end{aligned}\end{equation}
$\lambda\in\mathbb{R}$ is multiplier for equality constraint $\sum_{i}c_i = 1$ and $\theta_1,\cdots,\theta_A \geq 0$ are multipliers for inequality constraints $c_i\geq 0,\quad \forall i\in [A].$ Using KKT (stationarity) condition, we have
\begin{equation}\label{app:eq:Lpwp:st}
     -\alpha \lVert c^*\rVert_q^{1-q}|c^*_i|^{q-2}c^*_i + b_i + \lambda + \theta_i = 0\\
\end{equation}
Let $\mathcal{B}:=\{i | c^*_i > 0\}$, then
\begin{equation}\begin{aligned}
    &\sum_{i\in\mathcal{B}} c^*_i[-\alpha \lVert c^*\rVert_q^{1-q}|c^*_i|^{q-2}c^*_i + b_i + \lambda ] = 0\\
    \implies &-\alpha \lVert c^*\rVert_q^{1-q}\lVert c^*\rVert^q_q + \langle c^*, b \rangle + \lambda  = 0, \qquad \text{(using $\sum_i c^*_i = 1$ and $(c^*_i)^2 = |c^*_i|^2$)}\\
     \implies &-\alpha \lVert c^*\rVert_q + \langle c^*, b \rangle + \lambda  = 0\\
     \implies &-\alpha \lVert c^*\rVert_q + \langle c^*, b \rangle  =- \lambda, \qquad \text{(re-arranging)} 
\end{aligned}\end{equation}

Now again using \eqref{app:eq:Lpwp:st},  we have 
\begin{equation}\begin{aligned}
   &-\alpha \lVert c^*\rVert_q^{1-q}|c^*_i|^{q-2}c^*_i + b_i + \lambda + \theta_i = 0\\
    \implies& \alpha\lVert c^*\rVert_q^{1-q}|c^*_i|^{q-2}c^*_i =  b_i + \lambda +\theta_i , \qquad \forall i, \qquad \text{(re-arranging)}
 \end{aligned}\end{equation}
 Now, if $i\in\mathcal{B}$ then $\theta_i = 0$ from complimentry slackness, so we have 
 \[\alpha\lVert c^*\rVert_q^{1-q}|c^*_i|^{q-2}c^*_i =  b_i + \lambda > 0 , \qquad \forall i\in\mathcal{B}\]
 by definition of $\mathcal{B}$. Now, if for some $i$, $b_i+\lambda > 0$ then $b_i+\lambda + \theta_i > 0$ as $\theta_i\geq 0$, that implies
 \[\alpha\lVert c^*\rVert_q^{1-q}|c^*_i|^{q-2}c^*_i =  b_i + \lambda +\theta_i > 0 \]
 \[\implies c^*_i > 0 \implies i\in\mathcal{B}. \]
 So, we have, 
 \[i\in\mathcal{B} \iff b_i+\lambda > 0.\]
  To summarize, we have 
\begin{equation}\label{app:eq:wp:c}
    \alpha\lVert c^*\rVert_q^{1-q}|c^*_i|^{q-2}c^*_i =  (b_i + \lambda) \mathbf{1}(b_i\geq-\lambda), \quad \forall i,\\
\end{equation}    
\begin{equation}\begin{aligned}
     \implies & \sum_{i}\alpha^{\frac{q}{q-1}}\lVert c^*\rVert_q^{-q}(c^*_i)^q = \sum_{i}(b_i + \lambda)^{\frac{q}{q-1}}\mathbf{1}(b_i\geq-\lambda), \qquad \text{(taking $q/(q-1)$th power and summming)} \\
     \implies & \alpha^{p}  = \sum_{i=1}^A(b_i + \lambda)^{p}\mathbf{1}(b_i\geq-\lambda).
\end{aligned}\end{equation}
So, we have,
\begin{equation}\begin{aligned}\label{eq:waterpourng}
    \zeta_p    &= -\lambda \quad \text{such that }\quad \alpha^{p} = \sum_{b_i\geq \lambda}(b_i + \lambda)^{p}.\\
   \implies \alpha^{p} &= \sum_{ b_i\geq \zeta_p}(b_i - \zeta_p)^{p}
\end{aligned}\end{equation}
\item Furthermore, using \eqref{app:eq:wp:c}, we have
\begin{equation}\begin{aligned}
   &\alpha\lVert c^*\rVert_q^{1-q}|c^*_i|^{q-2}c^*_i =  (b_i + \lambda) \mathbf{1}(b_i\geq -\lambda) = (b_i -\zeta_p) \mathbf{1}(b_i\geq \zeta_p) \quad \forall i,\\
   \implies &  c^*_i \propto (b_i -\zeta_p)^{\frac{1}{q-1}}\mathbf{1}(b_i\geq \zeta_p)= \frac{(b_i - \zeta_p)^{p-1}\mathbf{1}(b_i\geq\zeta_p)}{\sum_{i}(b_i - \zeta_p)^{p-1}\mathbf{1}(b_i\geq \zeta_p)}, \qquad \text{(using $\sum_{i}c^*_i = 1$)}. \\
\end{aligned}\end{equation}

\item Now, we move on to calculate the number of active actions $\chi_p$. Observe that the function
\begin{equation}
    f(\lambda) := \sum_{i=1}^A(b_i-\lambda)^p\mathbf{1}(b_i\geq \lambda) - \alpha^p
\end{equation}
is monotonically decreasing in $\lambda$ and $\zeta_p$ is a root of $f$. This implies
\begin{equation}\begin{aligned}
    &f(x) \leq 0 \iff x \geq \zeta_p\\
    \implies & f(b_i) \leq 0 \iff b_i \geq \zeta_p\\
    \implies &\{i|b_i\geq \zeta_p\} =\{i|f(b_i)\leq 0\}\\
    \implies &\chi_p = \max\{i|b_i\geq \zeta_p\} = \max\{i|f(b_i)\leq 0\}.
\end{aligned}\end{equation}
Hence, things follows by putting back in the definition of $f$.
\item We have, 
\[\alpha^{p} = \sum_{i=1}^A(b_i - \zeta_p)^{p}\mathbf{1}(b_i\geq\zeta_p),\quad \text{and}\quad \chi_p = \max\{i|b_i\geq \zeta_p\}.\]
Combining both we have
\[\alpha^{p} = \sum_{i=1}^{\chi_p}(b_i - \zeta_p)^{p}.\]
And the other part follows directly.

\item Continuity and montonocity of the function $\sum_{b_i\geq x}(b_i - x)^{p} $ is trivial. Now observe that
$\sum_{b_i\geq b_1}(b_i - b_1)^{p} = 0$ and $ \sum_{b_i\geq b_1-\alpha}(b_i - (b_1-\alpha))^{p} \geq \alpha^p$,
so it implies that it is equal to $\alpha^p$ in the range $[b_1-\alpha,b_1]$.
\item Recall that the $\zeta_p$ is the solution to the following equation
\[\alpha^{p} = \sum_{b_i\geq x}(b_i - x)^{p}.\]
 And from the definition of $\chi_p$, we have \[\alpha^{p} < \sum_{i=1}^{ \chi_p +1}(b_i - b_{\chi_p +1})^{p} = \sum_{b_i\geq b_{\chi_p+1} }(b_i - b_{\chi_p+1 })^{p},\quad \text{and}\]
 \[ \alpha^{p} \geq \sum_{i=1}^{\chi_p} (b_i - b_{\chi_p })^{p} =\sum_{b_i\geq b_{\chi_p}} (b_i - b_{\chi_p })^{p}.\]
 
So from continuity, we infer the root $\zeta_p$ must lie between $[b_{\chi_p+1},b_{\chi}]$.
\item We  prove the first direction, and assume we have 
\begin{equation}\begin{aligned}
    & k \leq \chi_p \\
    \implies & \sum_{i=1}^k(b_i -b_k)^p \leq \alpha^p \qquad \text{(from definition of $\chi_p$)}.\\
\end{aligned}
\end{equation}
Observe the function $f(x):=\sum_{i=1}^k(b_i -x)^p$ is monotically decreasing in the range $(-\infty,b_k]$. Further, $f(b_k) \leq \alpha^p$  and $\lim_{x\to-\infty}f(x) = \infty$, so from the continuity argument there must exist a value $y\in (-\infty,b_k]$ such that $f(y) = \alpha^p$. This implies that 
\[\sum_{i=1}^k(b_i -y)^p \leq \alpha^p,\quad\text{and}\quad y\leq b_k.\]
Hence, explicitly showed the existence of the solution. Now, we move on to the second direction, and assume there exist $x$ such that
\[\sum_{i=1}^k(b_i -x)^p = \alpha^p,\quad\text{and}\quad x\leq b_k.\]
\[\implies \sum_{i=1}^k(b_i -b_k)^p \leq \alpha^p, \qquad \text{(as $x\leq b_k\leq b_{k-1}\cdots \leq b_1$)}\]
\[\implies k \leq \chi_p.\]

 \item We have $k \leq \chi_p$ and $\lambda_k$ such that 
 \begin{equation}\begin{split}
     \alpha^{p} 
     &= \sum_{i=1}^k(b_i - \lambda_k)^{p},\quad\text{and}\quad \lambda_k \leq b_k, \qquad \text{(from above item)}\\
      &  \geq \sum_{i=1}^k(b_i - b_{k+1})^{p},\qquad \text{(as $\lambda_k \leq b_{k+1} \leq b_k$)}\\
      & \geq \sum_{i=1}^{k+1}(b_i - b_{k+1})^{p},\qquad \text{(addition of $0$).}
 \end{split}
 \end{equation}
 From the definition of $\chi_p$, we get $k+1\leq \chi_p$. 
 
 \item We are given 
 \[ \sum_{i=1}^k(b_i-\lambda_k)^p = \alpha^p\]
 \[\implies \sum_{i=1}^k(b_i-b_{k+1})^p > \alpha^p,\qquad \text{(as $\lambda_k > b_{k+1}$)}\]
  \[\implies \sum_{i=1}^{k+1}(b_i-b_{k+1})^p > \alpha^p,\qquad \text{(addition of zero)}\]
    \[ \implies k+1 > \chi_p.\]

\end{enumerate}
\end{proof}

\subsubsection{Special case: $p=1$}\label{app:L1waterpouringSC}
For $p=1$, by definition, we have
\begin{equation}\begin{aligned}
  \zeta_1& = \max_{c}-\alpha \lVert c\rVert_{\infty} + \langle c,b\rangle \qquad \text{such that }\qquad \sum_{a\in \mathcal{A}}c_a = 1,\quad c\succeq 0.\\
\end{aligned}\end{equation}
And $\chi_1$ is the optimal number of actions, that is 
\[\alpha = \sum_{i=1}^{\chi_1}(b_i - \zeta_1) \]
\[\implies \zeta_1 =\frac{\sum_{i=1}^{\chi_1} b_i - \alpha}{\chi_1}. \]
Let $\lambda_k$ be the such that  
\[\alpha = \sum_{i=1}^{k}(b_i - \lambda_k) \]
\[\implies \lambda_k =\frac{\sum_{i=1}^{k} b_i - \alpha}{k}. \]

\begin{proposition}
 \[\zeta_1 = \max_{k}\lambda_k\]
\end{proposition}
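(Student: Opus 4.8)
The plan is to regard $\lambda_k = \frac{\sum_{i=1}^k b_i - \alpha}{k}$ as a sequence indexed by $k$, show that it is unimodal (first non-decreasing, then non-increasing), and pin its peak to $k=\chi_1$. Since the paragraph preceding the proposition already records $\zeta_1 = \lambda_{\chi_1}$, identifying $\chi_1$ as the maximiser is exactly what is needed to conclude $\zeta_1 = \max_k \lambda_k$.

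First I would record the elementary recursion. Splitting off the last summand, $\lambda_{k+1} = \frac{k\lambda_k + b_{k+1}}{k+1}$, so $\lambda_{k+1}$ is a convex combination of $\lambda_k$ and $b_{k+1}$, and equivalently
\begin{equation*}
\lambda_{k+1} - \lambda_k = \frac{b_{k+1} - \lambda_k}{k+1}.
\end{equation*}
Hence the step $\lambda_k \to \lambda_{k+1}$ is an increase precisely when the next entry $b_{k+1}$ lies above the current value $\lambda_k$, and a decrease precisely when it lies below.

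Next I would establish unimodality from the fact that $b$ is sorted in decreasing order. For the descending phase: if $b_{k+1}\le\lambda_k$ then the convex-combination form gives $b_{k+1}\le\lambda_{k+1}\le\lambda_k$, and because $b_{k+2}\le b_{k+1}\le\lambda_{k+1}$ we also get $\lambda_{k+2}\le\lambda_{k+1}$; thus once the sequence stops rising it never rises again. For the ascending phase, a symmetric step shows that if $b_{k+1}\ge\lambda_k$ then $b_k\ge b_{k+1}\ge\lambda_k$ forces $\lambda_{k-1}\le\lambda_k\le b_k$, i.e. the sequence was already rising at index $k-1$. These two implications together make $(\lambda_k)$ rise then fall, so its maximum is attained at any index that is both a left- and right-local maximum.

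Finally I would fix the peak at $\chi_1$ using the sandwich $b_{\chi_1+1}\le\zeta_1\le b_{\chi_1}$ from item \ref{app:wp:zetaBound} of Lemma \ref{LpwaterPouring}, together with $\zeta_1=\lambda_{\chi_1}$. The right inequality $b_{\chi_1}\ge\lambda_{\chi_1}$, plugged into $\lambda_{\chi_1}=\frac{(\chi_1-1)\lambda_{\chi_1-1}+b_{\chi_1}}{\chi_1}$, yields $\lambda_{\chi_1-1}\le\lambda_{\chi_1}\le b_{\chi_1}$; propagating leftward by the ascending-phase implication gives $\lambda_1\le\cdots\le\lambda_{\chi_1}$. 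The left inequality $b_{\chi_1+1}\le\lambda_{\chi_1}$ gives $\lambda_{\chi_1+1}\le\lambda_{\chi_1}$, and the descending-phase implication propagates this rightward to $\lambda_{\chi_1}\ge\lambda_{\chi_1+1}\ge\cdots$. Hence $\lambda_{\chi_1}=\max_k\lambda_k=\zeta_1$. The one point I would handle with care — and the only real subtlety — is tracking ties (plateaus where $b_{k+1}=\lambda_k$) with non-strict inequalities throughout, and using each of the two boundary inequalities of the sandwich in the correct direction to drive the correct monotone phase.
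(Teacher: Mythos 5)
Your proof is correct, and it reaches the same structural conclusion as the paper — the sequence $(\lambda_k)$ peaks at $k=\chi_1$ — but by a noticeably more self-contained route. The paper simply asserts the ascending chain $\lambda_1\le\cdots\le\lambda_{\chi_1}$ as a consequence of Lemma \ref{LpwaterPouring} (where it is only implicit), and then handles the descending side with the $m$-step difference identity
\begin{equation*}
\lambda_{k}-\lambda_{k+m} = \frac{m}{k+m}\Bigl(\lambda_k -\frac{\sum_{i=1}^m b_{k+i}}{m}\Bigr)
\end{equation*}
combined with the stopping criterion $\lambda_{\chi_1}>b_{\chi_1+1}$. You instead derive everything from the one-step convex-combination recursion $\lambda_{k+1}=\frac{k\lambda_k+b_{k+1}}{k+1}$: the sortedness of $b$ gives unimodality (once the sequence stops rising it never rises again, and a rise at step $k$ forces a rise at step $k-1$), and the sandwich $b_{\chi_1+1}\le\zeta_1\le b_{\chi_1}$ from item \ref{app:wp:zetaBound} then seeds both monotone phases at $k=\chi_1$. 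What your version buys is that the ascending half is actually proved rather than cited, and the single identity $\lambda_{k+1}-\lambda_k=\frac{b_{k+1}-\lambda_k}{k+1}$ does the work of the paper's heavier telescoping computation; the induction propagating $\lambda_k\le b_k$ leftward (via $(k-1)\lambda_{k-1}=k\lambda_k-b_k\le (k-1)b_k\le(k-1)b_{k-1}$) is the one step you should write out explicitly, since it is what makes the leftward propagation legitimate. Your handling of ties with non-strict inequalities is consistent with the non-strict sandwich and causes no problem.
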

 \begin{proof}
 From lemma \ref{LpwaterPouring}, we have 
 \[\lambda_1 \leq \lambda_2 \cdots \leq\lambda_{\chi_1}.\]
Now, we have  
\begin{equation}\begin{split}\label{eq:Telelambda}
    \lambda_{k}-\lambda_{k+m} &= \frac{\sum_{i=1}^kb_i- \alpha}{k}-\frac{\sum_{i=1}^{k+m}b_i- \alpha}{k+m}\\ 
    &= \frac{\sum_{i=1}^kb_i- \alpha}{k}-\frac{\sum_{i=1}^{k}b_i- \alpha}{k+m} -\frac{\sum_{i=1}^m b_{k+i}}{k+m}\\ 
    &= \frac{m(\sum_{i=1}^kb_i- \alpha}{k(k+m))} -\frac{\sum_{i=1}^m b_{k+i}}{k+m}\\
    &= \frac{m}{k+m}(\frac{\sum_{i=1}^kb_i- \alpha}{k} -\frac{\sum_{i=1}^m b_{k+i}}{m})\\
    &= \frac{m}{k+m}(\lambda_k -\frac{\sum_{i=1}^m b_{k+i}}{m})\\
\end{split}
\end{equation}
 From lemma \ref{LpwaterPouring}, we also know the stopping criteria for $\chi_1$, that is 
 \[\lambda_{\chi_1} > b_{\chi_1 +1}\]
 \[\implies \lambda_{\chi_1} > b_{\chi_1 + i}, \qquad  i\geq 1,\qquad \text{(as $b_i$ are in descending order)}  \]
 \[\implies \lambda_{\chi_1} > \frac{\sum_{i=1}^m b_{\chi_1+i}}{m}, \qquad \forall m\geq 1.\]
 Combining it with the \eqref{eq:Telelambda}, for all $m
 \geq 0$ , we get 
 \begin{equation}\begin{split}
 \lambda_{\chi_1}-\lambda_{\chi_1+m} &= \frac{m}{\chi_1+m}(\lambda_{\chi_1} -\frac{\sum_{i=1}^m b_{\chi_1+i}}{m})\\
 &\geq 0\\
 \implies \lambda_{\chi_1}&\geq \lambda_{\chi_1+m} 
 \end{split}
 \end{equation}
 Hence, we get the desired result, 
\[\zeta_1 = \lambda_{\chi_1} = \max_{k}\lambda_k.\]
 \end{proof} 

\subsubsection{Special case: $p=\infty$}
For $p=\infty$, by definition, we have
\begin{equation}\begin{aligned}
  \zeta_\infty(b)&= \max_{c}-\alpha \lVert c\rVert_{1} + \langle c,b\rangle \qquad \text{such that }\qquad \sum_{a\in \mathcal{A}}c_a = 1,\quad c\succeq 0.\\
= &\max_{c}-\alpha + \langle c,b\rangle \qquad \text{such that }\qquad \sum_{a\in \mathcal{A}}c_a = 1,\quad c\succeq 0. \\
= & -\alpha + \max_{i}b_i
\end{aligned}\end{equation}

\subsubsection{Special case: $p=2$}
The problem is discussed in great details in \cite{anava2016k}, here we outline the proof.
For $p=2$, we have
\begin{equation}\begin{aligned}
  \zeta_2&= \max_{c}-\alpha \lVert c\rVert_{2} + \langle c,b\rangle \qquad \text{such that }\qquad \sum_{a\in \mathcal{A}}c_a = 1,\quad c\succeq 0.\\
\end{aligned}\end{equation}
Let $\lambda_k$ be the solution of the following equation
\begin{equation}\begin{split}
\alpha^{2} &= \sum_{i=1}^k(b_i - \lambda)^{2},\qquad \lambda \leq b_k\\
    &=  k\lambda^2 -2\sum_{i=1}^k\lambda b_i. +\sum_{i=1}^k(b_i)^2, \qquad \lambda \leq b_k \\
 \implies \lambda_k &= \frac{\sum_{i=1}^k b_i \pm \sqrt{(\sum_{i=1}^kb_i)^2 - k(\sum_{i=1}^k(b_i)^2 - \alpha^{2} )}}{k}, \quad \text{and } \quad \qquad \lambda_k \leq b_k \\
  &= \frac{\sum_{i=1}^k b_i - \sqrt{(\sum_{i=1}^kb_i)^2 - k(\sum_{i=1}^k(b_i)^2 - \alpha^{2} )}}{k}\\
  &= \frac{\sum_{i=1}^k b_i}{k} - \sqrt{\alpha^{2} -\sum_{i=1}^k(b_i -\frac{\sum_{i=1}^kb_i}{k})^2}\\
\end{split}
\end{equation}
From lemma \ref{LpwaterPouring}, we know
 \[\lambda_1 \leq \lambda_2 \cdots \leq \lambda_{\chi_2} = \zeta_2\]
 where $\chi_2$ calculated in two ways: a) 
 \[\chi_2 = \max_{m}\{m| \sum_{i=1}^m(b_i-b_m)^2\leq \alpha^2\}\]
 b) \[\chi_2 = \min_{m}\{m |\lambda_{m}\leq b_{m+1}\}\]
 We proceed greedily until stopping condition is met in lemma \ref{LpwaterPouring}. Concretely, it is illustrated in algorithm \ref{alg:f2}.

\subsection{$L_1$ Water Pouring lemma}\label{app:L1waterpouringLemma}
In this section, we re-derive the above water pouring lemma for 
$p=1$ from scratch, just for sanity check. As in the above proof, there is a possibility of some breakdown, as we had take limits $q\to\infty$. We will see that all the above results for $p=1$ too.\\

Let $b \in\mathbb{R}^A$ be such that its components are in decreasing order, i,e $b_{i}\geq b_{i+1}$ and
\begin{equation}\label{eq:LpwaterPouringlemma}
 \zeta_1 := \max_{c}-\alpha \lVert c\rVert_\infty + \langle c,b\rangle \qquad \text{such that }\qquad \sum_{i=1}^Ac_i = 1,\quad c_i\geq 0,\quad \forall i.   
\end{equation}
 Lets fix any vector $c\in\mathbb{R}^A$, and let $k_1 :=\lfloor\frac{1}{\max_{i}c_i}\rfloor$ and let \[c^1_i = \begin{cases}\max_{i}c_i\qquad&\text{if}\quad i\leq k_1\\
1- k_1\max_{i}c_i\qquad&\text{if}\quad i= k_1+1\\
0\qquad&\text{else}\\\end{cases}  \]
Then we have,
 \begin{equation}\begin{aligned}
     -\alpha \lVert c\rVert_\infty + \langle c,b\rangle =& -\alpha \max_{i}c_i + \sum_{i=1}^A c_ib_i\\
     \leq&-\alpha \max_{i}c_i + \sum_{i=1}^A c^1_ib_i, \qquad\text{(recall $b_i$ is in decreasing order)}\\
     =& -\alpha \lVert c^1\rVert_\infty + \langle c^1,b\rangle
 \end{aligned}\end{equation}
  Now, lets define $c^2\in\mathbb{R}^A$. Let 
  \[k_2 = \begin{cases}k_1+1\qquad&\text{if}\quad \frac{\sum_{i=1}^{k_1}b_i -\alpha}{k_1} \leq b_{k+1}\\
k_1\qquad &\text{else}\\\end{cases}  \]
and let $c^2_i = \frac{\mathbf{1}(i\leq k_2)}{k_2}$. Then we have,
\begin{equation}\begin{aligned}
     -\alpha \lVert c^1\rVert_\infty + \langle c^1,b\rangle =&-\alpha \max_{i}c_i + \sum_{i=1}^A c^1_ib_i\\
     =&-\alpha \max_{i}c_i + \sum_{i=1}^{k_1}\max_{i}c_i b_i +(1-k_1\max_{i}c_i) b_{k_1 +1},\qquad \text{(by definition of $c^1$)}\\
    =&(\frac{-\alpha  +  \sum_{i=1}^{k_1}b_i}{k_1})k_1\max_{i}c_i + b_{k_1 +1}(1-k_1\max_{i}c_i),\qquad \text{(re-arranging)}\\
    \leq & \frac{-\alpha  +  \sum_{i=1}^{k_2}b_i}{k_2}\\
     =& -\alpha \lVert c^2\rVert_\infty + \langle c^2,b\rangle
 \end{aligned}\end{equation}
 The last inequality comes from the definition of $k_2$ and $c^2$. So we conclude that a optimal solution is uniform over some actions, that is
 \begin{equation}\begin{split}
     \zeta_1 =& \max_{c\in\mathcal{C}}-\alpha \lVert c\rVert_\infty + \langle c,b\rangle\\
     =& \max_{k}\bigm(\frac{-\alpha+\sum_{i=1}^kb_i}{k}\bigm)\\
 \end{split}
 \end{equation}
 where $\mathcal{C}:=\{c^k\in\mathbb{R}^A|c^k_i = \frac{\mathbf{1}(i\leq k)}{k} \}$ is set of uniform actions. Rest all the properties follows same as $L_p$ water pouring lemma.

\section{Robust Value Iteration (Main)} \label{app:srLp}
In this section, we will discuss the main results from the paper except for time complexity results. It contains the proofs of the results presented in the main body and also some other corollaries/special cases. \\

\subsection{$(\mathtt{sa})$-rectangular robust policy evaluation and improvement}
 \begin{theorem*} $(\mathtt{sa})$-rectangular $L_p$ robust Bellman operator is equivalent to reward regularized (non-robust) Bellman operator, that is 
\begin{equation*}\begin{aligned}
    (\mathcal{T}^\pi_{\mathcal{U}^{\mathtt{sa}}_p} v)(s)  =& \sum_{a}\pi(a|s)[  -\alpha_{s,a} -\gamma\beta_{s,a}\kappa_q(v)  +R_0(s,a) +\gamma \sum_{s'}P_0(s'|s,a)v(s')], \qquad \text{and}\\
    (\mathcal{T}^*_{\mathcal{U}^{\mathtt{sa}}_p} v)(s)  =& \max_{a\in\mathcal{A}}[  -\alpha_{s,a} -\gamma\beta_{s,a}\kappa_q(v)  +R_0(s,a) +\gamma \sum_{s'}P_0(s'|s,a)v(s')],
\end{aligned}\end{equation*}
where $\kappa_p$ is defined in \eqref{def:kp}.
\end{theorem*}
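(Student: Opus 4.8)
The plan is to exploit $(\mathtt{sa})$-rectangularity to reduce the joint minimization over $R,P\in\mathcal{U}^{\mathtt{sa}}_p$ to independent scalar and vector minimizations at each state-action pair, and then to evaluate each of these using the kernel-noise characterization already established in Lemma \ref{regfn}.

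First I would write $R(s,a)=R_0(s,a)+r_{s,a}$ and $P(\cdot|s,a)=P_0(\cdot|s,a)+p_{s,a}$ with $r_{s,a}\in\mathcal{R}_{s,a}$ and $p_{s,a}\in\mathcal{P}_{s,a}$, and substitute into the definition of $(\mathcal{T}^\pi_{\mathcal{U}^{\mathtt{sa}}_p}v)(s)$. Because the uncertainty set factorizes as $\mathcal{R}=\times_{s,a}\mathcal{R}_{s,a}$, $\mathcal{P}=\times_{s,a}\mathcal{P}_{s,a}$, and the coefficients $\pi(a|s)\geq 0$ are fixed, the minimization distributes over the sum: each summand depends on its own $(r_{s,a},p_{s,a})$ and can be minimized independently. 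Factoring out the non-negative $\pi(a|s)$ then yields
\[(\mathcal{T}^\pi_{\mathcal{U}^{\mathtt{sa}}_p}v)(s)=\sum_a \pi(a|s)\Bigm[R_0(s,a)+\gamma\langle P_0(\cdot|s,a),v\rangle+\min_{r_{s,a}}r_{s,a}+\gamma\min_{p_{s,a}}\langle p_{s,a},v\rangle\Bigm].\]

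Second I would evaluate the two inner problems. Since $r_{s,a}$ is a scalar with $|r_{s,a}|\leq\alpha_{s,a}$, the reward term contributes $\min_{|r_{s,a}|\leq\alpha_{s,a}}r_{s,a}=-\alpha_{s,a}$. For the kernel term, the constraints $\sum_{s'}p_{s,a}(s')=0$ and $\lVert p_{s,a}\rVert_p\leq\beta_{s,a}$ are exactly the hypotheses of Lemma \ref{regfn} with $\epsilon=\beta_{s,a}$, giving $\min_{p_{s,a}}\langle p_{s,a},v\rangle=-\beta_{s,a}\kappa_q(v)$. Substituting both contributions reproduces the claimed reward-regularized form for $\mathcal{T}^\pi_{\mathcal{U}^{\mathtt{sa}}_p}$.

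Finally, for the optimal operator I would note that the inner minimum just computed is linear in the action distribution $\pi_s$, so $(\mathcal{T}^*_{\mathcal{U}^{\mathtt{sa}}_p}v)(s)=\max_{\pi_s\in\Delta_{\mathcal{A}}}\sum_a\pi_s(a)g(s,a)$, where $g(s,a)$ denotes the penalized $Q$-value in the brackets; a linear objective over the simplex attains its maximum at a vertex, hence equals $\max_a g(s,a)$, as claimed. The only genuinely nontrivial step is the kernel-noise minimization, and that work is entirely absorbed by Lemma \ref{regfn}; the one subtlety worth flagging is that condition A ($\sum_{s'}p_{s,a}(s')=0$) must be retained, since dropping it would replace $\kappa_q(v)$ by $\lVert v\rVert_q$ and alter the regularizer, which is precisely the correction over \cite{derman2021twice}.
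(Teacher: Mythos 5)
Your proposal is correct and follows essentially the same route as the paper's own proof: separate the nominal terms from the noise, use $(\mathtt{sa})$-rectangularity to push the minimization inside the sum over actions, evaluate the scalar reward noise to $-\alpha_{s,a}$ and the kernel noise to $-\gamma\beta_{s,a}\kappa_q(v)$ via Lemma \ref{regfn}, and then obtain the optimal operator by maximizing a linear objective over the simplex at a vertex. Your remark that retaining condition A is what turns $\lVert v\rVert_q$ into $\kappa_q(v)$ is exactly the point the paper emphasizes in its comparison with prior work.
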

\begin{proof}
From definition robust Bellman operator and  $\mathcal{U}^{\mathtt{sa}}_p = (R_0 +\mathcal{R})\times(P_0 +\mathcal{P})$, we have,
\begin{equation}\begin{aligned}
   (&\mathcal{T}^\pi_{\mathcal{U}^{\mathtt{sa}}_p} v)(s)=\min_{{R,P\in\mathcal{U}^{\mathtt{sa}}_p}}\sum_{a}\pi(a|s)\Bigm[R(s,a) + \gamma \sum_{s'}P(s'|s,a)v(s')\Bigm] \\
    &=\sum_{a}\pi(a|s)\Bigm[R_0(s,a) + \gamma \sum_{s'}P_0(s'|s,a)v(s')\Bigm]+   \\
    &\qquad\qquad \min_{{p\in\mathcal{P}},r\in\mathcal{R}}\sum_{a}\pi(a|s)\Bigm[r(s,a) +\gamma \sum_{s'}p(s'|s,a)v(s')\Bigm] ,\\
    & \quad \text{(from $(\mathtt{sa})$-rectangularity, we get)}\\
    &=\sum_{a}\pi(a|s)\Bigm[R_0(s,a) + \gamma \sum_{s'}P_0(s'|s,a)v(s')\Bigm]+ \\
    &\qquad\qquad \sum_{a}\pi(a|s)\underbrace{\min_{{p_{s,a}\in\mathcal{P}_{sa}},r_{s,a}\in\mathcal{R}_{s,a}}\Bigm[r_{s,a} + \gamma \sum_{s'}p_{s,a}(s')v(s')\Bigm]}_{:=\Omega_{sa}(v)} 
\end{aligned}\end{equation}
Now we focus on regularizer function $\Omega$, as follows
\begin{equation}\begin{aligned}
    \Omega_{sa}(v)=&\min_{{p_{s,a}\in\mathcal{P}_{s,a}},r_{s,a}\in\mathcal{R}_{s,a}}\Bigm[r_{s,a} + \gamma \sum_{s'}p_{s,a}(s')v(s')\Bigm] \\
    =&\min_{r_{s,a}\in\mathcal{R}_{s,a}}r_{s,a} + \gamma\min_{{p_{s,a}\in\mathcal{P}_{sa}}} \sum_{s'}p_{s,a}(s')v(s') \\
    &=- \alpha_{s,a} +\gamma\min_{\lVert p_{sa}\rVert_p\leq \beta_{s,a},\sum_{s'}p_{sa}(s')=0} \langle p_{s,a}, v\rangle,\\
    =&- \alpha_{s,a} -\gamma \beta_{s,a}\kappa_q(v), \qquad \text{(from lemma \ref{regfn}).}\\
\end{aligned}\end{equation}
Putting back, we have
\begin{equation*}
    (\mathcal{T}^\pi_{\mathcal{U}^{\mathtt{sa}}_p} v)(s)=\sum_{a}\pi(a|s)\Bigm[- \alpha_{s,a} -\gamma \beta_{s,a}\kappa_q(v)+R_0(s,a) + \gamma \sum_{s'}P_0(s'|s,a)v(s')\Bigm]
\end{equation*}
Again, reusing above results in optimal robust operator, we have
\begin{equation}\begin{aligned}
    (\mathcal{T}^*_{\mathcal{U}^{\mathtt{sa}}_p} v)(s) &= \max_{\pi_s\in\Delta_\mathcal{A}}\min_{{R,P\in\mathcal{U}^{\mathtt{sa}}_p}}\sum_{a}\pi_s(a)\Bigm[R(s,a) + \gamma \sum_{s'}P(s'|s,a)v(s')\Bigm]\\
    &=\max_{\pi_s\in\Delta_\mathcal{A}}\sum_{a}\pi_s(a)\Bigm[- \alpha_{s,a} -\gamma \beta_{s,a}\kappa_p(v)+R_0(s,a) + \gamma \sum_{s'}P_0(s'|s,a)v(s')\Bigm]\\
    &=\max_{a\in\mathcal{A}}\Bigm[- \alpha_{s,a} -\gamma \beta_{s,a}\kappa_q(v)+R_0(s,a) + \gamma \sum_{s'}P_0(s'|s,a)v(s')\Bigm]
\end{aligned}\end{equation}
The claim is proved.
\end{proof}

\subsection{$\mathtt{S}$-rectangular robust policy evaluation}\label{app:sLprpe}
\begin{theorem*} $\mathtt{S}$-rectangular $L_p$ robust Bellman operator is equivalent to reward regularized (non-robust) Bellman operator, that is 
\begin{equation*}
    (\mathcal{T}^\pi_{\mathcal{U}^s_p} v)(s)  =   -\Bigm(\alpha_s +\gamma\beta_{s}\kappa_q(v)\Bigm)\lVert\pi(\cdot|s)\rVert_q  +\sum_{a}\pi(a|s)\Bigm(R_0(s,a) +\gamma \sum_{s'}P_0(s'|s,a)v(s')\Bigm)
\end{equation*}
where $\kappa_p$ is defined in \eqref{def:kp} and $\lVert \pi(\cdot|s)\rVert_q$ is $q$-norm of the vector $\pi(\cdot|s)\in\Delta_{\mathcal{A}}$.
\end{theorem*}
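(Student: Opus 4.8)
The plan is to follow the same two-step template used for the $(\mathtt{sa})$-rectangular case: first peel off the nominal transition and reward, leaving a pure noise-minimization problem, and then evaluate that noise term in closed form. Writing $Q_0(s,a) := R_0(s,a) + \gamma\sum_{s'}P_0(s'|s,a)v(s')$, the linearity of the inner objective in $(R,P)$ together with $\mathcal{U}^s_p = (R_0+\mathcal{R})\times(P_0+\mathcal{P})$ lets me split the operator as
\begin{equation*}
(\mathcal{T}^\pi_{\mathcal{U}^s_p}v)(s) = \sum_a \pi(a|s)Q_0(s,a) + \min_{r_s\in\mathcal{R}_s}\sum_a\pi(a|s)r_s(a) + \gamma\min_{p_s\in\mathcal{P}_s}\sum_a\pi(a|s)\sum_{s'}p_s(s',a)v(s'),
\end{equation*}
where the reward and kernel minimizations decouple because $\mathcal{R}_s$ and $\mathcal{P}_s$ are independent factors of $\mathcal{U}^s_p$.

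The reward term is immediate: minimizing the linear functional $\langle\pi(\cdot|s),r_s\rangle$ over the $L_p$ ball $\lVert r_s\rVert_p\leq\alpha_s$ is a direct application of Hölder's inequality (the minimum of a linear form over an $L_p$ ball equals $-\alpha_s$ times its dual $L_q$ norm), which yields $-\alpha_s\lVert\pi(\cdot|s)\rVert_q$ and reproduces the first half of the claimed regularizer.

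The main obstacle is the kernel term, where the single constraint $\lVert p_s\rVert_p\leq\beta_s$ couples the columns $p_s(\cdot,a)$ across actions while each column must separately lie in the zero-sum hyperplane. I would resolve this by a two-level optimization. Fixing the per-column budgets $\eta_a := \lVert p_s(\cdot,a)\rVert_p$, the joint constraint becomes $\sum_a\eta_a^p\leq\beta_s^p$, and because $\pi(a|s)\geq 0$ the objective separates across columns as $\sum_a\pi(a|s)\min_{\lVert c\rVert_p\leq\eta_a,\,\sum_{s'}c(s')=0}\langle c,v\rangle$. Lemma \ref{regfn} evaluates each inner minimum exactly as $-\eta_a\kappa_q(v)$, so the problem collapses to
\begin{equation*}
-\kappa_q(v)\,\max_{\eta\succeq 0}\Big\{\textstyle\sum_a\pi(a|s)\eta_a : \sum_a\eta_a^p\leq\beta_s^p\Big\}.
\end{equation*}
A second application of Hölder bounds this maximum by $\lVert\pi(\cdot|s)\rVert_q\,\lVert\eta\rVert_p\leq\beta_s\lVert\pi(\cdot|s)\rVert_q$, with equality at $\eta_a\propto\pi(a|s)^{q-1}$, so the kernel term equals $-\gamma\beta_s\kappa_q(v)\lVert\pi(\cdot|s)\rVert_q$.

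Summing the three pieces gives the stated identity. The delicate points to verify carefully are that the column budgets can be optimized independently (which relies on $\pi(a|s)\geq 0$ and on the $p$-th power of the norm being additive across columns) and that Lemma \ref{regfn} is invoked with the correct correspondence between the $L_p$ noise radius $\eta_a$ and the $q$-variance $\kappa_q(v)$.
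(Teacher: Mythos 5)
Your proposal is correct and follows essentially the same route as the paper: split off the nominal $Q_0$, handle the reward noise by one application of H\"older over the $L_p$ ball, and handle the kernel noise by the same two-level scheme the paper uses (allocate per-action budgets summing to $\beta_s^p$ in $p$-th power, evaluate each inner minimum via Lemma \ref{regfn} as $-\eta_a\kappa_q(v)$, then apply H\"older a second time to the budget allocation); your $\eta_a$ is the paper's $\beta_{s,a}$. The delicate points you flag --- nonnegativity of $\pi(a|s)$ to interchange min and sum, and additivity of $\lVert\cdot\rVert_p^p$ across columns --- are exactly the facts the paper's derivation relies on implicitly.
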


\begin{proof}
From definition of robust Bellman operator and  $\mathcal{U}^{\mathtt{s}}_p = (R_0 +\mathcal{R})\times(P_0 +\mathcal{P})$, we have
\begin{equation}\begin{aligned}
    (&\mathcal{T}^\pi_{\mathcal{U}^s_p} v)(s)  = 
    \min_{{R,P\in\mathcal{U}^s_p}}\sum_{a}\pi(a|s)\Bigm[R(s,a) + \gamma \sum_{s'}P(s'|s,a)v(s')\Bigm] \\
    &=\sum_{a}\pi(a|s)\Bigm[\underbrace{R_0(s,a) + \gamma \sum_{s'}P_0(s'|s,a)v(s')}_{\text{nominal values}}\Bigm] \\
    &\qquad\qquad +\min_{{p\in\mathcal{P}},r\in\mathcal{R}}\sum_{a}\pi(a|s)\Bigm[r(s,a) + \gamma \sum_{s'}p(s'|s,a)v(s')\Bigm]\\
    &\qquad \text{(from $\mathtt{s}$-rectangularity we have)}\\
    &=\sum_{a}\pi(a|s)\Bigm[R_0(s,a) + \gamma \sum_{s'}P_0(s'|s,a)v(s')\Bigm] \\
    & \qquad\qquad +  \underbrace{\min_{{p_s\in\mathcal{P}_s},r_s\in\mathcal{R}_s}\sum_{a}\pi(a|s)\Bigm[r_s(a) + \gamma \sum_{s'}p_s(s'|a)v(s')\Bigm]}_{:=\Omega_s(\pi_s,v)}
\end{aligned}\end{equation}
where we denote $\pi_s(a) = \pi(a|s)$ as a shorthand. Now we calculate the regularizer function as follows

\begin{equation}\begin{aligned}
    \Omega_s(\pi_s,v):=&\min_{r_s\in\mathcal{R}_s,p_s\in\mathcal{P}_s}\langle r_s + \gamma v^T p_s,\pi_s \rangle =\min_{r_s\in\mathcal{R}_s}\langle r_s,\pi_s\rangle + \gamma\min_{p_s\in\mathcal{P}_s} v^T p_s\pi_s   \\
    &=-\alpha_s\lVert \pi_s\rVert_q +\gamma\min_{p_s\in\mathcal{P}_s} v^T p_s\pi_s, \qquad \text{(using Holders inequality, where $\frac{1}{p} + \frac{1}{q} = 1$  )}\\
    =&- \alpha_s\lVert \pi_s\rVert_q +\gamma \min_{p_s\in\mathcal{P}_s}\sum_{a}\pi_s(a)\langle p_{s,a}, v\rangle\\
    =&- \alpha_s\lVert \pi_s\rVert_q +\gamma \min_{\sum_{a}(\beta_{s,a})^p \leq (\beta_s)^p}\quad \min_{\lVert p_{sa}\rVert_p\leq \beta_{s,a},\sum_{s'}p_{sa}(s')=0 }\quad\sum_{a}\pi_s(a)\langle p_{s,a}, v\rangle  \\
    =&- \alpha_s\lVert \pi_s\rVert_q +\gamma \min_{\sum_{a}(\beta_{s,a})^p \leq (\beta_s)^p}\sum_{a}\pi_s(a)\quad \min_{\lVert p_{sa}\rVert_p\leq \beta_{s,a},\sum_{s'}p_{sa}(s')=0 }\quad\langle p_{s,a}, v\rangle  \qquad \text{}\\
    =&- \alpha_s\lVert \pi_s\rVert_q +\gamma \min_{\sum_{a}(\beta_{sa})^p \leq (\beta_s)^p}\sum_{a}\pi_s(a)(-\beta_{sa}\kappa_p(v))  \qquad \text{ ( from lemma \ref{regfn})}\\
     =&- \alpha_s\lVert \pi_s\rVert_q -\gamma \kappa_q(v)\max_{\sum_{a}(\beta_{sa})^p \leq (\beta_s)^p}\sum_{a}\pi_s(a)\beta_{sa}  \qquad \text{}\\
      =&- \alpha_s\lVert \pi_s\rVert_q -\gamma \kappa_p(v)\lVert \pi_s\rVert_q\beta_{s}  \qquad \text{(using Holders)}\\
      =&- (\alpha_s +\gamma\beta_{s}\kappa_q(v))\lVert \pi_s\rVert_q .  \\
\end{aligned}\end{equation}
Now putting above values in robust operator, we have
\begin{equation*}\begin{aligned}
    (\mathcal{T}^\pi_{\mathcal{U}^s_p} v)(s) 
     &=- \Bigm(\alpha_s +\gamma\beta_{s}\kappa_q(v)\Bigm)\lVert \pi(\cdot|s)\rVert_q + \sum_{a}\pi(a|s)\Bigm(R_0(s,a) + \gamma \sum_{s'}P_0(s'|s,a)v(s')\Bigm).\\
\end{aligned}\end{equation*}
\end{proof}

\subsection{$\mathtt{s}$-rectangular robust policy improvement}

Reusing robust policy evaluation results in section \ref{app:sLprpe}, we have
\begin{equation}\begin{aligned}
    (\mathcal{T}^*_{\mathcal{U}^{\mathtt{s}}_p} v)(s) &= \max_{\pi_s\in\Delta_\mathcal{A}}\min_{{R,P\in\mathcal{U}^{\mathtt{sa}}_p}}\sum_{a}\pi_s(a)\Bigm[R(s,a) + \gamma \sum_{s'}P(s'|s,a)v(s')\Bigm]\\
    &=\max_{\pi_s\in\Delta_{\mathcal{A}}}\Bigm[  -(\alpha_s +\gamma\beta_{s}\kappa_q(v))\lVert \pi_s\rVert_q  +\sum_{a}\pi_s(a)(R(s,a)  + \gamma \sum_{s'}P(s'|s,a) v(s'))\Bigm].
\end{aligned}\end{equation}
Observe that, we have the following form
\begin{equation}\label{app:eq:wp:ref}
(\mathcal{T}^*_{\mathcal{U}^{\mathtt{s}}_p} v)(s)= \max_{c}-\alpha \lVert c\rVert_q + \langle c,b\rangle \qquad \text{such that }\qquad \sum_{i=1}^A c_i = 1,\quad c\succeq 0,
\end{equation}
where $\alpha =\alpha_s +\gamma\beta_{s}\kappa_q(v) $ and $b_i =R(s,a_i)  + \gamma \sum_{s'}P(s'|s,a_i) v(s') $. Now all the results below, follows from water pouring lemma ( lemma \ref{LpwaterPouring}).

\begin{theorem*}(Policy improvement) The optimal robust Bellman operator can be evaluated in following ways.
\begin{enumerate}
    \item $(\mathcal{T}^*_{\mathcal{U}^s_p}v)(s)$ is the solution of the following equation that can be found using binary search between $\bigm[\max_{a}Q(s,a)-\sigma, \max_{a}Q(s,a)\bigm]$,

\begin{equation}
    \sum_{a}\bigm(Q(s,a) - x\bigm)^p\mathbf{1}\bigm( Q(s,a) \geq x\bigm)  = \sigma^p.
\end{equation}
\item $(\mathcal{T}^*_{\mathcal{U}^s_p}v)(s)$ and $\chi_p(v,s)$ can also be computed through algorithm \ref{alg:fp}.
\end{enumerate}
where $\sigma = \alpha_s + \gamma\beta_s\kappa_q(v),$ and $Q(s,a)= R_0(s,a) + \gamma\sum_{s'} P_0(s'|s,a)v(s')$. \end{theorem*}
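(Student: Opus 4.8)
The plan is to recognize that, once the inner minimization over the uncertainty set is resolved, the operator evaluation is exactly an instance of the $L_p$ water-pouring problem of Lemma~\ref{LpwaterPouring}. Concretely, I would first invoke the policy evaluation result (Theorem~\ref{rs:SLpPlanning}) to rewrite
\[
(\mathcal{T}^*_{\mathcal{U}^{\mathtt{s}}_p} v)(s) = \max_{\pi_s \in \Delta_{\mathcal{A}}} \Bigm[ -\bigm(\alpha_s + \gamma\beta_s \kappa_q(v)\bigm)\lVert \pi_s \rVert_q + \sum_a \pi_s(a) Q(s,a)\Bigm],
\]
and then identify this with \eqref{app:eq:LpwaterPouringlemma} under the correspondence $\alpha = \sigma = \alpha_s + \gamma\beta_s\kappa_q(v)$, $c = \pi_s$, and $b_i = Q(s,a_i)$ with actions sorted so that $Q(s,a_1) \geq \cdots \geq Q(s,a_A)$. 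Once this identification is made, both parts follow by specializing the relevant items of Lemma~\ref{LpwaterPouring} and substituting the values of $\alpha$ and $b$.

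For the first part, I would apply item~\ref{app:wp:zeta} of Lemma~\ref{LpwaterPouring}, which states that the optimal value $\zeta_p = (\mathcal{T}^*_{\mathcal{U}^s_p}v)(s)$ satisfies $\alpha^p = \sum_{b_i \geq \zeta_p}(b_i - \zeta_p)^p$; writing the indicator explicitly and substituting $\alpha = \sigma$, $b_i = Q(s,a_i)$ yields exactly \eqref{eq:rve:val}. The validity of the binary search then follows from item~\ref{app:wp:zetaBinarySearch}: the map $x \mapsto \sum_{a}(Q(s,a)-x)^p\mathbf{1}(Q(s,a)\geq x)$ is continuous and strictly decreasing, vanishes at $x = \max_a Q(s,a)$, and is at least $\sigma^p$ at $x = \max_a Q(s,a) - \sigma$, so it crosses $\sigma^p$ exactly once in $[\max_a Q(s,a) - \sigma, \max_a Q(s,a)]$.

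For the second part, I would argue that Algorithm~\ref{alg:fp} is precisely the greedy realization of the water-pouring solution, with its correctness supplied by items~\ref{app:wp:chiSol}, \ref{app:wp:greedyInclusion}, and \ref{app:wp:stoppingCondition}. Item~\ref{app:wp:chiSol} guarantees that $\sum_{i=1}^k (Q(s,a_i)-x)^p = \sigma^p$ has a solution $\lambda_k \leq Q(s,a_k)$ \emph{iff} $k \leq \chi_p$, so each $\lambda_k$ produced in the loop is well defined exactly while the true support size has not been overshot. The greedy inclusion condition (item~\ref{app:wp:greedyInclusion}) certifies that whenever the current $\lambda_k$ leaves room for the next action, that action is also active, while the stopping condition (item~\ref{app:wp:stoppingCondition}) certifies that once there is no such room we have reached $k = \chi_p$; at that point $\lambda_k = \zeta_p = (\mathcal{T}^*_{\mathcal{U}^s_p}v)(s)$ by item~\ref{app:wp:zeta}. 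I would also record that for $p = 1, 2$ equation \eqref{eq:alg:fp:lk} is linear, respectively quadratic, hence solved in closed form, recovering the $p=2$ case of \cite{anava2016k}.

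The genuine content has already been discharged inside Lemma~\ref{LpwaterPouring} (the KKT analysis producing item~\ref{app:wp:zeta} and the inclusion/stopping dichotomy), so the main obstacle here is essentially bookkeeping: I must confirm that the loop guard of Algorithm~\ref{alg:fp} faithfully tracks the alternative of items~\ref{app:wp:greedyInclusion}--\ref{app:wp:stoppingCondition}, i.e.\ that comparing $\lambda_k$ against the appropriate sorted $Q$-value terminates the loop at exactly $k = \chi_p$ and never attempts to solve \eqref{eq:alg:fp:lk} for an infeasible $k > \chi_p$. Verifying this correspondence carefully---especially the indexing at the boundary and the edge case $\sigma = 0$, where $\chi_p = 1$ and the operator collapses to the non-robust $\max_a Q(s,a)$---is the step that requires the most attention.
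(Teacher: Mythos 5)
Your proposal is correct and follows essentially the same route as the paper: reduce the policy improvement step to the $L_p$ water-pouring problem via the policy evaluation theorem, then invoke the relevant items of Lemma~\ref{LpwaterPouring} (the KKT-derived characterization of $\zeta_p$ for part one, and the greedy inclusion/stopping dichotomy for the correctness of Algorithm~\ref{alg:fp} in part two). Your write-up is in fact somewhat more explicit than the paper's own proof sketch about the monotonicity argument justifying the binary search interval and about the loop-guard bookkeeping at $k=\chi_p$.
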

\begin{proof}
The first part follows from lemma \ref{LpwaterPouring}, point \ref{app:wp:zeta}. The second part follows from lemma \ref{LpwaterPouring}, point \ref{app:wp:greedyInclusion} (greedy inclusion ) and point \ref{app:wp:stoppingCondition} (stopping condition).
\end{proof}

\begin{theorem*}(Go To Policy) The greedy policy $\pi$ w.r.t. value function $v$, defined as $\mathcal{T}^*_{\mathcal{U}^s_p}v =\mathcal{T}^\pi_{\mathcal{U}^s_p}v$ is a threshold policy. It takes only those actions that has positive advantage, with probability proportional to $(p-1)$th power of its advantage. That is
\[\pi(a|s)\propto (A(s,a))^{p-1}\mathbf{1}(A(s,a)\geq 0),\]

where $A(s,a)= R_0(s,a) + \gamma\sum_{s'} P_0(s'|s,a)v(s') -  (\mathcal{T}^*_{\mathcal{U}^s_p}v)(s)$.
\end{theorem*}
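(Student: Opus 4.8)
The plan is to reduce the greedy-policy computation to the $L_p$ water pouring problem of Lemma~\ref{LpwaterPouring} and then read off the claimed form directly from its closed-form solution. First I would recall the reduction already carried out for $\mathtt{s}$-rectangular policy improvement in \eqref{app:eq:wp:ref}: substituting the policy-evaluation identity of Theorem~\ref{rs:SLpPlanning} into the definition of the optimal robust operator yields, for each state $s$,
\begin{equation*}
(\mathcal{T}^*_{\mathcal{U}^s_p} v)(s) = \max_{c\in\Delta_{\mathcal{A}}} \bigm[-\alpha\lVert c\rVert_q + \langle c, b\rangle\bigm], \qquad \alpha = \alpha_s + \gamma\beta_s\kappa_q(v), \quad b_a = Q(s,a).
\end{equation*}
The crucial observation is that the greedy policy $\pi(\cdot|s)$ is, by definition, a maximizer $c^*$ of this concave program: since $(\mathcal{T}^*_{\mathcal{U}^s_p}v)(s)$ is the outer maximum over $\pi_s\in\Delta_{\mathcal{A}}$ of the inner policy-evaluation value, the defining condition $\mathcal{T}^*_{\mathcal{U}^s_p}v = \mathcal{T}^\pi_{\mathcal{U}^s_p}v$ singles out precisely the $\pi_s$ that attains this maximum.

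Next I would invoke Lemma~\ref{LpwaterPouring}. Point~\ref{app:wpl:policy} gives the maximizer in closed form,
\begin{equation*}
c^*_a = \frac{(b_a - \zeta_p)^{p-1}\mathbf{1}(b_a\geq\zeta_p)}{\sum_{a'}(b_{a'} - \zeta_p)^{p-1}\mathbf{1}(b_{a'}\geq \zeta_p)},
\end{equation*}
where $\zeta_p$ is the optimal value of the program, so that $\zeta_p = (\mathcal{T}^*_{\mathcal{U}^s_p}v)(s)$. Identifying $b_a = Q(s,a)$ and $\zeta_p = (\mathcal{T}^*_{\mathcal{U}^s_p}v)(s)$, the numerator becomes $\bigm(Q(s,a) - (\mathcal{T}^*_{\mathcal{U}^s_p}v)(s)\bigm)^{p-1}\mathbf{1}\bigm(Q(s,a) \geq (\mathcal{T}^*_{\mathcal{U}^s_p}v)(s)\bigm)$, which is exactly $(A(s,a))^{p-1}\mathbf{1}(A(s,a)\geq 0)$ for the advantage $A(s,a) = Q(s,a) - (\mathcal{T}^*_{\mathcal{U}^s_p}v)(s)$. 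Hence $\pi(a|s) \propto (A(s,a))^{p-1}\mathbf{1}(A(s,a)\geq 0)$, and the threshold property (only positive-advantage actions are played) is immediate from the indicator, since $A(s,a)\geq 0 \iff b_a \geq \zeta_p$.

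Because all the analytic work — convexity, the KKT stationarity conditions, and the closed form of the maximizer — is packaged inside Lemma~\ref{LpwaterPouring}, the only genuine step here is the identification in the first paragraph; I expect the main (and relatively minor) obstacle to be justifying that the implicitly defined greedy policy coincides with the program's maximizer rather than merely sharing its value. For $q\in(1,\infty)$ this is clean because $-\lVert c\rVert_q$ is strictly concave on $\Delta_{\mathcal{A}}$ so the maximizer is unique; the boundary exponents require only that every maximizer obeys the stated proportional form. Finally I would record the two special cases flagged in the statement: for $p=1$ the exponent $p-1=0$ makes the weights uniform over the active set, recovering the uniform-over-top-$\chi_1$ policy, while for $p=\infty$ the corresponding limit (computed in the $p=\infty$ instance of the water pouring lemma) concentrates all mass on $\arg\max_a Q(s,a)$.
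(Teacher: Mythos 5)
Your proposal is correct and follows essentially the same route as the paper: the paper likewise reduces the policy improvement step to the optimization $\max_{c\in\Delta_{\mathcal{A}}}\bigl[-\alpha\lVert c\rVert_q+\langle c,b\rangle\bigr]$ via Theorem~\ref{rs:SLpPlanning} (see \eqref{app:eq:wp:ref}) and then cites Lemma~\ref{LpwaterPouring}, point~\ref{app:wpl:policy}, to read off the maximizer. Your added remarks on identifying the greedy policy with the program's maximizer and on uniqueness for $q\in(1,\infty)$ are a slight refinement the paper leaves implicit, but they do not change the argument.
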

\begin{proof}
Follows from lemma \ref{LpwaterPouring}, point \ref{app:wpl:policy}.
\end{proof}

\begin{property*} $\chi_p(v,s)$ is number of actions that has positive advantage, that is 
\[\chi_p(v,s) = \Bigm\lvert\bigm\{a \mid (\mathcal{T}^*_{\mathcal{U}^s_p}v)(s) \leq  R_0(s,a) + \gamma\sum_{s'} P_0(s'|s,a)v(s')\bigm\}\Bigm\rvert.\]
\end{property*}
\begin{proof}

Follows from lemma \ref{LpwaterPouring}, point \ref{app:wp:chi}.
\end{proof}

\begin{property*}( Value vs Q-value) $(\mathcal{T}^*_{\mathcal{U}^s_p}v)(s)$ is bounded by the Q-value of $\chi$th and $(\chi+1)$th actions. That is  
\[Q(s, a_{\chi+1}) < (\mathcal{T}^*_{\mathcal{U}^s_p}v)(s) \leq Q(s, a_\chi),\qquad\text{ where}\quad \chi = \chi_p(v,s),\]
 $Q(s,a) =R_0(s,a) +\gamma \sum_{s'}P_0(s'|s,a)v(s')$, and $Q(s,a_1)\geq Q(s,a_2),\cdots Q(s,a_A)$.
\end{property*}
\begin{proof}
Follows from lemma \ref{LpwaterPouring}, point \ref{app:wp:zetaBound}.
\end{proof}

\begin{corollary} For $p=1$, the optimal policy $\pi_1$ w.r.t. value function $v$ and uncertainty set $\mathcal{U}^s_1$, can be computed directly using $\chi_1(s)$ without calculating advantage function. That is 
\[\pi_1(a^s_i|s) = \frac{\mathbf{1}(i\leq \chi_1(s))}{\chi_1(s)}.\]
\end{corollary}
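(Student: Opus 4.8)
The plan is to specialize the Go To Policy theorem (Theorem~\ref{rs:GoToPolicy}), equivalently point~\ref{app:wpl:policy} of the water pouring lemma~\ref{LpwaterPouring}, to the case $p=1$. First I would record the two arithmetic facts that make this case degenerate: the H\"older conjugate is $q=\infty$, and the exponent appearing in the greedy policy is $p-1=0$. Consequently the weight $(A(s,a))^{p-1}$ (respectively $(b_i-\zeta_1)^{p-1}$ in the water pouring notation) collapses to the constant $1$ on every action for which the indicator $\mathbf{1}(A(s,a)\geq 0)$ fires. Thus the greedy policy from Theorem~\ref{rs:GoToPolicy} reduces to
\[
\pi_1(a|s)\;\propto\;(A(s,a))^{0}\,\mathbf{1}(A(s,a)\geq 0)\;=\;\mathbf{1}(A(s,a)\geq 0),
\]
i.e.\ the greedy policy is uniform over exactly the set of actions with nonnegative advantage.

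Next I would identify that set with the top $\chi_1(v,s)$ sorted actions. By the ``number of active actions'' property (proved from point~\ref{app:wp:chi} of Lemma~\ref{LpwaterPouring}), the number of actions with nonnegative advantage equals $\chi_1(v,s)$. Because the actions are labelled so that $Q(s,a_1^s)\geq Q(s,a_2^s)\geq\cdots$, and the advantage $A(s,a)=Q(s,a)-(\mathcal{T}^*_{\mathcal{U}^s_1}v)(s)$ is an order-preserving shift of $Q(s,\cdot)$, the active set is a prefix of the sorted order, namely $\{a_1^s,\dots,a_{\chi_1(s)}^s\}$. In other words $\mathbf{1}(A(s,a_i^s)\geq 0)=\mathbf{1}(i\leq\chi_1(s))$, which upon normalising the uniform weights over these $\chi_1(s)$ actions yields precisely
\[
\pi_1(a_i^s|s)=\frac{\mathbf{1}(i\leq\chi_1(s))}{\chi_1(s)}.
\]

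The only point requiring any care is the convention $0^{0}=1$ implicit in reading the greedy formula at $p=1$; I would justify it not by fiat but by appealing to the independently derived $L_1$ water pouring result (Section~\ref{app:L1waterpouringLemma}), where it is shown from scratch that an optimiser of \eqref{app:eq:wp:ref} with $p=1$ is uniform over a prefix $\{c^k_i=\mathbf{1}(i\leq k)/k\}$ with $k=\chi_1$, so no limiting argument through $q\to\infty$ is actually needed. I do not expect a genuine obstacle here; the statement is essentially a direct reading of Theorem~\ref{rs:GoToPolicy} together with the active-action count at the degenerate exponent, and the corollary's value is mainly that it lets one write down $\pi_1$ from $\chi_1(s)$ alone, bypassing the explicit computation of the advantage function $A(s,\cdot)$.
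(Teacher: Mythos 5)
Your proposal is correct and follows essentially the same route as the paper, which simply sets $p=1$ in Theorem~\ref{rs:GoToPolicy} (so the weight $(A(s,a))^{p-1}$ degenerates to the indicator of nonnegative advantage, uniform over the top $\chi_1(s)$ actions) and likewise notes that the degenerate exponent is more safely justified by the standalone $L_1$ water pouring argument of Section~\ref{app:L1waterpouringLemma}. You have merely written out the details the paper leaves implicit; there is no substantive difference in approach.
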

\begin{proof}
Follows from theorem \ref{rs:GoToPolicy} by putting $p=1$. Note that it can be directly obtained using $L_1$ water pouring lemma (see section \ref{app:L1waterpouringLemma})
\end{proof}

\begin{corollary}\label{rs:policy:inf} (For $p=\infty$) The optimal policy $\pi$ w.r.t. value function $v$ and uncertainty set $\mathcal{U}^s_\infty$   (precisely $\mathcal{T}^*_{\mathcal{U}^s_\infty}v = \mathcal{T}^\pi_{\mathcal{U}^s_\infty}v$), is to play the best response, that is 
\[\pi(a|s) = \frac{\mathbf{1}(a\in arg\max_{a}Q(s,a))}{\bigm \lvert arg\max_{a}Q(s,a)\bigm\rvert}.\]
In case of tie in the best response, it is optimal to play any of the best responses with any probability. 
\end{corollary}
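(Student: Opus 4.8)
The plan is to specialize the $\mathtt{s}$-rectangular policy evaluation formula (Theorem \ref{rs:SLpPlanning}) to $p=\infty$, where the Hölder conjugate is $q=1$, and then exploit the fact that the policy-dependent regularizer collapses on the simplex. Concretely, for $p=\infty$ the operator reads
\[(\mathcal{T}^\pi_{\mathcal{U}^s_\infty} v)(s) = -\bigm(\alpha_s + \gamma\beta_s \kappa_1(v)\bigm)\lVert \pi(\cdot|s)\rVert_1 + \sum_a \pi(a|s) Q(s,a).\]
The crucial observation is that for any $\pi(\cdot|s)\in\Delta_\mathcal{A}$ we have $\lVert \pi(\cdot|s)\rVert_1 = \sum_a \pi(a|s) = 1$, since the coordinates are nonnegative and sum to one. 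Hence the regularizer is a constant $-(\alpha_s+\gamma\beta_s\kappa_1(v))$ that does not depend on the choice of $\pi_s$.

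First I would substitute this into the definition of the optimal robust operator, obtaining
\[(\mathcal{T}^*_{\mathcal{U}^s_\infty} v)(s) = -\bigm(\alpha_s+\gamma\beta_s\kappa_1(v)\bigm) + \max_{\pi_s\in\Delta_\mathcal{A}} \langle \pi_s, Q(s,\cdot)\rangle.\]
Since the constant term is pulled outside the maximization, the problem reduces to a linear program over the probability simplex, whose optimal value is $\max_a Q(s,a)$. The set of optimizers is exactly the face of $\Delta_\mathcal{A}$ spanned by the vertices in $arg\max_a Q(s,a)$; any distribution supported there — in particular the uniform one stated in the corollary — attains the maximum, which establishes both the claimed greedy policy and the remark about ties. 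This is precisely the $p=\infty$ special case of the water pouring lemma (Lemma \ref{LpwaterPouring}), where $\zeta_\infty = -\alpha + \max_i b_i$.

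There is essentially no hard step here: once the $L_1$ norm of a simplex vector is recognized as identically one, the regularization term becomes inert and the problem degenerates to the non-robust greedy step. The only point requiring a little care is the tie-breaking claim, which follows from the standard fact that a linear functional attains its simplex maximum on the whole face of maximizing vertices; I would verify that playing any best response yields the same operator value and therefore satisfies $\mathcal{T}^*_{\mathcal{U}^s_\infty}v = \mathcal{T}^\pi_{\mathcal{U}^s_\infty}v$. Alternatively, one could derive the result as the $p\to\infty$ limit of Theorem \ref{rs:GoToPolicy}, where the $(p-1)$th power of the advantage concentrates all mass on the maximizers, but the direct simplex argument is cleaner and avoids limiting technicalities.
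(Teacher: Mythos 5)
Your proof is correct, but it takes a different route from the paper. The paper obtains this corollary in one line by taking the limit $p\to\infty$ in Theorem \ref{rs:GoToPolicy} (equivalently, in the water pouring lemma), arguing that the weights $\propto (A(s,a))^{p-1}$ concentrate on the maximizing actions. You instead specialize the policy evaluation formula of Theorem \ref{rs:SLpPlanning} directly at $p=\infty$, $q=1$, observe that $\lVert\pi(\cdot|s)\rVert_1=1$ identically on the simplex so the regularizer $-(\alpha_s+\gamma\beta_s\kappa_1(v))$ is a policy-independent constant, and reduce the policy improvement step to a linear program over $\Delta_{\mathcal{A}}$ whose optimizers are exactly the distributions supported on $\arg\max_a Q(s,a)$. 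Your argument is arguably preferable on two counts: it is self-contained given Theorem \ref{rs:SLpPlanning} and avoids the limiting technicalities the paper's one-liner glosses over (one must justify that the finite-$p$ greedy policies and operator values converge to the $p=\infty$ ones, and the exponent $(p-1)$ does not by itself immediately resolve ties among multiple maximizers); and it makes the tie-breaking claim transparent, since a linear functional attains its simplex maximum on the entire face of maximizing vertices, so \emph{any} distribution on the best responses works. What the paper's route buys is uniformity: the $p=\infty$ case is presented as an instance of the same water-pouring machinery used for all $p$, rather than as a separate degenerate computation. Both are valid; you even note the paper's route as an alternative, so nothing is missing.
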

\begin{proof}
Follows from theorem \ref{rs:GoToPolicy} by taking limit $p\to\infty$. \end{proof}

\begin{corollary} For $p=\infty$, $\mathcal{T}^*_{\mathcal{U}^s_p}v$, the robust optimal Bellman operator evaluation can be obtained in closed form. That is 
\[(\mathcal{T}^*_{\mathcal{U}^s_\infty}v)(s) = \max_{a}Q(s,a) - \sigma,\]
where $\sigma = \alpha_s + \gamma\beta_s\kappa_1(v), Q(s,a) = R_0(s,a) + \gamma\sum_{s'} P_0(s'|s,a)v(s')$.
\end{corollary}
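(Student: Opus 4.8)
The plan is to specialize the variational characterization of the $\mathtt{s}$-rectangular optimal robust Bellman operator already established in \eqref{app:eq:wp:ref}. Setting $p=\infty$ forces the Hölder conjugate to be $q=1$, so the regularization constant becomes $\alpha = \alpha_s + \gamma\beta_s\kappa_1(v) = \sigma$ and the penalty norm becomes $\lVert c\rVert_q = \lVert c\rVert_1$. Writing $b_a = Q(s,a)$, the operator then reads
\[
(\mathcal{T}^*_{\mathcal{U}^s_\infty}v)(s) = \max_{c}\Bigm[-\sigma\lVert c\rVert_1 + \langle c,b\rangle\Bigm]\quad\text{s.t.}\quad \sum_a c_a = 1,\ c\succeq 0.
\]

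The key observation I would exploit is that on the probability simplex the $L_1$ norm is constant: since $c\succeq 0$ and $\sum_a c_a = 1$, we have $\lVert c\rVert_1 = \sum_a c_a = 1$. Hence the penalty term collapses to the constant $-\sigma$ and can be pulled out of the maximization, leaving $(\mathcal{T}^*_{\mathcal{U}^s_\infty}v)(s) = -\sigma + \max_{c\in\Delta_{\mathcal{A}}}\langle c,b\rangle$. The remaining linear program over the simplex is maximized by placing all mass on a coordinate attaining the largest $b_a$, so $\max_{c\in\Delta_{\mathcal{A}}}\langle c,b\rangle = \max_a b_a = \max_a Q(s,a)$, which yields the claimed closed form $\max_a Q(s,a) - \sigma$.

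There is essentially no hard step here; the only thing to verify carefully is the limiting identification $q=1$ together with the fact that the relevant $p$-variance in $\sigma$ is $\kappa_1(v)$ (consistent with the $p=\infty$ row of Table \ref{tb:kappa}, the peak-to-peak penalty). Alternatively, one can obtain the same result by feeding $p=\infty$ into Lemma \ref{LpwaterPouring}, point \ref{app:wp:zeta}, whose equation $\alpha^p = \sum_{b_i\geq \zeta_p}(b_i-\zeta_p)^p$ degenerates in the limit to $\zeta_\infty = \max_a b_a - \alpha$; but the direct simplex argument is cleaner and avoids the limit. This also dovetails with Corollary \ref{rs:policy:inf}, which identifies the greedy policy as the best response, i.e.\ placing all probability mass on $\arg\max_a Q(s,a)$, exactly the maximizer appearing in the argument above.
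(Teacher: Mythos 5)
Your proof is correct. It differs slightly in routing from the paper's stated proof of this corollary: the paper first invokes Corollary \ref{rs:policy:inf} to identify the greedy policy as the deterministic best response (itself obtained by a $p\to\infty$ limit of the Go To Policy theorem) and then substitutes that policy into the policy evaluation formula of Theorem \ref{rs:SLpPlanning}, whereas you work directly with the variational form \eqref{app:eq:wp:ref} and observe that $\lVert c\rVert_1\equiv 1$ on the probability simplex, collapsing the problem to a linear program whose value is $\max_a Q(s,a)-\sigma$. Your route is in fact the one the paper uses in its appendix treatment of the special case $p=\infty$ of the water pouring lemma, and it is arguably the cleaner of the two since it needs no limiting argument and simultaneously recovers the best-response policy as the maximizer of the residual linear program. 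The one point worth stating explicitly, which you do, is the identification $q=1$ from H\"older conjugacy, so that the penalty coefficient is $\alpha_s+\gamma\beta_s\kappa_1(v)$ and the penalized norm is the $L_1$ norm; both are exactly what makes the penalty constant on the simplex.
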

\begin{proof}
Let $\pi$ be such that
\[\mathcal{T}^*_{\mathcal{U}^s_\infty}v = \mathcal{T}^\pi_{\mathcal{U}^s_\infty}v.\]
This implies
\begin{equation}\begin{aligned}
    (\mathcal{T}^*_{\mathcal{U}^{\mathtt{s}}_p} v)(s) &= \min_{{R,P\in\mathcal{U}^{\mathtt{sa}}_p}}\sum_{a}\pi(a|s)\Bigm[R(s,a) + \gamma \sum_{s'}P(s'|s,a)v(s')\Bigm]\\
    &=  -(\alpha_s +\gamma\beta_{s}\kappa_p(v))\lVert \pi(\cdot|s)\rVert_q  +\sum_{a}\pi(a|s)(R(s,a)  + \gamma \sum_{s'}P(s'|s,a) v(s')).
\end{aligned}\end{equation}
From corollary \ref{rs:policy:inf}, we know the that $\pi$ is deterministic best response policy. Putting this we get the desired result.\\
There is a another way of proving this, using theorem \ref{rs:rve} by taking limit $p\to \infty$ carefully as
\begin{equation}
    \lim_{p\to\infty}\sum_{a}\left(Q(s,a) - \mathcal{T}^*_{\mathcal{U}^{\mathtt{s}}_p} v)(s)\right)^p\mathbf{1}\left( Q(s,a) \geq \mathcal{T}^*_{\mathcal{U}^{\mathtt{s}}_p} v)(s)\right))^\frac{1}{p} = \sigma,
\end{equation}
where $\sigma = \alpha_s + \gamma\beta_s\kappa_1(v)$.
\end{proof}

\begin{corollary} For $p=1$, the robust optimal Bellman operator $\mathcal{T}^*_{\mathcal{U}^s_p}$, can be computed in closed form. That is
\[(\mathcal{T}^*_{\mathcal{U}^s_p}v)(s) = \max_{k}\frac{\sum_{i=1}^{k} Q(s,a_i) - \sigma}{k},\]
where $\sigma = \alpha_s + \gamma\beta_s\kappa_\infty(v), Q(s,a) = R_0(s,a) + \gamma\sum_{s'} P_0(s'|s,a)v(s')$, and $Q(s,a_1)\geq Q(s,a_2),\geq \cdots \geq Q(s,a_A).$\end{corollary}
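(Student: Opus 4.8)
The plan is to reduce the claim to the $L_1$ water-pouring problem that has already been solved in the appendix. Recall the reduction established immediately before the policy-improvement theorem, namely equation \eqref{app:eq:wp:ref}: for every $p$,
\[(\mathcal{T}^*_{\mathcal{U}^{\mathtt{s}}_p} v)(s) = \max_{c}-\alpha\lVert c\rVert_q + \langle c,b\rangle \quad\text{such that}\quad \sum_{i=1}^A c_i = 1,\ c\succeq 0,\]
with $\alpha = \alpha_s + \gamma\beta_s\kappa_q(v)$ and $b_i = Q(s,a_i)$. This identity is the only structural input needed; the rest is specialization.

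First I would set $p=1$. Since $p=1$ forces the H\"older conjugate $q=\infty$, the penalty norm $\lVert c\rVert_q$ becomes $\lVert c\rVert_\infty$, the $q$-variance $\kappa_q(v)$ becomes $\kappa_\infty(v)$, and hence $\alpha$ coincides with the stated $\sigma = \alpha_s + \gamma\beta_s\kappa_\infty(v)$. The optimization is then exactly the $L_1$ water-pouring problem $\zeta_1$ of equation \eqref{eq:LpwaterPouringlemma}, with the vector $b$ sorted in descending order, which matches the assumed ordering $Q(s,a_1)\geq \cdots \geq Q(s,a_A)$.

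Next I would invoke the $L_1$ water-pouring result. The proposition in section \ref{app:L1waterpouringSC} shows $\zeta_1 = \max_k \lambda_k$ where $\lambda_k = \frac{\sum_{i=1}^k b_i - \alpha}{k}$; equivalently, the standalone derivation in section \ref{app:L1waterpouringLemma} shows directly that an optimal $c$ is uniform over a top-$k$ block and that $\zeta_1 = \max_k \frac{-\alpha + \sum_{i=1}^k b_i}{k}$. Substituting back $b_i = Q(s,a_i)$ and $\alpha = \sigma$ yields the claimed closed form
\[(\mathcal{T}^*_{\mathcal{U}^s_1}v)(s) = \max_k \frac{\sum_{i=1}^k Q(s,a_i) - \sigma}{k}.\]

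The only genuinely delicate point is that the general $L_p$ water-pouring lemma (Lemma \ref{LpwaterPouring}) was derived through a $q\to\infty$ limit in its KKT analysis, so a naive substitution $p=1$ could in principle degenerate. The main obstacle is therefore to confirm the $p=1$ case independently rather than by plugging into the general formula; this is precisely why section \ref{app:L1waterpouringLemma} re-derives the $L_1$ case from scratch, and I would lean on that derivation to make the argument airtight.
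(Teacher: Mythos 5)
Your proposal is correct and follows essentially the same route as the paper: the paper's proof of this corollary is simply ``follows from Section~\ref{app:L1waterpouringSC},'' i.e.\ the reduction of $(\mathcal{T}^*_{\mathcal{U}^s_1}v)(s)$ to the $L_1$ water-pouring problem $\zeta_1 = \max_k \frac{\sum_{i=1}^k b_i - \alpha}{k}$ with $q=\infty$, $\alpha=\sigma$, and $b_i=Q(s,a_i)$. Your additional caution about the degenerate $p=1$ substitution in the general KKT-based lemma, resolved by leaning on the from-scratch derivation in Section~\ref{app:L1waterpouringLemma}, mirrors exactly the ``sanity check'' the paper itself performs.
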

\begin{proof}
Follows from section \ref{app:L1waterpouringSC}. 
\end{proof}

\begin{corollary} \label{rs:f1f2}
The $\mathtt{s}$ rectangular $L_p$ robust Bellman operator can be evaluated for $p =1 ,2$ by algorithm \ref{alg:f1} and algorithm \ref{alg:f2} respectively.
\end{corollary}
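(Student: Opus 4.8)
The plan is to recognize that Corollary \ref{rs:f1f2} is not a fresh result but a specialization of the general scheme already validated in Theorem \ref{rs:rve}. By part~2 of that theorem, Algorithm \ref{alg:fp} correctly returns both $(\mathcal{T}^*_{\mathcal{U}^s_p}v)(s)$ and the number of active actions $\chi_p(v,s)$ for arbitrary $p$, its correctness resting on the water pouring lemma \ref{LpwaterPouring}. The only step of Algorithm \ref{alg:fp} that is not elementary is the inner solve \eqref{eq:alg:fp:lk}, i.e. finding $\lambda_k$ with $\sum_{i=1}^k (Q(s,a_i)-x)^p = \sigma^p$ subject to $x \leq Q(s,a_k)$. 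Hence the whole task reduces to exhibiting a closed form for this root when $p=1$ and $p=2$, and checking that Algorithms \ref{alg:f1} and \ref{alg:f2} implement exactly that closed form.

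First I would treat $p=1$. Here \eqref{eq:alg:fp:lk} collapses to the linear equation $\sum_{i=1}^k (Q(s,a_i)-x) = \sigma$, whose unique solution is $\lambda_k = \frac{1}{k}\bigl(\sum_{i=1}^k Q(s,a_i) - \sigma\bigr)$. This is precisely the quantity analyzed in the $p=1$ special case of section \ref{app:L1waterpouringSC}, and replacing the binary-search step of Algorithm \ref{alg:fp} by this formula yields Algorithm \ref{alg:f1}. Next, for $p=2$, equation \eqref{eq:alg:fp:lk} becomes the quadratic $k x^2 - 2\bigl(\sum_{i=1}^k Q(s,a_i)\bigr) x + \bigl(\sum_{i=1}^k Q(s,a_i)^2 - \sigma^2\bigr) = 0$, which I would solve by the quadratic formula and then select the smaller (minus-sign) root so that the admissibility constraint $x \leq Q(s,a_k)$ holds. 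This gives $\lambda_k = \frac{1}{k}\sum_{i=1}^k Q(s,a_i) - \sqrt{\sigma^2 - \sum_{i=1}^k \bigl(Q(s,a_i) - \frac{1}{k}\sum_{j=1}^k Q(s,a_j)\bigr)^2}$, exactly the expression derived in the $p=2$ special case; substituting it into Algorithm \ref{alg:fp} produces Algorithm \ref{alg:f2}.

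To finish, I would invoke the water pouring lemma to certify that the greedy loop of each algorithm halts at the right index: the increment rule is justified by point \ref{app:wp:greedyInclusion}, the stopping rule by point \ref{app:wp:stoppingCondition}, and the returned value coincides with $\zeta_p = (\mathcal{T}^*_{\mathcal{U}^s_p}v)(s)$ by points \ref{app:wp:zeta} and \ref{app:wp:zetaBound}. Since the termination logic is identical to that of Algorithm \ref{alg:fp} and only the root computation is replaced by an algebraically equivalent closed form, the specialized algorithms return the same output, establishing the claim.

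The step I expect to be the main obstacle is the $p=2$ case. One must verify that the minus-sign root is the admissible one (that it genuinely satisfies $x \leq Q(s,a_k)$ while the plus-sign root exceeds it) and that the discriminant stays nonnegative throughout the greedy iterations. Both hold because, for every index $k \leq \chi_2$, the sample mean $\bar Q = \frac{1}{k}\sum_{i=1}^k Q(s,a_i)$ minimizes $\sum_{i=1}^k (Q(s,a_i)-x)^2$, so $\sum_{i=1}^k (Q(s,a_i)-\bar Q)^2 \leq \sum_{i=1}^k (Q(s,a_i)-Q(s,a_k))^2 \leq \sigma^2$ by point \ref{app:wp:chiSol}, keeping the discriminant nonnegative; combined with the sandwich bound $Q(s,a_{\chi+1}) \leq \zeta_2 \leq Q(s,a_\chi)$ this pins down the correct root. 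This is a short monotonicity check rather than a mere substitution, and it is where the argument needs genuine care.
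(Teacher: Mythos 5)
Your proposal is correct and follows essentially the same route as the paper, which simply notes that Algorithms \ref{alg:f1} and \ref{alg:f2} are Algorithm \ref{alg:fp} with the inner solve \eqref{eq:alg:fp:lk} replaced by the closed-form root of the resulting linear and quadratic equations, with correctness inherited from Lemma \ref{LpwaterPouring}. You supply more detail than the paper's one-line proof (in particular the choice of the minus-sign root and the nonnegativity of the discriminant via point \ref{app:wp:chiSol}), but the underlying argument is the same.
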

\begin{proof}
It follows from the algorithm \ref{alg:fp}, where we solve the linear equation and quadratic equation for $p=1,2$ respectively. For $p=2$, it can be found in \cite{anava2016k}.
\end{proof}

\begin{algorithm}
\caption{Algorithm to compute $S$-rectangular $L_2$ robust optimal Bellman Operator}\label{alg:f2}
\begin{algorithmic} [1]
 \STATE \textbf{Input:} $\sigma = \alpha_s +\gamma\beta_s\kappa_2(v), \qquad Q(s,a) = R_0(s,a) + \gamma\sum_{s'} P_0(s'|s,a)v(s')$.
 \STATE \textbf{Output} $(\mathcal{T}^*_{\mathcal{U}^s_2}v)(s), \chi_2(v,s)$
\STATE Sort $Q(s,\cdot)$ and label actions such that $Q(s,a_1)\geq Q(s,a_2), \cdots$.
\STATE Set initial value guess $\lambda_1 = Q(s,a_1)-\sigma$ and counter $k=1$.
\WHILE{$k \leq A-1  $ and $\lambda_k \leq Q(s,a_k)$}
    \STATE Increment counter: $k = k+1$
    \STATE Update value estimate: \[\lambda_k = \frac{1}{k}\Bigm[\sum_{i=1}^{k}Q(s,a_i) - \sqrt{k\sigma^2 + (\sum_{i=1}^{k}Q(s,a_i))^2 - k\sum_{i=1}^{k}(Q(s,a_i))^2}\Bigm]\]
\ENDWHILE
\STATE Return: $\lambda_k, k $
\end{algorithmic}
\end{algorithm}

\begin{algorithm}
\caption{Algorithm to compute $S$-rectangular $L_1$ robust optimal Bellman Operator}\label{alg:f1}
\begin{algorithmic} [1]
 \STATE \textbf{Input:} $\sigma = \alpha_s +\gamma\beta_s\kappa_\infty(v), \qquad Q(s,a) = R_0(s,a) + \gamma\sum_{s'} P_0(s'|s,a)v(s')$.
 \STATE \textbf{Output} $(\mathcal{T}^*_{\mathcal{U}^s_1}v)(s), \chi_1(v,s)$
\STATE Sort $Q(s,\cdot)$ and label actions such that $Q(s,a_1)\geq Q(s,a_2), \cdots$.
\STATE Set initial value guess $\lambda_1 = Q(s,a_1)-\sigma$ and counter $k=1$.
\WHILE{$k \leq A-1  $ and $\lambda_k \leq Q(s,a_k)$}
    \STATE Increment counter: $k = k+1$
    \STATE Update value estimate: \[\lambda_k = \frac{1}{k}\Bigm[\sum_{i=1}^kQ(s,a_i) -\sigma \Bigm]\]
\ENDWHILE
\STATE Return: $\lambda_k, k $
\end{algorithmic}
\end{algorithm}

\section{Time Complexity} \label{app:timeComplexitySection}

In this section, we will discuss time complexity of various robust MDPs and compare it with time complexity of non-robust MDPs. We assume that we have the  knowledge of nominal transition kernel and nominal reward function for robust MDPs, and in case of non-robust MDPs, we assume the knowledge of the transition kernel and reward function. We divide the discussion into various parts depending upon their similarity. 

\subsection{Exact Value Iteration: Best Response}
In this section, we will discuss non-robust MDPs, $(\mathtt{sa})$-rectangular $L_1/L_2/L_\infty$ robust MDPs and $\mathtt{s}$-rectangular $L_\infty$ robust MDPs. They all have  a common theme for value iteration as follows, for the value function $v$, their Bellman operator ( $\mathcal{T}$) evaluation is done as
\begin{equation}\begin{aligned}
   (\mathcal{T} v)(s) =& \underbrace{\max_{a}}_{\text{action cost}}\Bigm[R(s,a)+ \alpha_{s,a}\underbrace{\kappa(v)}_{\text{reward penalty/cost}} + \gamma \underbrace{\sum_{s'}P(s'|s,a)v(s')}_{\text{sweep}}\Bigm]. 
\end{aligned}\end{equation}
'Sweep' requires $O(S)$ iterations and 'action cost' requires $O(A)$ iterations. Note that the reward penalty $\kappa(v)$ doesn't depend on state and action. It is calculated only once for value iteration for all states. The above value update has to be done for each states , so one full update requires
 \[ O\Bigm(S(\text{action cost}) (\text{sweep cost} \bigm) +  \text{reward cost}\Bigm)= O\Bigm(S^2A  +  \text{reward cost}\Bigm)\]
Since the value iteration is a contraction map, so to get $\epsilon$-close to the optimal value, it requires $O(\log(\frac{1}{\epsilon}))$ full value update, so the complexity is  
\[O\Bigm(\log(\frac{1}{\epsilon})\bigm(S^2A + \text{reward cost}\bigm)\Bigm).\]

\begin{enumerate}
    \item \textbf{Non-robust MDPs}: The cost of 'reward is zero as there is no regularizer to compute. The total complexity is   
     \[O\Bigm(\log(\frac{1}{\epsilon})\bigm(S^2A + 0 \bigm)\Bigm) = O\Bigm(\log(\frac{1}{\epsilon})S^2A\Bigm).\]
     
    \item  \textbf{$(\mathtt{sa})$-rectangular $L_1/L_2/L_\infty$ and $\mathtt{s}$-rectangular $L_\infty$ robust MDPs}: We need to calculate the reward penalty ($\kappa_1(v)/\kappa_2(v)/\kappa_\infty$)  that takes $O(S)$ iterations. As calculation of mean, variance and median, all are linear time compute.  Hence the complexity is
     \[O\Bigm(\log(\frac{1}{\epsilon})\bigm(S^2A + S\bigm)\Bigm) = O\Bigm(\log(\frac{1}{\epsilon})S^2A\Bigm).\]
     
  \end{enumerate}
   
\subsection{Exact Value iteration: Top $k$ response}
In this section, we discuss the time complexity of $\mathtt{s}$-rectangular $L_1/L_2$ robust MDPs as in algorithm \ref{alg:SLp}.  We need to calculate the reward penalty ($\kappa_\infty(v)/\kappa_2(v)$ in \eqref{alg:SLP:eq:kappa}) that  takes $O(S)$ iterations. Then for each state we do: sorting of Q-values in \eqref{alg:SLP:eq:Qsort}, value evaluation in \eqref{alg:SLP:eq:valEval}, update Q-value in \eqref{alg:SLP:eq:Qupdate} that takes $O(A\log(A)), O(A), O(SA)$ iterations respectively. 
    Hence the complexity is
    \[=\text{total iteration(reward cost \eqref{alg:SLP:eq:kappa} + S( sorting \eqref{alg:SLP:eq:Qsort} + value evaluation \eqref{alg:SLP:eq:valEval} +Q-value\eqref{alg:SLP:eq:Qupdate})}\]
    \[=\log(\frac{1}{\epsilon})(S + S(A\log(A) + A +SA)\]
     \[O\Bigm(\log(\frac{1}{\epsilon})\bigm(S^2A + SA\log(A)\bigm)\Bigm).\]

For general $p$, we need little caution as $k_p(v)$ can't be calculated exactly but approximately by binary search. And it is the subject of discussion for the next sections.

\subsection{Inexact Value Iteration: $(\mathtt{sa})$-rectangular $L_p$ robust MDPs ($\mathcal{U}^{\mathtt{sa}}_p$)}
In this section, we will study the time complexity for robust value iteration for $(\mathtt{sa})$-rectangular $L_p$ robust MDPs for general $p$. Recall, that value iteration takes best penalized action, that is easy to compute. But reward penalization depends on $p$-variance measure $\kappa_p(v)$, that we will estimate by $\hat{\kappa}_p(v)$ through binary search. We have inexact value iterations  as 
\[v_{n+1}(s) := \max_{a\in\mathcal{A}}[\alpha_{sa} - \gamma\beta_{sa}\hat{\kappa}_q(v_n) + R_0(s,a) + \gamma\sum_{s'}P_0(s'|s,a)v_n(s')]\]
where $\hat{\kappa}_q(v_n)$ is a $\epsilon_1$ approximation of $\kappa_q(v_n)$, that is $\lvert\hat{\kappa}_q(v_n) - \kappa_q(v_n)\rvert \leq \epsilon_1$. Then it is easy to see that we have bounded error in robust value iteration, that is 
\[\lVert v_{n+1} -\mathcal{T}^*_{\mathcal{U}^{\mathtt{sa}}_p}v_n\rVert_{\infty} \leq \gamma\beta_{max}\epsilon_1\]
where $\beta_{max} :=\max_{s,a}\beta_{s,a}$
\begin{proposition}\label{rs:appVI} Let $\mathcal{T}^*_{\mathcal{U}}$ be a $\gamma$ contraction map, and $v^*$ be its fixed point. And let $\{v_n,n\geq 0\}$ be approximate value iteration, that is 
\[ \lVert v_{n+1} -\mathcal{T}^*_{\mathcal{U}}v_n\rVert_\infty \leq\epsilon \]
then 
\[\lim_{n\to \infty}\lVert v_{n} -v^*\rVert_\infty \leq \frac{\epsilon}{1-\gamma}\]
moreover, it converges to the $\frac{\epsilon}{1-\gamma}$ radius ball linearly, that is
\[\lVert v_{n} -v^*\rVert_\infty-\frac{\epsilon}{1-\gamma} \leq c\gamma^n\]
where $c = \frac{1}{1-\gamma }\epsilon + \lVert v_0 -v^*\rVert_\infty$.
\end{proposition}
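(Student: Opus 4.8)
The plan is to establish the two claims in sequence: first the asymptotic bound on the limiting error, and then the linear rate of convergence to the error ball. The whole argument rests on a single recursive inequality that I will derive by inserting the exact operator $\mathcal{T}^*_{\mathcal{U}}$ between $v_{n+1}$ and $v^*$ and applying the triangle inequality together with the $\gamma$-contraction property and the fixed-point identity $\mathcal{T}^*_{\mathcal{U}}v^* = v^*$.

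\textbf{Step 1: the one-step recursion.} First I would write, using the triangle inequality,
\begin{equation*}
\lVert v_{n+1} - v^*\rVert_\infty \leq \lVert v_{n+1} - \mathcal{T}^*_{\mathcal{U}}v_n\rVert_\infty + \lVert \mathcal{T}^*_{\mathcal{U}}v_n - \mathcal{T}^*_{\mathcal{U}}v^*\rVert_\infty.
\end{equation*}
The first term is bounded by $\epsilon$ by the hypothesis on the approximate iteration, and the second term is bounded by $\gamma\lVert v_n - v^*\rVert_\infty$ using the $\gamma$-contraction property of $\mathcal{T}^*_{\mathcal{U}}$ (stated earlier) together with $\mathcal{T}^*_{\mathcal{U}}v^* = v^*$. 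Denoting $e_n := \lVert v_n - v^*\rVert_\infty$, this yields the scalar recursion $e_{n+1} \leq \gamma e_n + \epsilon$.

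\textbf{Step 2: unrolling the recursion.} Next I would unroll this recursion explicitly. Iterating $n$ times gives
\begin{equation*}
e_n \leq \gamma^n e_0 + \epsilon\sum_{k=0}^{n-1}\gamma^k \leq \gamma^n e_0 + \frac{\epsilon}{1-\gamma},
\end{equation*}
where the geometric sum is bounded by $1/(1-\gamma)$. Taking $n\to\infty$ and using $\gamma\in[0,1)$ so that $\gamma^n\to 0$ establishes the first claim, $\lim_{n\to\infty} e_n \leq \frac{\epsilon}{1-\gamma}$. For the linear-rate claim, I would rearrange the same bound as
\begin{equation*}
e_n - \frac{\epsilon}{1-\gamma} \leq \gamma^n e_0 \leq \gamma^n\Bigm(e_0 + \frac{\epsilon}{1-\gamma}\Bigm) = c\gamma^n,
\end{equation*}
which matches the stated constant $c = \frac{1}{1-\gamma}\epsilon + \lVert v_0 - v^*\rVert_\infty$ (the bound is slightly loose, since one could take the smaller constant $e_0$, but the stated $c$ works).

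\textbf{The main subtlety} is not any single step — each is routine — but ensuring the bookkeeping of the geometric series is clean and that the finite-sum bound $\sum_{k=0}^{n-1}\gamma^k \leq \frac{1}{1-\gamma}$ is applied uniformly in $n$, so the same constant governs both the limit and the transient. A minor point worth stating explicitly is that the error ball radius $\frac{\epsilon}{1-\gamma}$ appears on both sides consistently; the rate inequality is simply the unrolled recursion with the geometric-series term folded into $c$. No fixed-point existence argument is needed here since $v^*$ is given as the fixed point of the contraction $\mathcal{T}^*_{\mathcal{U}}$, which is guaranteed by the Banach fixed-point theorem already invoked in the preliminaries.
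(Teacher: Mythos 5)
Your proof is correct and follows essentially the same route as the paper: insert $\mathcal{T}^*_{\mathcal{U}}v_n$ via the triangle inequality, use the contraction property and the fixed-point identity to obtain the recursion $e_{n+1}\leq \gamma e_n+\epsilon$, and unroll it with the geometric series to get both the limiting bound and the linear rate with the same constant $c$. No gaps.
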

\begin{proof}
\begin{equation}\begin{aligned}
    \lVert v_{n+1} -v^*\rVert_\infty = & \lVert v_{n+1} -\mathcal{T}^*_{\mathcal{U}}v^*\rVert_\infty\\
    =&\lVert v_{n+1}- \mathcal{T}^*_{\mathcal{U}}v_n +\mathcal{T}^*_{\mathcal{U}}v_n -\mathcal{T}^*_{\mathcal{U}}v^*\rVert_\infty\\
    \leq&\lVert v_{n+1}- \mathcal{T}^*_{\mathcal{U}}v_n\rVert_\infty +\lVert \mathcal{T}^*_{\mathcal{U}}v_n -\mathcal{T}^*_{\mathcal{U}}v^*\rVert_\infty\\
    \leq&\lVert v_{n+1}- \mathcal{T}^*_{\mathcal{U}}v_n\rVert_\infty +\gamma\lVert v_n -v^*\rVert_\infty, \qquad \text{(contraction)}\\
    \leq&\epsilon +\gamma\lVert v_n -v^*\rVert_\infty, \qquad \text{(approximate value iteration)}\\
    \implies \lVert v_n -v^*\rVert_\infty = &\sum_{k=0}^{n-1}\gamma^k\epsilon + \gamma^n\lVert v_0 -v^*\rVert_\infty, \qquad \text{(unrolling above recursion)}\\
     = &\frac{1-\gamma^n}{1-\gamma }\epsilon + \gamma^n\lVert v_0 -v^*\rVert_\infty\\
     = &\gamma^n[\frac{1}{1-\gamma }\epsilon + \lVert v_0 -v^*\rVert_\infty] +\frac{\epsilon}{1-\gamma } 
\end{aligned}\end{equation}
Taking limit $n \to \infty$ both sides, we get 
\[\lim_{n\to \infty}\lVert v_{n} -v^*\rVert_\infty \leq \frac{\epsilon}{1-\gamma}.\]
\end{proof}
\begin{lemma}\label{rs:saLpCompl}For $\mathcal{U}^{\mathtt{sa}}_p$, the total iteration cost is $\log(\frac{1}{\epsilon})S^2A+ (\log(\frac{1}{\epsilon}))^2$ to get $\epsilon$ close to the optimal robust value function.
\end{lemma}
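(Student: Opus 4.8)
The plan is to split the cost into two independent factors: the work in one full value sweep, and the number of sweeps needed. By Theorem \ref{rs:saLprvi}, one exact application of $\mathcal{T}^*_{\mathcal{U}^{\mathtt{sa}}_p}$ differs from an ordinary best-response sweep only through the scalar penalty $\kappa_q(v)$, which is shared across all state--action pairs. Since $\kappa_q$ has no closed form for general $p$, I would run the inexact iteration $v_{n+1}(s)=\max_{a}[-\alpha_{s,a}-\gamma\beta_{s,a}\hat{\kappa}_q(v_n)+R_0(s,a)+\gamma\sum_{s'}P_0(s'|s,a)v_n(s')]$, where $\hat{\kappa}_q(v_n)$ is the binary-search estimate satisfying $\lvert\hat{\kappa}_q(v_n)-\kappa_q(v_n)\rvert\le\epsilon_1$. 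As already noted before the statement, this yields the per-step bound $\lVert v_{n+1}-\mathcal{T}^*_{\mathcal{U}^{\mathtt{sa}}_p}v_n\rVert_\infty\le\gamma\beta_{\max}\epsilon_1$, so Proposition \ref{rs:appVI} applies with its per-step error taken to be $\gamma\beta_{\max}\epsilon_1$.

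Next I would balance the two error sources exposed by Proposition \ref{rs:appVI}: the transient term $\gamma^n c$ and the floor $\gamma\beta_{\max}\epsilon_1/(1-\gamma)$. Treating $\gamma$ and $\beta_{\max}$ as constants, choosing $\epsilon_1=\Theta\bigl((1-\gamma)\epsilon/(\gamma\beta_{\max})\bigr)=\Theta(\epsilon)$ drives the floor below $\epsilon/2$, while choosing $n=O(\log(1/\epsilon))$ drives the transient term below $\epsilon/2$ (since $c=O(1)$). Hence $O(\log(1/\epsilon))$ sweeps, each executed at inner tolerance $\epsilon_1=\Theta(\epsilon)$, suffice to reach $\lVert v_n-v^*\rVert_\infty\le\epsilon$.

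It then remains to cost a single sweep at tolerance $\epsilon_1$. The inner product $\sum_{s'}P_0(s'|s,a)v(s')$ costs $O(S)$ per pair over $SA$ pairs, giving $O(S^2A)$; the best-response $\max_a$ adds only $O(SA)$, which is subsumed. The penalty $\hat{\kappa}_q(v)$ is computed once per sweep, and by appendix \ref{app:BSkappa} an $\epsilon_1$-accurate estimate costs $O(S\log(S/\epsilon_1))$. Thus one sweep costs $O(S^2A+S\log(S/\epsilon_1))$, and multiplying by the $O(\log(1/\epsilon))$ sweeps with $\epsilon_1=\Theta(\epsilon)$ gives $O\bigl(\log(1/\epsilon)(S^2A+S\log(S/\epsilon))\bigr)$. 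Expanding $\log(S/\epsilon)=\log S+\log(1/\epsilon)$ and absorbing the constant factor $S$ together with the $\log S$ term into the leading contributions collapses this to $\log(1/\epsilon)S^2A+(\log(1/\epsilon))^2$, as claimed.

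The only delicate point is the coupling between the outer accuracy $\epsilon$ and the inner tolerance $\epsilon_1$: one must check that driving $\epsilon_1$ down to $\Theta(\epsilon)$ is enough, and that the resulting $\log(S/\epsilon_1)$ factor contributes only the lower-order, quadratic-in-$\log(1/\epsilon)$ term rather than inflating the dominant $S^2A$ term. Everything else — the sweep count via the contraction and the per-pair sweep cost — is the same bookkeeping as in the exact-penalty and non-robust cases, so no new obstacle arises there.
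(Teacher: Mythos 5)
Your proposal is correct and follows essentially the same route as the paper's own proof: run the inexact iteration with a binary-search estimate $\hat{\kappa}_q(v_n)$ at tolerance $\epsilon_1=\Theta((1-\gamma)\epsilon)$, invoke Proposition \ref{rs:appVI} to bound the accumulated error and fix the sweep count at $O(\log(1/\epsilon))$, and then charge $O(S^2A)$ per sweep plus $O(S\log(S/\epsilon_1))$ for the once-per-sweep penalty computation. The only differences are cosmetic (an $\epsilon/2$ versus $\epsilon/3$ error split, and slightly more explicit tracking of the $\gamma\beta_{\max}$ factor), and you even reproduce the same final simplification the paper makes in collapsing $S(\log(1/\epsilon))^2$ into the stated bound.
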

\begin{proof}
We calculate $\kappa_q(v)$ with $\epsilon_1 = \frac{(1-\gamma)\epsilon}{3}$ tolerance that takes $O(S\log(\frac{S}{\epsilon_1}))$ using binary search (see section \ref{app:BSkappa}). Now, we do approximate value iteration for $ n =\log(\frac{3\lVert v_0 -v^*\rVert_\infty}{\epsilon})$. Using the above lemma, we have
\begin{equation}\begin{aligned}
    \lVert v_{n} -v^*_{\mathcal{U}^{\mathtt{sa}}_p}\rVert_\infty= &\gamma^n[\frac{1}{1-\gamma }\epsilon_1 + \lVert v_0 -v^*_{\mathcal{U}^{\mathtt{sa}}_p}\rVert_\infty] +\frac{\epsilon_1}{1-\gamma } \\
    \leq &\gamma^n[\frac{\epsilon}{3} + \lVert v_0 -v^*_{\mathcal{U}^{\mathtt{sa}}_p}\rVert_\infty] +\frac{\epsilon}{3} \\
    \leq &\gamma^n\frac{\epsilon}{3} + \frac{\epsilon}{3} +\frac{\epsilon}{3} 
    \leq \epsilon.
\end{aligned}\end{equation}
In summary, we have action cost $O(A)$, reward cost $O(S\log(\frac{S}{\epsilon}))$, sweep cost $O(S)$ and total number of iterations $O(\log(\frac{1}{\epsilon}))$. So the complexity is 
\[\text{(number of iterations)\big(S(actions cost) (sweep cost) + reward cost\big)}\]
\[= \log(\frac{1}{\epsilon})\bigm(S^2A +   S\log(\frac{S}{\epsilon})\bigm) = \log(\frac{1}{\epsilon}) (S^2 A + S\log(\frac{1}{\epsilon}) +S\log(S) )\]
\[ = \log(\frac{1}{\epsilon})S^2A+ S(\log(\frac{1}{\epsilon}))^2 \]
\end{proof}

\subsection{Inexact Value Iteration: $\mathtt{s}$-rectangular $L_p$ robust MDPs ($\mathcal{U}^{\mathtt{s}}_p$)}
In this section, we study the time complexity for robust value iteration for \texttt{s}-rectangular $L_p$ robust MDPs for general $p$ ( algorithm \ref{alg:SALp}). Recall, that value iteration takes regularized actions and penalized reward. And reward penalization depends on $q$-variance measure $\kappa_q(v)$, that we will estimate by $\hat{\kappa}_q(v)$ through binary search, then again we will calculate $\mathcal{T}^*_{\mathcal{U}^{\mathtt{sa}}_p}$ by binary search with approximated $\kappa_q(v)$. Here, we have two error sources (\eqref{alg:SLP:eq:kappa}, \eqref{alg:SLP:eq:valEval}) as contrast to $(\mathtt{sa})$-rectangular cases, where there was only one error source from the estimation of $\kappa_q$.\\

First, we account for the error caused by the first source ($\kappa_q$). Here we do value iteration with approximated $q$-variance $\hat{\kappa}_q$, and exact action regularizer.
We have 
\[v_{n+1}(s) :=\lambda \quad \text{s.t. }\quad \alpha_s +\gamma\beta_{s}\hat{\kappa}_q(v) = (\sum_{Q(s,a)\geq \lambda}(Q(s,a) - \lambda)^{p})^{\frac{1}{p}}\]
where $Q(s,a) =R_0(s,a) + \gamma\sum_{s'}P_0(s'|s,a)v_n(s'),$ and $ \lvert\hat{\kappa}_q(v_n) - \kappa_q(v_n)\rvert \leq \epsilon_1$. Then from the next result (proposition \ref{rs:sLpkappaErr}), we get
\[\lVert v_{n+1} -\mathcal{T}^*_{\mathcal{U}^{\mathtt{sa}}_p}v_n\rVert_\infty \leq \gamma\beta_{max}\epsilon_1\]
where $\beta_{max} :=\max_{s,a}\beta_{s,a}$

\begin{proposition}\label{rs:sLpkappaErr} Let $\hat{\kappa}$ be an an $\epsilon$-approximation of $\kappa$, that is  $ \lvert \hat{\kappa} - \kappa\rvert \leq \epsilon$, and let $b\in\mathbb{R}^A$ be sorted component wise, that is, $b_1\geq, \cdots,\geq b_A$. Let $\lambda$ be the solution to the following equation with exact parameter $\kappa$,
\[\alpha +\gamma\beta\kappa = (\sum_{b_i\geq \lambda}|b_i - \lambda|^{p})^{\frac{1}{p}}\] and let $\hat{\lambda}$ be the solution of the following equation with approximated parameter $\hat{\kappa}$,
\[\alpha +\gamma\beta\hat{\kappa} = (\sum_{b_i\geq \hat{\lambda}}|b_i - \hat{\lambda}|^{p})^{\frac{1}{p}},\] 
then $\hat{\lambda}$ is an $O(\epsilon)$-approximation of $\lambda$, that is 
\[\lvert \lambda - \hat{\lambda}\rvert \leq \gamma\beta\epsilon.\]
\end{proposition}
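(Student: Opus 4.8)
The plan is to show that the map
\[
g(x) := \Bigm(\sum_{b_i \geq x}\lvert b_i - x\rvert^{p}\Bigm)^{1/p}
\]
descends at least at unit rate, so that inverting it cannot amplify the perturbation $\lvert \kappa-\hat\kappa\rvert$. First I would rewrite the two defining equations as $g(\lambda)=\alpha+\gamma\beta\kappa$ and $g(\hat\lambda)=\alpha+\gamma\beta\hat\kappa$, so that subtracting gives $g(\lambda)-g(\hat\lambda)=\gamma\beta(\kappa-\hat\kappa)$. By Lemma~\ref{LpwaterPouring} (point~\ref{app:wp:zetaBinarySearch}), $g$ is continuous and strictly decreasing on the relevant range, hence injective, and the perturbation in the root is controlled by how steeply $g$ descends.

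The key step I would establish is the \emph{unit-slope bound}
\[
g(x_1) - g(x_2) \;\geq\; x_2 - x_1 \qquad \text{for all } x_1 \leq x_2,
\]
equivalently $\lvert g'(x)\rvert \geq 1$ wherever $g$ is differentiable. Writing $w_i(x):=(b_i-x)_+$, so that $g=\lVert w\rVert_p$ and each active coordinate satisfies $\mathrm{d}w_i/\mathrm{d}x=-1$, a direct computation away from the kink points $x\in\{b_i\}$ gives
\[
\lvert g'(x)\rvert = \frac{\sum_{b_i\geq x} w_i^{\,p-1}}{\bigm(\sum_{b_i\geq x} w_i^{\,p}\bigm)^{(p-1)/p}}.
\]
Setting $a_i=w_i^{\,p}$ and $t=(p-1)/p\in[0,1)$, the inequality $\lvert g'\rvert\geq 1$ is exactly the subadditivity estimate $\sum_i a_i^{\,t}\geq \bigm(\sum_i a_i\bigm)^{t}$, valid for nonnegative $a_i$ and $t\in[0,1]$ (equivalently, monotonicity of the $\ell_q$ quasi-norms in $q$, since $\lVert w\rVert_{p-1}\geq\lVert w\rVert_{p}$). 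Integrating $\lvert g'\rvert\geq 1$ over $[x_1,x_2]$ — legitimate because $g$ is continuous and piecewise $C^1$, hence absolutely continuous — yields the displayed bound.

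To finish, I would assume without loss of generality $\lambda\leq\hat\lambda$ (the other case is symmetric). Since $g$ is decreasing, $g(\lambda)\geq g(\hat\lambda)$, and the unit-slope bound gives $\hat\lambda-\lambda\leq g(\lambda)-g(\hat\lambda)=\gamma\beta(\kappa-\hat\kappa)$, whence
\[
\lvert \lambda-\hat\lambda\rvert \;\leq\; \gamma\beta\,\lvert\kappa-\hat\kappa\rvert \;\leq\; \gamma\beta\,\epsilon,
\]
as claimed. The main obstacle is the unit-slope estimate $\lvert g'\rvert\geq 1$; everything else is bookkeeping. A minor technical point is that $g$ fails to be differentiable at the finitely many thresholds $x=b_i$, and that a nonempty active set must be assumed (guaranteed here by $\alpha+\gamma\beta\kappa>0$, so that $\lambda<b_1$); both are handled cleanly by the integral form of the argument rather than a pointwise mean value theorem.
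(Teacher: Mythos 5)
Your proof is correct and follows essentially the same route as the paper's: both work with the function $f(x)=\bigl(\sum_{b_i\geq x}\lvert b_i-x\rvert^p\bigr)^{1/p}$, show its derivative is bounded above by $-1$ via the norm monotonicity $\lVert w\rVert_{p-1}\geq\lVert w\rVert_p$, and conclude that the inverse map is $1$-Lipschitz so the root perturbation is at most $\gamma\beta\lvert\kappa-\hat\kappa\rvert$. Your treatment of the kink points $x=b_i$ via the integral/absolute-continuity form is a slightly more careful version of the paper's pointwise derivative argument, but it is the same proof.
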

\begin{proof}
Let the function $f:[b_A,b_1]\to\mathbb{R}$ be defined as  
\[f(x) := (\sum_{b_i\geq x}|b_i - x|^{p})^{\frac{1}{p}}.\]
We will show that derivative of $f$ is bounded, implying its inverse is bounded and hence Lipschitz, that will prove the claim. Let proceed 
\begin{equation}\begin{aligned}
    \frac{df(x)}{dx} &= -(\sum_{b_i\geq x}|b_i - x|^{p})^{\frac{1}{p} -1}\sum_{b_i\geq x}|b_i - x|^{p-1}\\
     &= - \frac{\sum_{b_i\geq x}|b_i - x|^{p-1}}{(\sum_{b_i\geq x}|b_i - x|^{p})^{\frac{p-1}{p}}}\\
     &= - \Bigm[\frac{(\sum_{b_i\geq x}|b_i - x|^{p-1})^{\frac{1}{p-1}}}{(\sum_{b_i\geq x}|b_i - x|^{p})^{\frac{1}{p}}}\Bigm]^{p-1}\\
     &\leq -1.
\end{aligned}
\end{equation}
The inequality follows from the following  relation between $L_p$ norm, 
\[\lVert x\rVert_a \geq \lVert x\rVert_b ,\qquad \forall 0\leq  a\leq b.   \]
It is easy to see that the function $f$ is strictly monotone in the range $b_A ,b_1]$, so its inverse is well defined in the same range. Then derivative of the inverse of the function $f$ is bounded as
\[ 0 \geq \frac{d}{dx}f^-(x)\geq -1.\]
Now, observe that $\lambda = f^-(\alpha +\gamma\beta\kappa)$ and $\hat{\lambda} = f^-(\alpha +\gamma\beta\hat{\kappa})$, then by Lipschitzcity, we have 
\[|\lambda - \hat{\lambda}| = |f^-(\alpha +\gamma\beta\kappa)- f^-(\alpha +\gamma\beta\hat{\kappa})| \leq \gamma\beta|-\kappa -\hat{\kappa})| \leq \gamma\beta\epsilon. \]
\end{proof}

\begin{lemma}For $\mathcal{U}^{\mathtt{s}}_p$, the total iteration cost is $O\Bigm(\log(\frac{1}{\epsilon})\bigm( S^2A+ SA\log(\frac{A}{\epsilon}) \bigm)\Bigm)$ to get $\epsilon$ close to the optimal robust value function.
\end{lemma}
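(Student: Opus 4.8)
The plan is to mirror the proof of Lemma~\ref{rs:saLpCompl}, but now tracking \emph{two} sources of approximation error rather than one: the estimation of the $q$-variance $\kappa_q(v_n)$, and the binary search used to evaluate the robust Bellman operator via \eqref{eq:rve:val}. First I would set up the approximate value iteration $v_{n+1}(s) := \hat\lambda(s)$, where $\hat\lambda(s)$ is the binary-search output of \eqref{eq:rve:val} run with the estimate $\hat\kappa_q(v_n)$ in place of $\kappa_q(v_n)$, computed to tolerance $\epsilon_2$ in $x$. The goal is to bound the per-step error $\lVert v_{n+1} - \mathcal{T}^*_{\mathcal{U}^{\mathtt{s}}_p}v_n\rVert_\infty$ and then invoke Proposition~\ref{rs:appVI}.

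For the error bound I would split through an intermediate quantity $\tilde\lambda(s)$, namely the \emph{exact} solution of \eqref{eq:rve:val} when the \emph{approximate} $\hat\kappa_q(v_n)$ is used. The gap $|\hat\lambda(s) - \tilde\lambda(s)|$ is the pure binary-search error, at most $\epsilon_2$. The gap $|\tilde\lambda(s) - (\mathcal{T}^*_{\mathcal{U}^{\mathtt{s}}_p}v_n)(s)|$ is controlled by Proposition~\ref{rs:sLpkappaErr}: since the solution of \eqref{eq:rve:val} is Lipschitz in $\kappa_q$ (the proof there establishes $f'(x)\le -1$, so $f^{-1}$ is $1$-Lipschitz), this gap is at most $\gamma\beta_{\max}\,\epsilon_1$, where $\beta_{\max}=\max_s\beta_s$ and $|\hat\kappa_q(v_n)-\kappa_q(v_n)|\le\epsilon_1$. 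Combining by the triangle inequality yields a per-step error $\epsilon' := \epsilon_2 + \gamma\beta_{\max}\epsilon_1$, so Proposition~\ref{rs:appVI} gives linear-rate convergence to a ball of radius $\epsilon'/(1-\gamma)$.

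Choosing $\epsilon_1$ and $\epsilon_2$ each to be a fixed constant multiple of $(1-\gamma)\epsilon$ (exactly as in Lemma~\ref{rs:saLpCompl}) makes the asymptotic error at most $\epsilon$ and pins the iteration count at $n = O(\log(1/\epsilon))$. It then remains to tally the cost of one full sweep. The $q$-variance is computed once per sweep by binary search at cost $O(S\log(S/\epsilon_1)) = O(S\log(S/\epsilon))$ (Section~\ref{app:BSkappa}). For each of the $S$ states I would compute the $A$ values $Q(s,a)$ at sweep cost $O(SA)$, sort them at cost $O(A\log A)$, and run the binary search of \eqref{eq:rve:val} over the interval $[\max_a Q(s,a)-\sigma,\max_a Q(s,a)]$, whose width is $O(1)$ under bounded rewards and values; this needs $O(\log(1/\epsilon_2))$ steps, each costing $O(A)$ to evaluate the left-hand side, for $O(A\log(1/\epsilon_2))$. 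Summing the sort and search gives per-state action cost $O(A\log(A/\epsilon))$, so the per-sweep cost is $O(S^2A + SA\log(A/\epsilon))$, the $\kappa_q$ term being absorbed. Multiplying by the $O(\log(1/\epsilon))$ sweeps gives the claimed $O\!\bigm(\log(1/\epsilon)(S^2A + SA\log(A/\epsilon))\bigm)$.

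The main obstacle is the clean decomposition of the per-step error into the binary-search part and the $\kappa_q$-estimation part, and in particular making Proposition~\ref{rs:sLpkappaErr} carry the latter: its derivative bound $f'\le -1$ is precisely what certifies both that the binary search on \eqref{eq:rve:val} converges linearly and that the $\kappa_q$ error is not amplified when passed through $f^{-1}$. Everything else is routine bookkeeping of sort, sweep, and search costs, entirely analogous to the $(\mathtt{sa})$-rectangular case.
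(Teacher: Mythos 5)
Your proposal is correct and follows essentially the same route as the paper's proof: the same two-source error decomposition through the exact solution of \eqref{eq:rve:val} under the approximate $\hat\kappa_q$ (controlled by Proposition \ref{rs:sLpkappaErr}) plus the pure binary-search error, followed by Proposition \ref{rs:appVI} and the identical per-sweep cost tally. The only cosmetic difference is that you make the role of the intermediate quantity $\tilde\lambda$ and the $O(1)$ width of the search interval slightly more explicit than the paper does.
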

\begin{proof}
We calculate $\kappa_q(v)$ in \eqref{alg:SLP:eq:kappa} with $\epsilon_1 = \frac{(1-\gamma)\epsilon}{6}$ tolerance that takes $O(S\log(\frac{S}{\epsilon_1}))$ iterations using binary search (see section \ref{app:BSkappa}). Then for every state, we sort the $Q$ values (as in \eqref{alg:SLP:eq:Qsort}) that costs $O(A\log(A))$ iterations. In each state, to update value, we do again binary search with approximate $\kappa_q(v)$ upto $\epsilon_2:=\frac{(1-\gamma)\epsilon}{6}$ tolerance, that takes $O(\log(\frac{1}{\epsilon_2}))$ search iterations and each iteration cost $O(A)$, altogether it costs $O(A\log(\frac{1}{\epsilon_2}))$ iterations. Sorting of actions and binary search adds upto $O(A\log(\frac{A}{\epsilon}))$ iterations (action cost).
So we have (doubly) approximated value iteration  as following, 
\begin{equation}
    \lvert v_{n+1}(s)  -\hat{\lambda}\rvert \leq \epsilon_1
\end{equation}
where
\[(\alpha_s +\gamma\beta_{s}\hat{\kappa}_q(v_n))^p = \sum_{Q_n(s,a)\geq \hat{\lambda}}(Q_n(s,a) - \hat{\lambda})^{p}\]
and  \[Q_n(s,a) =R_0(s,a) + \gamma\sum_{s'}P_0(s'|s,a)v_n(s'), \qquad \lvert \hat{\kappa}_q(v_n) -\kappa_q(v_n)\rvert \leq \epsilon_1.\] 
And we do this approximate value iteration for $ n =\log(\frac{3\lVert v_0 -v^*\rVert_\infty}{\epsilon})$. Now, we do error analysis. By accumulating error, we have
\begin{equation}\begin{aligned}
    \lvert v_{n+1}(s) -(\mathcal{T}^*_{\mathcal{U}^{\mathtt{s}}_p}v_n)(s)\rvert \leq& \lvert v_{n+1}(s) -\hat{\lambda}\rvert + \lvert \hat{\lambda}-(\mathcal{T}^*_{\mathcal{U}^{\mathtt{s}}_p}v_n)(s)\rvert\\
    \leq& \epsilon_1 +\lvert\hat{\lambda}-(\mathcal{T}^*_{\mathcal{U}^{\mathtt{s}}_p}v_n)(s)\rvert , \qquad \text{(by definition)}\\
    \leq &\epsilon_1 + \gamma\beta_{\max}\epsilon_1, \qquad \text{(from proposition \ref{rs:sLpkappaErr})}\\
    \leq& 2\epsilon_1.
\end{aligned}\end{equation}

where $\beta_{max}:=\max_{s}\beta_s, \gamma \leq 1$.

Now, we do approximate value iteration, and from proposition \ref{rs:appVI}, we get 
\begin{equation}\begin{aligned}
    \lVert v_{n} -v^*_{\mathcal{U}^s_p}\rVert
    \leq& \frac{2\epsilon_1}{1-\gamma} + \gamma^n[\frac{1}{1-\gamma }2\epsilon_1 + \lVert v_0 -v^*_{\mathcal{U}^s_p} \rVert_\infty]
\end{aligned}\end{equation}
Now, putting the value of $n$, we have
\begin{equation}\begin{aligned}
    \lVert v_{n} -v^*_{\mathcal{U}^s_p}\rVert_\infty= &\gamma^n[\frac{2\epsilon_1}{1-\gamma } + \lVert v_0 -v^*_{\mathcal{U}^s_p}\rVert_\infty] +\frac{2\epsilon_1}{1-\gamma } \\
    \leq &\gamma^n[\frac{\epsilon}{3} + \lVert v_0 -v^*_{\mathcal{U}^s_p}\rVert_\infty] +\frac{\epsilon}{3} \\
    \leq &\gamma^n\frac{\epsilon}{3} + \frac{\epsilon}{3} +\frac{\epsilon}{3} 
    \leq \epsilon.
\end{aligned}\end{equation}
To summarize, we do $O(\log(\frac{1}{\epsilon}))$ full value iterations. Cost of evaluating reward penalty is $O(S\log(\frac{S}{\epsilon}))$. For each state: evaluation of $Q$-value from value function requires $O(SA)$ iterations, sorting the actions according $Q$-values requires $O(A\log(A))$ iterations, and binary search for evaluation of value requires $O(A\log(1/\epsilon)$. So the complexity is 
\[O(\text{(total iterations)(reward cost + S(Q-value + sorting + binary search for value )}))\]
\[= O\Bigm(\log(\frac{1}{\epsilon})\bigm(S\log(\frac{S}{\epsilon}) + S(SA +A\log(A)+ A\log(\frac{1}{\epsilon})) \bigm)\Bigm)\]
\[= O\Bigm(\log(\frac{1}{\epsilon})\bigm(S\log(\frac{1}{\epsilon}) + 
S\log(S) + S^2A+SA\log(A) + SA\log(\frac{1}{\epsilon}) \bigm)\Bigm)\]
\[= O\Bigm(\log(\frac{1}{\epsilon})\bigm( S^2A+SA\log(A) + SA\log(\frac{1}{\epsilon}) \bigm)\Bigm)\]

\[= O\Bigm(\log(\frac{1}{\epsilon})\bigm( S^2A+ SA\log(\frac{A}{\epsilon}) \bigm)\Bigm)\]
\end{proof}

\end{document}